\theoremstyle{plain}\newtheorem{theorem}{Theorem}[section]
\theoremstyle{plain}\newtheorem{lemma}[theorem]{Lemma}
\theoremstyle{plain}\newtheorem{proposition}[theorem]{Proposition}
\theoremstyle{definition}\newtheorem{definition}[theorem]{Definition}
\theoremstyle{plain}\newtheorem{corollary}[theorem]{Corollary}
\newcommand{\R}{\ensuremath{\mathbb{R}}}
\newcommand{\bbN}{\ensuremath{\mathbb{N}}}
\newcommand{\bbR}{\ensuremath{\mathbb{R}}}
\newcommand{\mcC}{\ensuremath{\mathcal{C}}}
\newcommand{\mcJ}{\ensuremath{\mathcal{J}}}
\newcommand{\mcH}{\ensuremath{\mathcal{H}}}
\newcommand{\mcN}{\ensuremath{\mathcal{N}}}
\newcommand{\eps}{\ensuremath{\varepsilon}}
\newcommand{\sigmoid}{\ensuremath{\sigma_{\textrm{sig}}}}
\DeclareMathOperator*{\Var}{Var}
\DeclareMathOperator*{\Cov}{Cov}
\DeclareMathOperator*{\Prob}{P}
\DeclareMathOperator*{\Exp}{\mathbb{E}}
\DeclareMathOperator{\SDA}{SDA}
\DeclareMathOperator{\VSTAT}{VSTAT}
\DeclareMathOperator{\eSDA}{\eps-SDA}
\DeclareMathOperator*{\E}{\mathbb{E}}
\def\vol{{\sf vol}}
\DeclareMathOperator{\STAT}{STAT}
\DeclareMathOperator{\oneSTAT}{1-STAT}
\title{Gradient Descent for One-Hidden-Layer Neural Networks: Polynomial Convergence and SQ Lower Bounds\footnote{Accepted for presentation at the Conference on Learning Theory (COLT) 2019.}}
\author{Santosh Vempala \\
    Georgia Institute of Technology \\
    \texttt{vempala@gatech.edu}
\and
    John Wilmes  \\
    Brandeis University \\
    \texttt{wilmes@brandeis.edu}
}
\begin{document}

\maketitle

\begin{abstract}
We study the complexity of training neural network models with one hidden nonlinear activation layer and an output weighted sum layer.
We analyze Gradient Descent applied to learning a bounded target function on
    $n$ real-valued inputs. 
We give an agnostic learning guarantee for GD: starting from a
    randomly initialized network, it converges in mean squared loss to the minimum error (in $2$-norm) of the best approximation of the target function using a polynomial of degree at most $k$. Moreover, for any $k$, the size of the network and number of iterations needed are both bounded by $n^{O(k)}\log(1/\eps)$. The core of our analysis is the following existence theorem, which is of independent interest: 
for any $\eps > 0$, any bounded function that has a degree $k$ polynomial approximation with error
    $\eps_0$ (in $2$-norm), can be approximated to within error $\eps_0 + \eps$ as a linear
    combination of $n^{O(k)}\cdot \mbox{poly}(1/\eps)$ {\em randomly chosen} gates from any class
    of gates whose corresponding activation function has nonzero coefficients in its harmonic
    expansion for degrees up to $k$. In particular, this applies to training networks  of unbiased
    sigmoids and ReLUs.  We also rigorously explain the empirical finding that gradient descent
    discovers lower frequency Fourier components before higher frequency components. 
 
We complement this result with nearly matching lower bounds in the Statistical
    Query model. GD fits well in the SQ framework since each training
    step is determined by an expectation over the input distribution. We show
    that any SQ algorithm that achieves significant improvement over a constant
    function with queries of tolerance some inverse polynomial in the input
    dimensionality $n$ must use $n^{\Omega(k)}$ queries even when the target
    functions are restricted to a set of $n^{O(k)}$ degree-$k$ polynomials, and the input distribution is uniform
    over the unit sphere; for this class the information-theoretic lower bound is only $\Theta(k \log n)$. 

Our approach for both parts is based on spherical harmonics. We
    view gradient descent as an operator on the space of functions, and study
    its dynamics. An essential tool is the
    Funk-Hecke theorem, which explains the eigenfunctions of this operator in
    the case of the mean squared loss.
\end{abstract}

\section{Introduction}

It is well known that artificial neural networks (NNs) can approximate any real-valued function.
Fundamental results \cite{hornik1989multilayer, cybenko1989approximation, barron1993universal} show
that a NN with a {\em single} hidden layer provides a universal representation up
to arbitrary approximation, with the number of hidden units needed depending on the function being
approximated and the desired accuracy.

Besides their generality, an important feature of NNs is the ease of training
them --- gradient descent (GD) is used to minimize the error of the network,
measured by a loss function of the current weights. This seems to work across a
range of labeled data sets. 
Yet despite its tremendous success, there is no satisfactory
explanation for the efficiency or effectiveness of this generic training algorithm\footnote{Indeed, one might consider this a miraculous feat of engineering and even ask, is there anything to explain rigorously? We are not entirely comfortable with this view and optimistic of some life beyond convexity.}.

The difficulty is that even for highly restricted classes of NNs,
natural loss functions such as the
mean squared loss have a highly non-convex landscape with many nonoptimal local
minima. However, when data is generated from a model with random weights, 
GD (the stochastic version with a small batch size)
seems to consistently learn a network with error close to zero. This raises the
prospect of a provable guarantee, but there are two complicating experimental
observations. First, the randomness of the initialization appears essential
(standard in practice) as in experiments it is possible to remain stuck at higher error.
Second, we observe smaller error (and it decreases more quickly) when the
model size used for training is made larger; in particular, for the realizable
case (when the data is itself labeled by a NN), we train using many more units than the original.
This aspect is also commonly encountered in the training of large NNs on real data --- even with
huge amounts of data, the size of the model used can be larger. 


In this paper we give nearly matching upper and lower bounds that help explain
the phenomena seen in practice when training NNs. The upper bounds
are for GD and the lower bounds are for all statistical query algorithms. We
summarize them here, and present them formally in the next section.

Our algorithmic result is an agnostic upper bound on the approximation error and time
and sample complexity of GD with the standard mean squared loss
function. Despite training only the output layer weights, our novel proof techniques
avoid using any convexity in the problem. Since our analysis does not
rely on reaching a global minimum, there is reason to hope the techniques will
extend to nonconvex settings where we can in general expect only to find a
local minimum. Prior results along this line were either for more complicated
algorithms or more restricted settings; the closest is the work of Andoni et
al.  \cite{apvz14} where they assume the target function is a bounded degree
polynomial. A detailed comparison of results is given in
Section~\ref{sec:related}. As a corollary of our convergence analaysis, we obtain a rigorous proof of the ``spectral bias'' of gradient descent observed experimentally in~\cite{spectral-bias}.

The upper bound shows that to get close to the best possible degree $k$
polynomial approximation of the data, it suffices to run GD on a
NN with $n^{O(k)}$ units, using the same number of samples. It suffices to train the
output layer weights alone. This is an agnostic guarantee. We prove a matching lower bound for solving
this polynomial learning problem over the uniform distribution on the unit sphere, for {\em any}
statistical query algorithm that uses tolerance inversely proportional to $n^{\Omega(k)}$.
Thus, for this general agnostic learning problem, GD is as good as it gets.




\subsection{Results}\label{sec:results}

We consider NNs with on inputs from the sphere $S^{n-1}\subseteq \bbR^n$, a single hidden layer
with $m$ units having some nonlinear activation $\phi: \bbR \to \bbR$, and a single linear output
unit. All units are without additive bias terms. We will consider inputs drawn from the uniform
distribution $D$ on $S^{n-1}$.

We denote by $W$ the set of units in the hidden layer, and abuse notation to write $u\in W$ for
both the hidden layer unit and its corresponding weight vector in $\bbR^n$. The output-layer weight
corresponding to $u \in W$ will be denoted $b_u \in \bbR$. Hence, the NN computes a function of the
form
\begin{equation}\label{eq:nn}
f(x) = \sum_{u \in W} b_u \phi(u \cdot x)\,.
\end{equation}
We initialize our NNs by choosing the vectors $u \in W$ independently from $D$, and setting each
$b_u$ to $0$.

For two functions $f,g:S^{n-1} \rightarrow \R$, the mean squared loss with respect to $D$ is
$\Exp_{x \sim S^{n-1}}((f(x)-g(x))^2)$.  Given data $(x,y)$ with $x \in \R^n$, $y\in \R$, we
analyze GD to minimize the mean squared loss of the current model with respect to the given data.
The specific GD procedure we consider is as follows: in each iteration, the gradient of the
loss function is computed using a finite sample of examples, with the entire sample reused for each
iteration. The output-layer weights $b_u$ are then modified by adding a fixed multiple of the
estimated gradient, and the hidden-layer weights $u \in W$ are kept fixed.

\paragraph{Convergence guarantees.}
Our first theorem is for training networks of sigmoid gates. The same statement holds for ReLU activation units and even functions $g$.

\begin{theorem}\label{thm:sigmoid}
    Let $\eps_0 > 0$, $k \in \bbN$, and
    $g : S^{n-1} \to \bbR$ an odd bounded function such that $\|g - g^{(\le
    k)}\|_2 \le \eps_0$, where $g^{(\le k)}$ denotes the best polynomial of degree at most $k$ 
    approximation of $g$ in $L^2$ norm on $S^{n-1}$. Then for any $\eps > 0$,  for some $m =
    n^{O(k)}\mbox{poly}(\|g\|_2/\eps)$ the following holds: 
    A randomly initialized single-hidden-layer NN with $m$ sigmoid gates in the hidden layer and a
    linear output layer, with high probability, will have mean squared loss of at most $\eps_0 +
    \eps$ after at most $n^{O(k)}\log(\|g\|_2/\eps)$ iterations of GD applied to the output layer
    weights, re-using a set of $m$ samples in each iteration.
\end{theorem}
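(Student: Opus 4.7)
The plan is to combine the existence theorem from the abstract with a Funk--Hecke-based spectral analysis of the GD dynamics in function space. First, since $g$ is odd and bounded with $\|g - g^{(\le k)}\|_2 \le \eps_0$, and $\sigmoid$ has nonzero odd Gegenbauer coefficients up through degree $k$, the existence theorem guarantees that with high probability over the random choice of the hidden units $W$ there exist output weights $b_u^*$ such that $f^*(x) := \sum_{u\in W} b_u^* \sigmoid(u\cdot x)$ satisfies $\|f^* - g\|_2 \le \eps_0 + \eps/2$, provided $m = n^{O(k)}\poly(\|g\|_2/\eps)$. Let $V \subseteq L^2(S^{n-1}, D)$ be the span of the random features $\phi_u(x) := \sigmoid(u\cdot x)$ and let $g^V$ be the $L^2$-projection of $g$ onto $V$; then $\|g - g^V\|_2 \le \eps_0 + \eps/2$. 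Since every GD iterate $f^{(t)}$ lies in $V$, it suffices to show $\|f^{(t)} - g^V\|_2 \le \eps/2$ after the prescribed number of iterations.

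Because only the output weights move, the mean squared loss is a quadratic in the $b_u$, and in function space the population GD iteration takes the clean form
\begin{equation*}
f^{(t+1)} - g^V = (I - \eta K_V)(f^{(t)} - g^V),
\end{equation*}
where $K_V : V \to V$ is the self-adjoint operator $K_V h := \sum_{u\in W}\langle h, \phi_u\rangle \phi_u$. Its expected version $K_\infty$ has kernel $m\,\E_{u\sim D}[\phi_u(x)\phi_u(x')]$, which depends only on $x\cdot x'$ by rotational invariance. The Funk--Hecke theorem then identifies the space of degree-$j$ spherical harmonics as an eigenspace of $K_\infty$ with eigenvalue proportional to $c_j^2/n^{O(j)}$, where $c_j$ is the $j$-th Gegenbauer coefficient of $\sigmoid$. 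For sigmoid these are nonzero at every odd $j$, so on the subspace of odd harmonics of degree at most $k$ the spectrum of $K_\infty$ is bounded below by $n^{-O(k)}$.

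Choosing the step size $\eta = n^{-O(k)}$ so that $\|I - \eta K_V\|_{\mathrm{op}} \le 1$, the high-degree components of $f^{(t)} - g^V$ are not amplified, while the relevant low-degree odd components contract geometrically at rate $1 - \eta\lambda_{\min} \le 1 - n^{-O(k)}$. After $T = n^{O(k)}\log(\|g\|_2/\eps)$ iterations this yields $\|f^{(T)} - g^V\|_2 \le \eps/2$, which combined with $\|g - g^V\|_2 \le \eps_0 + \eps/2$ gives total error at most $\eps_0 + \eps$, as required. A corollary of this decomposition---the fact that each harmonic component is damped at a rate determined by $c_j^2/n^{O(j)}$, with low frequencies decaying fastest---is the promised rigorous version of the ``spectral bias'' observation.

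The main obstacle is transferring the spectral lower bound from the population operator $K_\infty$ to the empirical operator $K_V$: a naive union bound would fail because $V$ has dimension up to $m = n^{O(k)}\poly(1/\eps)$, but the Funk--Hecke diagonalization reduces the problem to concentration on the fixed degree-$\le k$ harmonic subspace, which has dimension only $n^{O(k)}$, allowing a matrix Bernstein argument with $m = n^{O(k)}$ features. A secondary issue is that GD is actually executed with empirical gradients from a single fixed sample of $m$ examples rather than the population gradients above; uniform convergence over the bounded function class of output-weighted sigmoid combinations, together with an a priori bound on $\|f^{(t)}\|_2$ maintained throughout training, absorbs the sampling error into the $\poly(\|g\|_2/\eps)$ factor in $m$.
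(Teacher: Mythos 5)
Your general strategy---viewing gradient descent as an operator on function space, diagonalizing the population operator via Funk--Hecke, and decomposing the residual into harmonic degrees---is exactly the paper's approach. But there is a genuine gap in the middle of the argument. You project $g$ onto $V = \mathrm{span}\{\phi_u : u \in W\}$ to obtain $g^V$ with $\|g - g^V\|_2 \le \eps_0 + \eps/2$, and then claim $\|f^{(T)} - g^V\|_2 \le \eps/2$ after $T = n^{O(k)}\log(\|g\|_2/\eps)$ iterations. This contradicts your own observation in the same paragraph: the high-degree harmonic components of $f^{(t)} - g^V$ are only ``not amplified'' by $I - \eta K_V$---they are left essentially unchanged, since the Funk--Hecke eigenvalues $\alpha_{n,j}(\sigmoid)^2$ for $j$ well above $k$ are far too small to contract over only $n^{O(k)}$ rounds. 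The initial residual $f^{(0)} - g^V = -g^V$ has a degree-$>k$ harmonic component of size up to $\|g^V - g^{(\le k)}\|_2 \le 2\eps_0 + \eps/2$, which is an $\eps_0$-sized quantity, not an $\eps$-sized one, and it persists for the entire run. So $\|f^{(T)} - g^V\|_2$ cannot be made $\le \eps/2$ unless $\eps_0$ is itself small. The smallest eigenvalue of $K_V$ on $V$ is \emph{not} $n^{-O(k)}$: $V$ contains directions with exponentially small Gram eigenvalues coming from high-degree harmonic content of the features, so any argument that requires a spectral gap on all of $V$ will fail.

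The fix---which dissolves the problem rather than repairing the $g^V$ decomposition---is to track the harmonic components of the raw residual $H_t = g - f_t$ directly, as the paper does. One shows $\|H_t^{(j)}\|_2$ contracts geometrically at rate $1 - \alpha_{n,j}(\sigmoid)^4$ for each $j$, and then the total loss splits as $\|H_T\|_2^2 = \|H_T^{(S)}\|_2^2 + \|H_T^{(\overline{S})}\|_2^2$: the first term goes to $\eps$, and the second is bounded by its initial value $\|g^{(\overline{S})}\|_2 \le \eps_0$ plus accumulated error, because those components are not excited (nor reduced) by the dynamics. Once you do this, you never need to introduce $g^V$ or invoke the existence theorem---indeed you have your dependency backwards: the paper's representation/existence theorem is obtained \emph{as a corollary} of the direct GD analysis, precisely because the analysis never passes through a global optimum or a best-in-class hypothesis. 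Your proposal also treats the empirical-to-population transfer (``uniform convergence... absorbs the sampling error'') as a black box, but over $T = n^{O(k)}$ iterations the per-step discretization error compounds; the paper handles this by inductively controlling $\max_u |a_i(u)|$ and $\max_{x}|H_i(x)|$ across iterations (its Lemmas on $T_X, T_W$ versus $\mcJ$), which is a nontrivial piece of the argument that a one-shot uniform convergence bound does not replace.
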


Next we state a more general theorem. This will apply to a large class of
activation functions. The main property we need of the activation function is
that it should not be a low-degree polynomial. We first introduce additional notation. (See
Section~\ref{sec:harmonic} for related definitions and background.)
We denote by $\mcH_{n,k}$ the set of \emph{spherical harmonics} of degree $k$ on the sphere
$S^{n-1}$.
\begin{definition}
    Given an $L^2$ function $f$ on $S^{n-1}$, we denote by $f^{(k)}$ the
    projection of $f$ to $\mcH_{n,k}$, so $f = \sum_{k=0}^{\infty} f^{(k)}$. We also write
    $f^{(\le k)} = \sum_{i=0}^k f^{(i)}$, and for $S \subseteq \bbN$, we write
    $f^{(S)} = \sum_{i \in S} f^{(i)}$.
For $S \subset \bbN$ and $\alpha >0$, an $(n, S,\alpha)$-activation is a function
    $\phi: \R \to \R$ with the property that for any $u \in S^{n-1}$, the map $f(x) = \phi(u\cdot
x)$ has a harmonic polynomial expansion with $\|f^{(k)}\| \ge \alpha$ for all $k \in S$.
\end{definition}
Since the dimension $n$ is uniform throughout this paper, we will abbreviate our notation and refer
to $(S, \alpha)$-activations. The set $S$ will not generally depend on $n$, but the quantity
$\alpha = \alpha(n)$ generally will (see, e.g., Lemma~\ref{lem:taylor}).

For example, the commonly used sigmoid gate $\sigmoid(x) = 1/(1+e^{-x})$ is an
$(S,\alpha)$-activation function for $S$ the odd integers less than $k$
and $\alpha = n^{-O(k)}$. Similarly, ReLU gates are $(S,\alpha)$-activation
functions for subsets $S$ of the even integers.

\begin{theorem}\label{thm:general}
    Let $\eps_0 > 0$ and
    $g : S^{n-1} \to \bbR$ a bounded function such that $\|g - g^{(S)}\|_2
    \le \eps_0$.
    Then for any $\eps > 0$, and any $(S,\alpha)$-activation function $\phi$
    with $\|\phi\|_{\infty} \le 1$, for some $m = \mbox{poly}(1/\alpha, \|g\|_2/\eps)$
    the following holds: A randomly initialized single-hidden-layer NN with $m$ $\phi$-gates in the
    hidden layer and a linear output layer, with high probability, will have mean squared loss of
    at most $\eps_0 + \eps$ after at most $\mbox{poly}(1/\alpha)\log(\|g\|_2/\eps)$
    iterations of GD applied to the output layer weights, re-using a set of $m$ samples in each
    iteration.
\end{theorem}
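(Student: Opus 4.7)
The strategy has three components: a representation theorem establishing that the span of random features contains $g^{(S)}$ up to small $L^2$ error, an analysis of the idealized gradient descent dynamics via spherical harmonics, and a perturbation argument to control the gap between the idealized dynamics and the actual algorithm that uses finitely many units and samples.

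First I would establish the representation result. By Funk--Hecke, for a fixed $u \in S^{n-1}$ the function $x \mapsto \phi(u \cdot x)$ has a harmonic expansion whose degree-$k$ component is proportional to a zonal harmonic with pole $u$ and coefficient at least $\alpha$ for $k \in S$. Dually, for any $h \in \mcH_{n,k}$ with $k \in S$, one can write $h(x) = \E_{u \sim D}[c_h(u)\,\phi(u \cdot x)]$ with $\|c_h\|_2 = O(\|h\|_2/\alpha)$. Summing this over $k \in S$ and decomposing $g^{(S)}$ into its harmonic pieces yields a density $c_g$ with $\|c_g\|_2 = O(\|g\|_2/\alpha)$ such that $g^{(S)}(x) = \E_u[c_g(u)\phi(u\cdot x)]$. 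A Monte Carlo / second-moment argument then shows that for $m = \poly(1/\alpha, \|g\|_2/\eps)$ independent draws $u_1,\dots,u_m \sim D$, there exist coefficients $b_1,\dots,b_m$ (namely $b_i = c_g(u_i)/m$) such that $\|g^{(S)} - \sum_i b_i \phi(u_i\cdot x)\|_2 \le \eps$ with high probability.

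Second, I would analyze gradient descent in function space. For the mean squared loss, the gradient with respect to $b_u$ equals $-2\,\E_x[(g(x)-f_t(x))\phi(u\cdot x)]$, so in the population limit the functional update is $f_{t+1} - f_t = \eta T(g-f_t)$, where $T$ is the integral operator with kernel $K(x,y) = \E_u[\phi(u\cdot x)\phi(u\cdot y)]$. Because $K$ is zonal, Funk--Hecke identifies its eigenfunctions as the spherical harmonics, with eigenvalue $\lambda_k \ge \alpha^2$ on $\mcH_{n,k}$ for $k \in S$. Thus the residual $r_t = g - f_t$ satisfies $r_t^{(k)} = (1-\eta \lambda_k)^t g^{(k)}$ coordinatewise, so with $\eta$ chosen as a small constant over $\|K\|_\infty$, the harmonic components of the residual in $S$ contract geometrically, and after $t = O(\alpha^{-2}\log(\|g\|_2/\eps))$ steps we have $\|r_t^{(S)}\|_2 \le \eps$. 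Since $\|g - g^{(S)}\|_2 \le \eps_0$, the total loss is at most $\eps_0 + \eps$.

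Third, I would transfer this to the actual algorithm. With $m$ random units and $m$ samples, the empirical gradient operator $\hat T_t$ differs from $T$; however, standard matrix concentration plus the bound on $\|\phi\|_\infty$ shows that on any single residual direction, $\|(\hat T_t - T)r_t\|_2$ is at most $\eps/T$ with high probability when $m = \poly(1/\alpha, \|g\|_2/\eps)$, where $T$ is the total number of iterations. A one-step coupling then ensures the actual trajectory stays within $O(\eps)$ of the idealized one throughout, and the geometric contraction on $S$-components absorbs the per-step perturbation.

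The main obstacle will be the third step: controlling the finite-sample perturbations without using convexity. Since the residual $r_t$ depends adaptively on the data and random units, a naive union bound over all possible residuals is hopeless. The right move is to compare the actual trajectory to the fully-known population trajectory (whose behavior we pinned down in step two) and argue inductively that the gap remains small; this uses the representation result from step one to bound $\|c_g\|_2$ and thereby bound both the scale of weights encountered and the variance of each empirical gradient, so that the total accumulated perturbation over $\poly(1/\alpha)$ steps stays below $\eps$.
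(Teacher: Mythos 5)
Your overall strategy matches the paper's at a high level: view GD as an operator on function space, use Funk--Hecke to diagonalize, get geometric contraction of the residual on the $S$-harmonic subspace, and then do a perturbation argument for the finite-sample/finite-width case. Steps 1 and 2 are sound; in particular, your dual Funk--Hecke representation $h = \alpha_{n,k}^{-1}\mcJ_\phi(h)$ with $\|c_h\|_2 = O(\|h\|_2/\alpha)$ is correct, and your population-dynamics contraction $r_t^{(k)} = (1-\eta\lambda_k)^t g^{(k)}$ with $\lambda_k \ge \alpha^2$ on $S$ is exactly the role played by the paper's Lemma~\ref{lem:S-alpha-estimate}.

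Where you diverge is step 3, and that is precisely where the paper's real work is. You propose to (a) prove a representation theorem up front and use $\|c_g\|_2 = O(\|g\|_2/\alpha)$ to control the scale of the GD trajectory, and (b) couple the actual trajectory to the population trajectory and show the gap stays small. The paper deliberately does \emph{neither}: it says explicitly that no representation theorem is proved directly, and it never compares to an idealized trajectory. Your coupling plan has a concrete gap: in the recursion
$r_{t+1}^{\mathrm{act}} - r_{t+1}^{\mathrm{pop}} = (I-T)(r_t^{\mathrm{act}} - r_t^{\mathrm{pop}}) - (\widehat T - T)\,r_t^{\mathrm{act}}$,
the term $(\widehat T - T)\,r_t^{\mathrm{act}}$ cannot be concentrated directly ($r_t^{\mathrm{act}}$ depends on the same randomness as $\widehat T$), and splitting off $r_t^{\mathrm{pop}}$ leaves you needing a bound on $(\widehat T - T)$ applied to the error term, i.e.\ an \emph{operator-norm} bound on $\widehat T - T$ over all of $L^2$. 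With only $m = \poly(1/\alpha, \|g\|_2/\eps)$ samples, that operator-norm bound is too weak (the number of harmonics of degree $\le \max S$ is $n^{\Theta(k)}$). Knowing $\|c_g\|_2$ is small doesn't automatically tell you the GD trajectory's weights $a_i(u)$ stay small, because GD does not follow the particular representation you exhibited in step 1.

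The key idea you're missing is the paper's: restrict the operator approximation to the $O(m)$-dimensional span $\mathrm{span}\{g\} \cup \{\phi_u : u\in W\} \cup \{\phi_x : x\in X\}$, because both the network $f_i = mT_W(a_i)$ and the residual $H_i$ always live in that span. Concretely, the paper decomposes
$T_W T_X H_i - \mcJ^2 H_i = (T_W - \mcJ)(T_X H_i) + \mcJ(T_X - \mcJ)H_i$,
and each factor can be controlled because $T_X H_i$ is a combination of $\{\phi_x\}_{x\in X}$ and $H_i$ is a combination of $g$ and $\{\phi_u\}_{u\in W}$ (Lemmas~\ref{lem:TXW-props} and~\ref{lem:T2-approx}). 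The scale bookkeeping is then done in terms of $\alpha_i = \max_u|a_i(u)|$ and $\beta_i = \max_x|H_i(x)|$, bounded inductively \emph{from the dynamics} (Lemma~\ref{lem:infty-approx} and the induction in Lemma~\ref{lem:main-technical}), with no reference to a representation theorem. That subspace restriction is what lets one get by with $m = \poly(1/\alpha, \|g\|_2/\eps)$ rather than $n^{\Omega(k)}\cdot\poly$ extra factors; without it, your step 3 as sketched would fail to close.
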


This general theorem has the following corollary in the realizable case, when
data is generated by a one-hidden-layer NN. In this case, the function can be approximated by a
low-degree polynomial. In order to allow for this approximation guarantee, and to
side-step previous statistical query lower bounds~\cite{SVWX17}, we guarantee some degree on
nondegeneracy by focusing on unbiased NNs, i.e., networks without additive bias terms (as in
Eq.~\eqref{eq:nn}).

\begin{corollary}\label{cor:realizable}
    Let $g$ be computed by an unbiased one-hidden-layer NN with sigmoid units in the
    hidden layer and a linear output. Suppose the $\ell_1$ norm of the output
    layer weights is $a$, and each hidden layer weight vector has $\ell_2$ norm
    at most $b$. Then for every $\eps > 0$, for some $m = an^{O(b\log(ab/\eps))}$ the following
    holds: A randomly initialized
    single-hidden-layer NN with $m$ sigmoid units in the hidden layer and a
    linear output layer, with high probability, will have mean squared loss of
    at most $\eps$ after at most $n^{O(b\log(ab/\eps)}$ iterations of GD applied to the output
    layer weights, re-using a set of $m$ samples in each iteration.
\end{corollary}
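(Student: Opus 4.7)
The plan is to reduce Corollary~\ref{cor:realizable} directly to Theorem~\ref{thm:general} by producing a degree-$k$ polynomial approximation of $g$ for $k = O(b\log(ab/\eps))$. Once such an approximation is in hand, the bound $\|g\|_\infty \le a$ (since $\sigmoid \in (0,1)$ and the output weights have $\ell_1$ norm $a$) together with the fact that sigmoid is an $(S,\alpha)$-activation for $S \supseteq \{0\}\cup\{\text{odd } i \le k\}$ with $\alpha = n^{-O(k)}$ gives, after plugging into Theorem~\ref{thm:general}, exactly $m = \poly(1/\alpha,\|g\|_2/\eps) = a\,n^{O(k)}$ hidden units and $\poly(1/\alpha)\log(a/\eps) = n^{O(k)}$ iterations.

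The key technical step is a single-variable approximation lemma: for any $\delta > 0$ and $b \ge 1$ there is a univariate polynomial $q$ of degree $O(b\log(1/\delta))$ with $\sup_{z\in[-b,b]} |\sigmoid(z)-q(z)| \le \delta$. This is classical: $\sigmoid$ extends analytically to the complex strip $|\operatorname{Im} z| < \pi$, so its Chebyshev series on $[-b,b]$ converges geometrically with rate $\rho^{-1}$ for a Bernstein-ellipse parameter $\rho = 1 + \Theta(1/b)$. Setting $\delta = \eps/a$ yields the degree $k = O(b\log(a/\eps))$. For each hidden-unit direction $u$, since $|u \cdot x| \le \|u\| \le b$ for $x \in S^{n-1}$, the polynomial $q_u(u \cdot x)$ is a degree-$k$ spherical polynomial approximating $\sigmoid(u\cdot x)$ to within $\eps/a$ in $L^\infty$ on $S^{n-1}$.

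Taking the matching linear combination $p(x) = \sum_u b_u q_u(u\cdot x)$ and using $\sum_u |b_u| \le a$, we obtain $\|g - p\|_\infty \le \eps$, hence $\|g - g^{(\le k)}\|_2 \le \|g-p\|_2 \le \eps$. Applying Theorem~\ref{thm:general} with $\eps_0 = \eps$ then gives a randomly initialized net of $m = a\,n^{O(k)}$ sigmoid units converging to mean squared loss at most $2\eps$ in $n^{O(k)}$ iterations of GD on the output layer; rescaling $\eps$ by a constant recovers the corollary exactly.

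The main obstacle is the univariate approximation bound with the correct linear dependence on $b$: a Taylor expansion around $0$ converges only on $|z| < \pi$ and therefore does not suffice when $b$ is large, so it is essential to invoke the Chebyshev/Bernstein-ellipse estimate to land on the degree $O(b\log(a/\eps))$ that appears in the exponent. Everything else is bookkeeping on top of Theorem~\ref{thm:general} and the $(S,\alpha)$-activation property of sigmoid.
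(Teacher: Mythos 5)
Your argument matches the paper's proof structure exactly: pointwise-approximate the sigmoid on $[-b,b]$ by a polynomial of degree $O(b\log(ab/\eps))$, sum against the output-layer weights via the $\ell_1$ bound to get $\|g - g^{(\le k)}\|_2 \le \eps$, and then invoke the main convergence theorem through the $(S,\alpha)$-activation estimate for sigmoid. The only difference is cosmetic: the paper simply cites the univariate approximation bound of Livni et al.\ (reproduced as Lemma~\ref{lem:sigmoid-approx}), while you derive it from scratch via a Bernstein-ellipse/Chebyshev estimate; note that carrying the $M/(\rho-1)$ factor through the tail bound gives degree $O(b\log(b/\delta))$ rather than $O(b\log(1/\delta))$, which still lands on $O(b\log(ab/\eps))$ once $\delta=\eps/a$.
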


The use of sigmoid units in Corollary~\ref{cor:realizable} is not essential,
but the bounds on network size and training time will depend on the specific
activation function chosen. 

%

\paragraph{Spectral bias.}

As a consequence of our techniques, we give a proof of the ``spectral bias'' phenomenon
observed experimentally in~\cite{spectral-bias}. The experiments of~\cite{spectral-bias} showed
that neural networks trained via gradient descent learned low Fourier frequencies more quickly than
higher frequencies, which the authors propose as a mechanism to explain generalization performance
of deep learning. We prove that low frequencies are indeed learned more quickly than high
frequencies, where ``low frequencies'' and ``high frequencies'' are understood as low and high
degree harmonic components of a function.

To quantify the relative speed of learning, we introduce additional notation.
\begin{definition}
    Let $H_i$ denote the residual after training a NN via GD for $i$ iterations, and let
    $\Delta_i^{(j)} = H_{i+1}^{(j)} - H_{i}^(j)$ denote the change in the $i$th residual in degree
    $j$.  Suppose $H_i^{(k)}, H_i^{(\ell)} \ne 0$. We say $r = r^{(k,\ell)}_i > 0$ is the
    \emph{rate of progress in degree $k$ relative to degree $\ell$} if
    \[
        \frac{\|\Delta_i^{(k)}\|}{\|\Delta_i^{(\ell)}\|} =
        r\frac{\|H_i^{(k)}\|}{\|H_i^{(\ell)}\|}\,.
    \]
\end{definition}
Thus, if the rate of progress in degree $k$ relative to degree $\ell$ is $r^{(k,\ell)}_i > 1$, the network
learns degree-$k$ information more quickly---and degree-$\ell$ information less quickly---compared
to what would be expected based on the relative sizes of the residual in degrees $k$ and $\ell$.
Conversely, if the rate $r^{(k,\ell)}_i < 1$, the network learns the degree-$k$ information more
slowly.

\begin{theorem}\label{thm:spectral-bias}
    Fix $\eps > 0$ and $\ell > k$.  For some $m = n^{O(\ell)}\mbox{poly}(1/\eps)$, the following
    holds for any odd bounded function $g : S^{n-1} \to \bbR$.
    A randomly initialized single-hidden-layer NN with $m$ sigmoid gates in the hidden layer and
    a linear output layer, trained via GD applied to the output layer weights, 
    re-using a set of $m$ samples in each iteration,
    with high probability will have rate of progress in degree $k$ relative to degree $\ell$ at
    least $r_i^{(k,\ell)} \ge n^{\Omega(\ell - k)}$, assuming the $i$'th residual $H_i$ satisfies 
    $\|H_i^{(\ell)}\|_2, \|H_i^{(k)}\|_2 \ge \eps$.
\end{theorem}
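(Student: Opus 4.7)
The plan is to read Theorem~\ref{thm:spectral-bias} as an immediate by-product of the kernel-operator view of the GD update already used to prove Theorem~\ref{thm:sigmoid}. With hidden weights frozen, the empirical loss is quadratic in the output weights $b_u$, and the partial derivative with respect to $b_u$ is (up to a constant) $\langle \phi(u\cdot\,),H_i\rangle_S$, where $H_i = f_i - g$ is the current residual and $S$ is the fixed sample. Plugging this into the update rule gives
\[
\Delta_i(x) \;=\; f_{i+1}(x)-f_i(x) \;=\; -\eta\sum_{u\in W}\phi(u\cdot x)\,\langle\phi(u\cdot\,),H_i\rangle_S \;=\; -\eta\,\widehat T_{W,S}\,H_i(x),
\]
where $\widehat T_{W,S}$ is the finite-width, finite-sample version of the integral operator
\[
T H(x) \;=\; \Exp_{u\sim S^{n-1}}\!\bigl[\phi(u\cdot x)\,\Exp_{y\sim S^{n-1}}\!\bigl[\phi(u\cdot y)H(y)\bigr]\bigr].
\]

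The core observation is then spectral. The kernel $K(x,y)=\Exp_u[\phi(u\cdot x)\phi(u\cdot y)]$ depends only on $x\cdot y$, so by Funk-Hecke, $T$ is diagonal in the spherical-harmonic basis: $T\,Y = \lambda_k Y$ for all $Y\in\mcH_{n,k}$, where $\lambda_k$ is proportional to the squared $k$th Gegenbauer coefficient of $\phi$. For an $(S,\alpha)$-activation $\lambda_k\ge \alpha^2$ for $k\in S$; for sigmoid, the same Taylor/Funk-Hecke estimates that underlie Lemma~\ref{lem:taylor} give $\lambda_k=\Theta(n^{-k})$ for odd $k$, so whenever $\ell>k$,
\[
\frac{\lambda_k}{\lambda_\ell} \;=\; n^{\Omega(\ell-k)}.
\]
This ratio is the source of the spectral bias: if $\widehat T_{W,S}$ were exactly $T$, then $\Delta_i^{(k)}=-\eta\lambda_k H_i^{(k)}$ for every $k$, and the rate of progress would equal $\lambda_k/\lambda_\ell$ by definition.

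The remaining step is to show that $\widehat T_{W,S}$ approximates $T$ well enough on the subspace $\mcH_{n,0}\oplus\cdots\oplus\mcH_{n,\ell}$. There are two sources of perturbation—random features (the sum over $W$ approximating $\Exp_u$) and finite samples (the average over $S$ approximating $\Exp_y$)—and both are already controlled by the width/sample bound used for Theorem~\ref{thm:sigmoid}. Using standard matrix concentration over the $n^{O(\ell)}$ harmonic basis elements of degree at most $\ell$, one obtains with high probability
\[
\bigl\|\widehat T_{W,S}-T\bigr\|_{\mathrm{op},\,\le\ell} \;\le\; \tau
\]
for any prescribed $\tau$, provided $m=n^{O(\ell)}\poly(1/\eps)$. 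Projecting the update onto $\mcH_{n,k}$ and $\mcH_{n,\ell}$ then yields $\Delta_i^{(k)}=-\eta\lambda_k H_i^{(k)}+E_i^{(k)}$ and $\Delta_i^{(\ell)}=-\eta\lambda_\ell H_i^{(\ell)}+E_i^{(\ell)}$ with $\|E_i^{(j)}\|\le \eta\tau\|H_i\|$, and taking the ratio produces $r_i^{(k,\ell)}=(\lambda_k/\lambda_\ell)(1\pm o(1))=n^{\Omega(\ell-k)}$.

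The main obstacle is precisely the slow-degree eigenvalue: because $\lambda_\ell$ is itself as small as $n^{-\Omega(\ell)}$, the finite-sample error in $\Delta_i^{(\ell)}$ must be driven below $\eta\lambda_\ell\eps$, not merely below $\eta\lambda_k\eps$. This is what forces the sample/width budget to scale with the higher degree $\ell$ rather than with $k$, and it is also why the hypothesis $\|H_i^{(\ell)}\|_2\ge\eps$ is needed---without a lower bound on the residual at the slow frequency, the linear signal there can be drowned by the concentration error and the rate statement becomes vacuous. Everything else---the $\lambda_k/\lambda_\ell$ gap, the operator view of one GD step, and the concentration---is already in place from the proof of Theorem~\ref{thm:sigmoid}.
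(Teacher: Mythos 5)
Your plan is, at its core, the same as the paper's: view one GD step on the output weights as an application of a random kernel operator approximating $\mcJ^2=\mcJ_\phi\circ\mcJ_\phi$, diagonalize $\mcJ^2$ via Funk--Hecke so it acts by $\alpha_{n,j}(\phi)^2$ on $\mcH_{n,j}$, and read the rate $r_i^{(k,\ell)}$ off the eigenvalue ratio $\alpha_{n,k}^2/\alpha_{n,\ell}^2=n^{\Omega(\ell-k)}$, with the width budget forced to scale with the \emph{larger} degree $\ell$ precisely so that the perturbation error stays below $\alpha_{n,\ell}^2\,\eps$. You also correctly identify why the hypothesis $\|H_i^{(\ell)}\|_2\ge\eps$ is needed. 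The one place you deviate is in how you propose to control the perturbation: you suggest a fresh matrix-concentration argument giving a \emph{uniform} operator-norm bound $\|\widehat T_{W,S}-T\|_{\mathrm{op},\le\ell}\le\tau$ over the $n^{\Theta(\ell)}$-dimensional low-degree harmonic subspace. The paper does not prove any such uniform operator bound. It instead directly reuses Lemma~\ref{lem:main-technical}, which delivers the weaker (but sufficient) statement $\|\Delta H_i-\mcJ^2 H_i\|_2=O(\delta\|g\|_2 t^2)$ only along the actual, data-dependent GD trajectory; that bound is obtained not by operator concentration but by pointwise Bernstein concentration on a carefully chosen finite family of functions (Lemma~\ref{lem:TXW-props}: $g$, $\phi_u$ for $u\in W$, $\phi_x$ for $x\in X$) followed by algebraic propagation of the error across iterations (Lemmas~\ref{lem:T2-approx}, \ref{lem:infty-approx}). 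Projecting that $L^2$ bound onto $\mcH_{n,k}$ and $\mcH_{n,\ell}$ and taking the ratio is then a three-line computation. Your route is plausible and would give a stronger (uniform) statement, but it is genuinely extra work and is not, as your last sentence suggests, ``already in place'' from the proof of Theorem~\ref{thm:sigmoid}; what is already in place is the weaker trajectory-specific bound, which is exactly what the paper leans on and all that the theorem needs.
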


\paragraph{Lower bounds.}
Our lower bounds hold in the very general Statistical Query (SQ) model, first defined by Kearns \cite{Kearns93}.
An SQ algorithm solves a computational problem
over an input distribution and interacts with the input only by querying the expected value of 
of a bounded function up to a desired accuracy. For any integer $t >
0$ and distribution $D$ over $X$, a \textbf{$\VSTAT(t)$ oracle} \cite{FGRVX13} takes as input a \textbf{query function} $h: X \to
[0,1]$ with expectation $p=\E_D(h(x))$ and returns a value $v$ such that
\[
    \left|p - v\right| \le \max \left\{\frac{1}{t},
        \sqrt{\frac{p(1-p)}{t}}\right\}.
\]
The bound is the standard deviation of $t$ independent Bernoulli coins with desired expectation,
i.e., the error that even a random sample of size $t$ would yield.  The SQ complexity of an
algorithm is given by the number of queries and the batch size $t$. The remaining computation is
unrestricted and can use randomization. We will also give lower bounds against the $\oneSTAT$
oracle, which responds to queries with a single honest bit. Given a distribution $D$ over $X$ and a
query function $h : X \to \{0,1\}$, the $\oneSTAT$ oracle responds with a single value $h(x)$,
where $x \sim D$~\cite{yang2001learning}. 

The SQ framework was introduced by Kearns for supervised
learning problems~\cite{Kearns93} using the $\STAT(\tau)$ oracle, which,
for $\tau \in \bbR_{+}$,
responds to a query function $h: X \to [0,1]$ with a value $v$ such that
$|\E_D(h) - v| \le \tau$. The $\STAT(\sqrt{\tau})$ oracle can be simulated by the
$\VSTAT(O(1/\tau))$ oracle. 
The $\VSTAT$ oracle was introduced by
\cite{FGRVX13} who extended these oracles to more general problems
over distributions. 

Choosing a useful SQ model for regression problems is
nontrivial. We discuss some of the pitfalls in Section~\ref{sec:sq}. Our
lower bounds concern three query models.

The first allows quite general query functions. We say an SQ
algorithm (for regression) makes \emph{$L^{\infty}$-normalized
$\lambda$-Lipschitz queries} concerning an unknown concept $g:X \to \bbR$ if it
makes queries of the form $h: X \times [-1,1]
\to [0,1]$, where $h$ is $\lambda$-Lipschitz at any fixed $x \in X$, to which the SQ oracle should
respond with a value $v$ approximating $\Exp_{x \sim D}(h(x,
g(x)/\|g\|_{\infty}))$. 
We get similar lower bounds for a natural family of inner product queries with no Lipschitzness assumption.
We say an SQ algorithm makes \emph{inner product queries}
concerning an unknown concept $g:X \to \bbR$ if it makes queries of the form
$h: X\to [0,1]$ to an oracle that replies with an approximation of $\Exp_{x
\sim D}(g(x)h(x))$.
Finally, we say an SQ
algorithm makes \emph{$L^{\infty}$-normalized queries to $\oneSTAT$ with Gaussian noise of variance
$\eps$} concerning an unknown concept $g: X \to \bbR$ if it makes queries of the form $h: X \times
\bbR \to \{0,1\}$, to which the oracle replies with the value of $h(x, g(x)/\|g\|_{\infty}+\zeta)$ where $x \sim
D$ and $\zeta \sim \mcN(0,\eps)$. Unlike $\VSTAT$, the $\oneSTAT$ oracle, even with Gaussian noise, responds honestly and is not allowed to make any adversarial changes or coordinate its responses to multiple queries. 

\begin{theorem}\label{thm:sq-all}
    Let $\eps > 0$. For all $k, \lambda > 0$ and all
    sufficiently large $n$ and $d < \exp(n^{1/2-\eps})$, there exists
    a family $\mcC$ of degree-$k$ polynomials on $S^n$ with $|\mcC| = d$ such that
    if a randomized SQ algorithm learns $\mcC$ to regression error less than any fixed constant with
    probability at least $1/2$: 
    \begin{enumerate}
        \item[(1)] it requires at least $\Omega(d)$ queries, if the queries are
            inner product queries to $\VSTAT(n^{\Omega(k)})$;
        \item[(2)] it requires at least $\Omega(d)$ queries, if the queries are
    $L^{\infty}$-normalized $\lambda$-Lipschitz queries to
    $\VSTAT(n^{\Omega(k)}/\lambda)$;
\item[(3)] for $d = n^{\Omega(k)}$, it requires at least $n^{\Omega(k)}/\lambda$ queries, if the
    queries are $L^{\infty}$-normalized queries to $\oneSTAT$ with Gaussian noise of
            variance $1/\lambda^2$
    \end{enumerate}
where all the hidden constants depend on $\eps$ only.
\end{theorem}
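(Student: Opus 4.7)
The plan is to build a single concept class of nearly-orthonormal degree-$k$ polynomials on $S^{n-1}$ and then invoke a statistical dimension argument tailored to each of the three oracle models. Let $P_{n,k}$ denote the degree-$k$ Gegenbauer polynomial and set $g_v(x) = c_{n,k} P_{n,k}(\langle v,x\rangle)$, with $c_{n,k}$ chosen so that $\|g_v\|_2 = 1$ under the uniform measure on $S^{n-1}$. Pick directions $v_1,\dots,v_d \in S^{n-1}$ such that $|\langle v_i,v_j\rangle| \le n^{-1/2 + o(1)}$ for all $i \ne j$; the probabilistic method makes this possible as long as $d \le \exp(n^{1/2-\eps})$. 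By the addition formula (a consequence of Funk--Hecke), $\langle g_{v_i}, g_{v_j}\rangle = P_{n,k}(\langle v_i,v_j\rangle)/P_{n,k}(1) = O(n^{-k/2})$, so $\mcC = \{g_{v_1},\dots,g_{v_d}\}$ is nearly orthonormal. Each $g_{v_i} \in \mcH_{n,k}$ has unit norm and vanishing mean, so subconstant regression error forces the learner to identify the hidden concept up to the $o(1)$ slack in orthogonality.

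For part (1), I would apply the standard SDA argument of~\cite{FGRVX13}. Any inner product query $h:S^{n-1} \to [0,1]$ produces target $\Exp[g_{v_i}(x) h(x)] = \langle g_{v_i}, h^{(k)}\rangle$ plus an $O(n^{-k/2})$ error from the non-harmonic part of $h$. By Parseval, $\sum_i \langle g_{v_i}, h^{(k)}\rangle^2 \le (1+o(1))\|h\|_2^2 \le 1$, so at most $n^{O(k)}$ concepts produce a target exceeding the VSTAT$(n^{\Omega(k)})$ tolerance (in the subconstant regime the tolerance is $\sqrt{p/t}$). Thus the oracle can honestly return $0$ for each query while remaining consistent with all but $n^{O(k)}$ of the concepts, forcing $\Omega(d)$ queries.

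For part (2), I would reduce $\lambda$-Lipschitz queries to inner product queries via a Chebyshev expansion $h(x,y) = \sum_{j\ge 0} c_j(x) T_j(y)$: the Lipschitz hypothesis makes the coefficients decay geometrically past degree $\Theta(\lambda)$, so truncating there only perturbs the target by $o(1/t)$. Substituting $y = g_{v_i}(x)/\|g_{v_i}\|_\infty$ expresses the truncated query as a finite sum of inner products against $T_j(g_{v_i}/\|g_{v_i}\|_\infty)$, which are polynomials of degree at most $jk$; applying part (1) to the union of these degrees gives the same $\Omega(d)$ bound, with VSTAT tolerance weakened by the Lipschitz factor $\lambda$. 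For part (3), each $\oneSTAT$ response with Gaussian noise of variance $1/\lambda^2$ carries bounded information about $v_i$. By Pinsker, the TV distance between observations under $g_{v_i}$ and $g_{v_j}$ is $O(\lambda \|g_{v_i}-g_{v_j}\|_2/\|g_{v_i}\|_\infty)$; zonal harmonics are peaked so that $\|g_v\|_\infty/\|g_v\|_2 = \Theta(n^{k/2})$, yielding an information-per-query bound of $n^{-\Omega(k)} \poly(\lambda)$. Fano's inequality combined with $\log d = \Omega(k\log n)$ then forces $n^{\Omega(k)}/\poly(\lambda)$ queries.

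The main obstacle is that the three parts share a construction but rely on genuinely different machineries, and the SQ framework for \emph{regression} (as opposed to classification) is not entirely standard, so part (1) requires a careful extension of the SDA setup to inner product queries. Part (2) is delicate in that the Chebyshev truncation error must be balanced against the VSTAT tolerance and integrated with the effective polynomial degree $jk$ introduced by the substitution. Part (3) crucially exploits the peakedness $\|g_v\|_\infty = \Theta(n^{k/2})$ of zonal harmonics; without this special feature of the construction, the $L^\infty$ normalization would give only a trivial information bound.
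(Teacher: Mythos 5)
Your construction matches the paper's, and your Part~(1) is essentially the same as the paper's (random Gegenbauer polynomials with small pairwise inner products, then invoke the $\SDA$/$\VSTAT$ framework of \cite{FGRVX13}; your Parseval observation is the right intuition for why $\SDA$ is large). The problems are in Parts~(2) and~(3).

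\textbf{Part (2).} Your key claim---that a $\lambda$-Lipschitz $h(x,\cdot)$ has Chebyshev coefficients decaying \emph{geometrically} past degree $\Theta(\lambda)$---is false. Geometric decay of Chebyshev coefficients characterizes analyticity in a Bernstein ellipse; a bare Lipschitz hypothesis gives only $|c_j| = O(\lambda/j)$, so truncating at degree $N$ incurs sup-norm error $\Theta(\lambda/N)$, far too large to fall below a $1/t$ tolerance for $t = n^{\Omega(k)}/\lambda$ with any reasonable $N$. Even putting that aside, the reduction to Part~(1) would require controlling the correlations among the compositions $T_j \circ (g_{v_i}/\|g_{v_i}\|_\infty)$, which do not decompose cleanly into spherical harmonics of bounded degree, so a single application of Part~(1) would not apply; you give no argument for why these composed functions have small pairwise correlation for distinct $v_i$. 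The paper avoids both problems by never expanding the query: it directly bounds $\Cov(\chi_y^{(\eps)}\circ f_u^{(k)},\, \chi_y^{(\eps)}\circ f_v^{(k)})$ (Lemma~\ref{lem:covariance}), using a concentration-of-measure restriction to the region of the sphere where both $x\cdot u$ and $x\cdot v$ are small, and then feeds this into the $\eps$-$\SDA$ machinery of~\cite{SVWX17} (Theorem~\ref{thm:esda}).

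\textbf{Part (3).} The paper proves Part~(3) by observing (Lemma~\ref{lem:noise-lip}) that a query to $\oneSTAT$ with Gaussian noise of variance $1/\lambda^2$ is, after conditioning out the noise, a $\lambda/2$-Lipschitz query, and then invoking the $\oneSTAT$-to-$\VSTAT$ simulation of~\cite[Thm.~3.13]{FGRVX13} to reduce to Part~(2). Your proposal replaces this with a direct Fano/Pinsker argument, which is a genuinely different route. The issue is that your per-query information bound $O(\lambda \|g_{v_i}-g_{v_j}\|_2 / \|g_{v_i}\|_\infty)$ conflates average-case and worst-case: since $\|g_v\|_\infty/\|g_v\|_2 = \Theta(n^{k/2})$, the functions are highly peaked, so while their $L^2$ difference normalized by $\|\cdot\|_\infty$ is $n^{-\Omega(k)}$, a query concentrated on the small region of the sphere where $g_{v_i}$ is large sees values of the (normalized) difference that are $\Theta(1)$, potentially leaking much more information. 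Handling this tail is exactly the technical content of the paper's Lemma~\ref{lem:covariance} (via restriction to the concentrated region $R$), and your sketch does not address it. In short: Part~(1) is fine; Part~(2) has a broken approximation-theoretic step; Part~(3) glosses over the very difficulty that makes the problem hard.
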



In the case of training a NN via GD, the relevant
queries should yield gradients of the loss function with respect to the current
weights. In the case of mean squared loss, the gradients are of the form
$\Exp((g - f)\nabla_w f)$, where $g$ is the unknown concept, $f$ is the
current network, and $w$ represents parameters of $f$. These gradients can be
estimated via queries in any of the models we consider. The lower bound on the $\oneSTAT$ oracle in particular implies that
as long as the function values of queried inputs are perturbed by a random Gassian, any training algorithm needs $n^{\Omega(k)}$ queries.

\subsection{Approach and techniques}\label{sec:approach}
The gradient of the loss function with respect to any outer layer weight can be
viewed as a spherical transform of the current residual error. More precisely,
if the current function $f$ is computed by an unbiased single hidden-layer
NN with output-layer weights $b_u$, as in Eq.~\eqref{eq:nn}, and the residual error
with respect to the target function $g$ is $H=g-f$, then for any $u$,
\begin{equation}\label{eq:gradient-computation}
\nabla_{b_u} \|H\|^2 = 2\Exp_x( \phi(u\cdot x) H(x)).
\end{equation}
The latter expectation is quite special when the domain of integration is the
unit sphere. Different choices of the function $\phi$ correspond to
different spherical transformations. For example, $\phi(u\cdot x)$ being the
indicator of $u\cdot x \ge 0$ is the hemispherical transform, while $\phi(u\cdot x)
= 1$ iff $u\cdot x = 0$ is the Radon transform, etc. 
This type of transformation 
\[
        \mcJ_{\phi}(H)(u) = \Exp_{x \in S^{n-1}}(\phi(x \cdot u) H(x))
\]
has a closed form expression whenever the function $H$ is a harmonic polynomial (see definitions in
Section~\ref{sec:harmonic}). By the classical Funk-Hecke theorem, for any bounded function $\phi$
and any harmonic polynomial $P$ of degree $k$ on the sphere $S^{n-1} \subseteq \bbR^n$, there is an
explicit constant $\alpha_{n,k}(\phi)$ such that
\[
        \mcJ_{\phi}(P)(u) = \alpha_{n,k}(\phi) P(u)\,.
\]
In particular, the harmonic polynomials are eigenfunctions of the
operator $\mcJ_{\phi}$.  Moreover, since there exists an orthonormal basis of harmonic polynomials
for $L^2$ functions over the unit sphere, any function (in our case the residual $H$) has zero norm
iff the corresponding transform has zero norm (assuming the function $\phi$ has nonzero
coefficients $\alpha_{n,k}(\phi)$). 

With the above observations in hand, we can now outline our analysis.
We focus on the \emph{dynamics of GD as an operator
on a space of functions.} In particular, for a
set $Z \subseteq \bbR^n$ and function $f: \bbR^n \to \bbR$, we define an
operator
\begin{equation}\label{eq:empirical-grad}
    T_Z(f)(u) = \frac{1}{|Z|}\sum_{z \in Z}f(z)\phi(u\cdot z)\,.
\end{equation}
Thus, if the current residual error is given by some function $H$, then the
empirical gradient of the mean-squared loss with respect to a set $X$ of
labeled examples is $T_X(H)$ (see Section~\ref{sec:analysis}).

Our analysis proceeds in three stages:
\begin{enumerate}
    \item Show that, with a large enough set $X$ of samples, the empirical
        gradient operator $T_X$ approximates the Funk transform $\mcJ_\phi$ as
        an operator on the space of residual error functions
        (Lemmas~\ref{lem:fundamental-L2} and~\ref{lem:TXW-props})

    \item Bound the rate at which error from the approximation of $T_X$ by
        $\mcJ_{\phi}$ accumulates over multiple rounds of GD
        (Lemmas~\ref{lem:T2-approx} and~\ref{lem:infty-approx})

    \item Estimate the final loss in terms of the distance of the target
        function from the space of low-degree harmonic polynomials --- i.e.,
        the distance from the most significant eigenspaces of $\mcJ_\phi$
        (see proof of Lemma~\ref{lem:main-technical})
\end{enumerate}

A crucial observation that simplifies our analysis is that when $f$ is given by
a NN as in Eq.~\eqref{eq:nn}, then $f$ itself is obtained by
applying the operator $T_W$, where $W$ is the set of hidden weights in $f$, to
a function $a: S^{n-1} \to \bbR$ that computes the output-layer coefficients
for each gate (see Equation~\eqref{eq:f_i-def}).


Our analysis does not use the fact that the
optimization produces an approximate global minimum; hence, there is a greater
hope of generalizing to nonconvex regimes where we expect to instead only reach
a local minimum in general. Another pleasant feature of our analysis is that we
need not directly prove a ``representation theorem'' showing that the hypothesis minimizing the
population loss is a good approximation to the target function; instead, we can
derive such a result for free, as a corollary to our analysis. That is,
since we prove directly that GD on the output layer weights of a
single-layer NN with randomly-initialized gates results in small
loss, it follows that any low-degree harmonic polynomial is in fact
approximated by such a network. Our hope is that this new approach offers an interesting possibility for
understanding GD in more difficult settings.

A practical consequence of our method of analysis is that we can easily prove a ``spectral bias''
result, showing the lower degrees are learned more quickly than higher degrees, as was suggested
experimentally in~\cite{spectral-bias} (see Theorem~\ref{thm:spectral-bias}).


The upper bound guarantees hold for the agnostic learning problem of minimizing
the least squares error, and the bound is with respect to the best degree $k$
polynomial approximation. The size of the network needed grows as
$n^{\Omega(k)}$, as does the time and sample complexity. We show that this unavoidable for any SQ 
algorithm, including GD and its variants on arbitrary network
architectures. The ``hard" functions used for the lower bound will be generated
by spherical harmonic polynomials. Specifically, we use the univariate Legendre
polynomial of degree $k$ in dimension $n$, denoted as $P_{n,k}$, and also
called the Gegenbauer polynomial (see Section \ref{sec:harmonic} for more
background). We pick a set of unit vectors $u$ and for each one we get a
polynomial $f_u(x)=P_{n,k}(u\cdot x)$. We choose the vectors randomly so that
most have a small pairwise inner product. Then querying one of these
polynomials gives little information about the others (on the same input $x$),
and forces an algorithm to make many queries. As in the work on SQ
regression algorithms of~\cite{SVWX17}, it is essential not only to bound the
pairwise correlations of the ``hard'' functions themselves, but also of
arbitary ``smoothed'' indicator functions composed with the hard family. This
is accomplished by using a concentration of measure inequality on the
sphere to avoid regions where these indicators are in fact correlated. In
contrast to those earlier SQ regression lower bounds, we obtain bounds on the
sensitivity parameter $t$ for the $\VSTAT(t)$ oracle that scales with the
number of queries $d$ and the degree $k$.

\subsection{Related work}\label{sec:related}

Explaining the success of deep NNs and GD for
training NNs has been a challenge for several years. The trade-off
between depth and size for the purpose of representation has been rigorously
demonstrated \cite{telgarsky2016benefits, eldan2016power}. Moreover, there are strong
complexity-theoretic and cryptographic-assumption based lower bounds to contend
with \cite{BR92, Daniely16, Klivans16}. These lower bounds are typically based
on Boolean functions and ``hard" input distributions. More
recent lower bounds hold even for specific distributions and smooth functions,
for basic GD~\cite{Shamir16}, and even realizable smooth
functions for any SQ algorithm and any product logconcave input distribution
\cite{SVWX17}.  These earlier lower bound constructs are degenerate in the
sense that they rely on data generated by networks whose bias and weight vectors
have unbounded Euclidean norm as the dimension increases.  In contrast, the
constructions used in this paper match a corresponding upper bound almost
exactly by making use of generic harmonic polynomials in the construction,
apply to a significantly broader family of functions, and achieve a much
stronger bound on the sensitivity parameter $t$.

Upper bounds have been hard to come by. Standard loss functions, even for one-hidden-layer networks with an output sum gate, are not 
convex and have multiple disconnected local minima. One body of work shows how
to learn more restricted functions, e.g., polynomials \cite{apvz14} and
restricted convolutional networks \cite{bg17}. Another line of work
investigates classes of such networks that can be learned in polynomial time,
notably using tensor methods \cite{Janzamin15, Sedghi16} and polynomial kernels
\cite{GKKT16, Goel2017}, more direct methods with assumptions on the structure of the network \cite{goel2018learning, ge2018learning}   and a combination of tensor initialization followed by
GD~\cite{zhong17}. A recent paper shows that the tensor method
can be emulated by GD by adding a sufficiently sophisticated
penalty to the objective function \cite{GLM17}. Earlier work gave combinatorial
methods to learn random networks \cite{arora14}, guarantees for learning linear
dynamical systems by GD~\cite{HardtMR16} and ReLU networks with
more restrictive assumptions \cite{LiYuan17}. Representation theorems analogous
to our own were also proved in~\cite{bach17}, and a very general analysis of
GD is given in~\cite{Daniely17}.

Our analysis is reminiscent of the well-known random kitchen sinks paper
\cite{Rahimi-Recht08}, which showed that GD using a hard upper
bound on the magnitude of coefficients (in practice, an $L_1$ penalty term)
with many random features from some distribution achieves error that converges
to the best possible error among functions whose coefficients are not much
higher than those of the corresponding densities of the sampling distribution.
While this approach has been quite insightful (and effective in practice), it
(a) does not give a bound for standard GD (with no penalty) and
(b) does not address functions that have very different support than the
sampling distribution. Our bounds compare with the best possible polynomial
approximations and are essentially the best possible in that generality for
randomly chosen features. 

The work of Andoni et al.~\cite{apvz14} shows that GD applied to learn
a bounded degree polynomial, using a 1-hidden-layer network of exponential
gates, converges with roughly the same number of gates (and a higher iteration
count, $\mbox{poly}(1/\eps)$ instead of $\log(1/\eps)$ to achieve error $\eps$). A
crucial difference is that our analysis is agnostic and we show that 
GD converges to the error of the {\em best} degree $k$ approximation of
the target function given sufficient many gates. We also state our results for
general and commonly-used activation functions, rather than the $e^z$ gate
analyzed in~\cite{apvz14}, and obtain explicit sample complexity bounds. Of
course, the proof technique is also novel; we obtain our representation theorem
as a side effect of our direct analysis of GD, rather than the other way around.


\section{Spherical Harmonics}\label{sec:harmonic}


We now recall the basic theorems of spherical harmonics we will require. A more detailed treatment
can be found in~\cite{groemer-sh}.

A homogeneous polynomial $p$ of degree $k$ in $\bbR^n$ is said to be
\emph{harmonic} if it satisfies the differential equation $\Delta p = 0$, where
$\Delta$ is the Laplacian operator. We denote by $\mcH_{n,k}$ the set of
\emph{spherical harmonics} of degree $k$ on the sphere $S^{n-1}$, i.e., the
projections of all harmonic polynomials of degree $k$ to the sphere $S^{n-1}$.
The only properties of harmonic polynomials used in this paper are that they
are polynomials, form an orthogonal basis for $L^2(S^{n-1})$, and are
eigenfunctions of Funk transforms, as we now explain.
We denote by $P_{n,k}: \bbR \to \bbR$ the (single-variable) Legendre
polynomial of degree $k$ in dimension $n$, which is also called the Gegenbauer
polynomial. We note that $|P_{n,k}(t)| \le 1$ for all $|t| \le 1$.

We define $N(n,k) = |\mcH_{n,k}|$.

\begin{definition}\label{def:transform}
    Let $\phi: \bbR \to \bbR$ be bounded and integrable. We define the Funk
    transformation for functions $H:S^{n-1}\to\bbR$ as
        $\mcJ_{\phi}(H)(u) = \Exp_{x \in S^{n-1}}(\phi(x \cdot u) H(x))$.
For $n, d \in \bbN$ we define the constant
    \[
        \alpha_{n,k}(\phi) = \int_{-1}^1 \phi(t) P_{n,k}(t)(1-t^2)^{(n-3)/2}dt\,.
    \]
\end{definition}

\begin{theorem}[Funk--Hecke]\label{thm:FH}
    Let $\phi : [-1,1] \to \bbR$ be bounded and integrable, and let $P \in
    \mcH_{n,k}$. Then, for $\mcJ_{\phi}$ and $\alpha_{n,k}(\phi)$ as in
    Definition~\ref{def:transform},
        $\mcJ_{\phi}(P)(u) = \alpha_{n,k}(\phi) P(u)$.
\end{theorem}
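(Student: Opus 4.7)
The plan is to prove the Funk--Hecke theorem in three stages: establish that $\mcJ_\phi$ is $O(n)$-equivariant and preserves $\mcH_{n,k}$; invoke irreducibility of $\mcH_{n,k}$ as an $O(n)$-representation together with Schur's lemma to conclude that $\mcJ_\phi$ acts on $\mcH_{n,k}$ as a scalar $\lambda_k(\phi)$; and finally pin down the scalar by evaluating on an explicit zonal harmonic.

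First, I would verify equivariance. For any $R \in O(n)$, writing $(R\cdot f)(x) = f(R^{-1}x)$, the substitution $x = Ry$ in the integral defining $\mcJ_\phi$ gives
\[
\mcJ_\phi(R\cdot f)(u) = \Exp_{x \sim S^{n-1}}\bigl(\phi(x\cdot u)\,f(R^{-1}x)\bigr) = \Exp_{y \sim S^{n-1}}\bigl(\phi(y \cdot R^{-1}u)\,f(y)\bigr) = (R \cdot \mcJ_\phi(f))(u),
\]
using that the uniform measure on $S^{n-1}$ is rotation-invariant. Since $\mcH_{n,k}$ is $O(n)$-invariant (the Laplacian and the degree are rotation-invariant), $\mcJ_\phi$ restricts to an endomorphism of $\mcH_{n,k}$.

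Next, I would invoke the classical fact that the representation of $O(n)$ on $\mcH_{n,k}$ is (absolutely) irreducible. Schur's lemma then produces a scalar $\lambda_k(\phi)$ such that $\mcJ_\phi(P) = \lambda_k(\phi)\,P$ for every $P \in \mcH_{n,k}$, which is already the stated eigenequation up to identifying $\lambda_k(\phi) = \alpha_{n,k}(\phi)$.

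To identify the scalar, I would evaluate both sides on the zonal harmonic $P(x) = P_{n,k}(v \cdot x)$ with fixed pole $v \in S^{n-1}$; this is in $\mcH_{n,k}$ by the standard property that Gegenbauer-composed-with-inner-product is a harmonic polynomial. Evaluating at $u = v$ and using the normalization $P_{n,k}(1) = 1$ yields $P(v) = 1$, hence
\[
\lambda_k(\phi) \;=\; \mcJ_\phi(P)(v) \;=\; \Exp_{x \sim S^{n-1}}\bigl(\phi(x\cdot v)\,P_{n,k}(x\cdot v)\bigr).
\]
Because the integrand depends only on $t := x\cdot v$, I would disintegrate the uniform measure on $S^{n-1}$ along the latitude coordinate: the induced density on $t \in [-1,1]$ is proportional to $(1-t^2)^{(n-3)/2}$, with a normalization constant that is precisely the one absorbed into the definition of $\alpha_{n,k}(\phi)$. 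This gives $\lambda_k(\phi) = \alpha_{n,k}(\phi)$.

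The main obstacle is the irreducibility of $\mcH_{n,k}$ under $O(n)$, which is a classical but nontrivial fact (and can be cited from Groemer). If one wished to avoid Schur, an alternative is to use that the subspace of $\mcH_{n,k}$ fixed by the stabilizer $\mathrm{Stab}_{O(n)}(v) \cong O(n-1)$ is one-dimensional and spanned by $x \mapsto P_{n,k}(v \cdot x)$; combined with the equivariance established above, this forces $\mcJ_\phi(P)(u)$ to be a scalar multiple of $P(u)$ and the computation of the scalar is identical. Beyond this structural input, the only remaining work is bookkeeping the normalization of the uniform measure on $S^{n-1}$ in the disintegration step.
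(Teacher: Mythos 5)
The paper does not actually prove Theorem~\ref{thm:FH}; it is stated as a classical result and the reader is referred to Groemer~\cite{groemer-sh} for a detailed treatment. Your proof is therefore filling in an argument the paper outsources to the literature, and the route you take---$O(n)$-equivariance of $\mcJ_\phi$ established by change of variables, preservation of $\mcH_{n,k}$, Schur's lemma via irreducibility (or the alternative via the one-dimensional $O(n-1)$-fixed subspace spanned by the zonal harmonic), and evaluation on $P_{n,k}(v\cdot x)$ at $u=v$ using $P_{n,k}(1)=1$---is the standard representation-theoretic proof and is correct. The equivariance computation, the invariance of $\mcH_{n,k}$, and the disintegration of the uniform sphere measure onto the latitude coordinate $t = x\cdot v$ with density proportional to $(1-t^2)^{(n-3)/2}$ are all sound.

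One bookkeeping point deserves a second look. You assert that the normalization constant from the disintegration ``is precisely the one absorbed into the definition of $\alpha_{n,k}(\phi)$,'' but Definition~\ref{def:transform} writes $\alpha_{n,k}(\phi) = \int_{-1}^1\phi(t)P_{n,k}(t)(1-t^2)^{(n-3)/2}\,dt$ with \emph{no} normalizing prefactor, while $\mcJ_\phi$ is a normalized expectation over $S^{n-1}$. The correct eigenvalue under the paper's conventions is $\frac{\omega_{n-2}}{\omega_{n-1}}\int_{-1}^1\phi(t)P_{n,k}(t)(1-t^2)^{(n-3)/2}\,dt$ (with $\omega_m$ the surface area of $S^m$), so the theorem as literally written in the paper is off by this dimension-dependent constant. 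This is an imprecision in the paper's Definition~\ref{def:transform}, not a flaw in your reasoning; the downstream uses (e.g.\ Lemma~\ref{lem:taylor}) only require $\alpha_{n,k}(\phi)$ up to $n^{O(1)}$ factors and are unaffected. It would be cleaner, though, not to claim the factor is ``absorbed'' when the stated definition does not absorb it.
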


The following proposition is immediate from Cauchy-Schwarz.
\begin{proposition}\label{prop:T-Linf}
    We have $\|\mcJ_{\phi}(G)\|_{\infty} \le \|\phi\|_2\|G\|_2$.
\end{proposition}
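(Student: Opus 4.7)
The plan is to apply Cauchy--Schwarz directly to the inner integral defining $\mcJ_{\phi}$ and then exploit the rotational symmetry of the uniform distribution on $S^{n-1}$ to identify $\E_{x}(\phi(u \cdot x)^2)$ with a norm of $\phi$ that does not depend on $u$.

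First, fix an arbitrary $u \in S^{n-1}$. By the definition of $\mcJ_{\phi}$ and Cauchy--Schwarz applied to the $L^2(S^{n-1})$ inner product $\langle f_1, f_2 \rangle = \E_{x \sim S^{n-1}}(f_1(x)f_2(x))$, I would write
\[
    |\mcJ_{\phi}(G)(u)|^2 = \left|\E_{x \sim S^{n-1}}\bigl(\phi(u \cdot x) G(x)\bigr)\right|^2 \le \E_{x \sim S^{n-1}}\bigl(\phi(u \cdot x)^2\bigr) \cdot \E_{x \sim S^{n-1}}\bigl(G(x)^2\bigr).
\]
The second factor is exactly $\|G\|_2^2$. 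For the first factor, by the rotational invariance of the uniform measure on $S^{n-1}$, the expectation $\E_{x}(\phi(u \cdot x)^2)$ is independent of $u$, and equals the expectation of $\phi(t)^2$ when $t$ is distributed as a single coordinate of a uniformly random point on the sphere.

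This last expectation is the squared norm of $\phi$ with respect to the measure with density proportional to $(1-t^2)^{(n-3)/2}$ on $[-1,1]$---the same measure that appears in the Funk--Hecke coefficients $\alpha_{n,k}(\phi)$ in Definition~\ref{def:transform}. Taking $\|\phi\|_2$ to denote the norm with respect to this induced measure (which is the natural choice for a function $\phi:\bbR \to \bbR$ evaluated only through inner products $u \cdot x$ on the sphere), this factor is precisely $\|\phi\|_2^2$. Combining, $|\mcJ_{\phi}(G)(u)| \le \|\phi\|_2\|G\|_2$ for every $u \in S^{n-1}$, and taking the supremum over $u$ yields the claim.

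There is no real obstacle; the only subtle point is the interpretation of $\|\phi\|_2$ as the $L^2$ norm with respect to the pushforward of the uniform measure on $S^{n-1}$ under $x \mapsto u \cdot x$, which is the canonical convention given the surrounding setup (Funk--Hecke and the definition of $\alpha_{n,k}$). Once this convention is fixed, the argument is a one-line application of Cauchy--Schwarz plus rotational symmetry, consistent with the paper's remark that the proposition is immediate.
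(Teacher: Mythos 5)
Your proof is correct and matches the paper's approach: the paper simply states that the proposition is ``immediate from Cauchy--Schwarz,'' which is exactly the inequality you apply, with the rotational-invariance remark giving the (correct) interpretation of $\|\phi\|_2$ as the $L^2$ norm under the pushforward of the uniform sphere measure. Nothing further is required.
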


\begin{lemma}\label{lem:T-expansion}
Let $\phi : [-1,1] \to \bbR$ be bounded and integrable, and let $H: \bbR^n \to
    \bbR$. Then for any $k \in \bbN$, $(\mcJ_{\phi}H)^{(k)} =
    \alpha_{n,k}(\phi) H^{(k)}$.
\end{lemma}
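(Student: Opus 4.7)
The plan is to prove the lemma by decomposing $H$ into its spherical harmonic components, applying Funk--Hecke to each component, and then identifying the degree-$k$ projection of the result.

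First, I would expand $H$ (more precisely, its restriction to $S^{n-1}$, which is what $\mcJ_\phi$ sees) in the orthonormal basis of spherical harmonics, writing $H = \sum_{j=0}^\infty H^{(j)}$ with $H^{(j)} \in \mcH_{n,j}$ and convergence in $L^2(S^{n-1})$. By Theorem~\ref{thm:FH} (Funk--Hecke), each $H^{(j)}$ is an eigenfunction of $\mcJ_\phi$ with eigenvalue $\alpha_{n,j}(\phi)$, i.e., $\mcJ_\phi(H^{(j)}) = \alpha_{n,j}(\phi)\, H^{(j)} \in \mcH_{n,j}$.

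Next, I would argue that $\mcJ_\phi$ commutes with the $L^2$-convergent sum, so that
\[
\mcJ_\phi(H) = \sum_{j=0}^\infty \alpha_{n,j}(\phi)\, H^{(j)},
\]
again with $L^2$ convergence. Linearity handles finite partial sums trivially from the definition $\mcJ_\phi(H)(u) = \Exp_{x \in S^{n-1}}(\phi(x\cdot u) H(x))$; the tail estimate follows from Proposition~\ref{prop:T-Linf}, which gives $\|\mcJ_\phi(R_N)\|_\infty \le \|\phi\|_2 \|R_N\|_2$ for the tail $R_N = \sum_{j>N} H^{(j)}$, and $\|R_N\|_2 \to 0$ by Parseval. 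Thus $\mcJ_\phi(H)$ is the $L^2$-limit of $\sum_{j \le N} \alpha_{n,j}(\phi)\, H^{(j)}$.

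Finally, since each term $\alpha_{n,j}(\phi)\, H^{(j)}$ lies in the orthogonal summand $\mcH_{n,j}$ of $L^2(S^{n-1})$, the degree-$k$ projection simply picks out the $j=k$ term, yielding $(\mcJ_\phi H)^{(k)} = \alpha_{n,k}(\phi)\, H^{(k)}$.

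The one point that needs genuine care rather than symbol-pushing is the interchange of $\mcJ_\phi$ with the infinite sum; the main obstacle is confirming that $\mcJ_\phi$ is continuous on $L^2(S^{n-1})$, which is where Proposition~\ref{prop:T-Linf} (together with the fact that $S^{n-1}$ has finite measure, so $L^\infty \hookrightarrow L^2$) does the work. Everything else is a direct consequence of linearity of expectation, Funk--Hecke, and orthogonality of harmonic subspaces.
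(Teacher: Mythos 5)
Your argument is exactly the paper's: decompose $H$ into its harmonic components, apply Funk--Hecke term by term, and justify the interchange of $\mcJ_\phi$ with the infinite sum via the operator bound from Proposition~\ref{prop:T-Linf}. You spell out the tail estimate and the final projection step more explicitly than the paper does, but the approach and all key ingredients are the same.
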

\begin{proof}
    By Proposition~\ref{prop:T-Linf}, $\mcJ_{\phi}$ has bounded norm 
    as an operator on $L^2(S^{n-1})$ and so by Theorem~\ref{thm:FH},
    \[
        \mcJ_{\phi}(H) =
        \mcJ_{\phi}\left(\sum_{k=0}^{\infty}H^{(k)}\right)
        = \sum_{k=0}^{\infty} \mcJ_{\phi}(H^{(k)})
        = \sum_{k=0}^{\infty} \alpha_{n,k}(\phi)H^{(k)}\,. \qedhere
    \]
\end{proof}

\begin{lemma}\label{lem:S-alpha-estimate}
    Let $\phi : \bbR \to \bbR$ be an $(S,\alpha)$-activation. Then for any $f : S^{n-1} \to \bbR$,
    we have
    \[
        \|f - \mcJ_{\phi}^2f\|_2^2 \le \|f\|_2^2 - \alpha^2\|f^{(S)}\|_2^2\,.
    \]
\end{lemma}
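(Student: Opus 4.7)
The plan is to diagonalize $\mcJ_\phi^2$ against the orthogonal decomposition $L^2(S^{n-1}) = \bigoplus_k \mcH_{n,k}$ and then reduce the inequality to a term-by-term comparison of non-negative coefficients.

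First, applying Lemma~\ref{lem:T-expansion} twice yields $\mcJ_\phi^2 f = \sum_k \alpha_{n,k}(\phi)^2 f^{(k)}$, so
\[
f - \mcJ_\phi^2 f = \sum_k \bigl(1 - \alpha_{n,k}(\phi)^2\bigr) f^{(k)}.
\]
Since each $f^{(k)} \in \mcH_{n,k}$ and the spaces $\mcH_{n,k}$ are mutually orthogonal in $L^2(S^{n-1})$, Parseval gives
\[
\|f - \mcJ_\phi^2 f\|_2^2 = \sum_k \bigl(1 - \alpha_{n,k}(\phi)^2\bigr)^2 \|f^{(k)}\|_2^2.
\]
Rewriting the target inequality as $\sum_k (1 - \alpha_{n,k}(\phi)^2)^2 \|f^{(k)}\|_2^2 \le \sum_k (1 - \alpha^2 \mathbf{1}[k \in S]) \|f^{(k)}\|_2^2$, it suffices to establish the pointwise bound $(1 - \alpha_{n,k}(\phi)^2)^2 \le 1 - \alpha^2 \mathbf{1}[k \in S]$ for every $k \ge 0$.

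For $k \notin S$ this is just $\alpha_{n,k}(\phi)^2(\alpha_{n,k}(\phi)^2 - 2) \le 0$, which follows from the elementary bound $|\alpha_{n,k}(\phi)| \le 1$ coming from $|P_{n,k}(t)| \le 1$ and the boundedness of $\phi$ (combined with the fact that $(1-t^2)^{(n-3)/2}$ integrates to the normalization needed to make $\mcJ_\phi$ an average against a probability measure). For $k \in S$, I would extract from the $(S,\alpha)$-activation hypothesis the bound $|\alpha_{n,k}(\phi)| \ge \alpha$: by definition the degree-$k$ component of $x \mapsto \phi(u\cdot x)$ has $L^2$ norm at least $\alpha$, and by Funk--Hecke together with the addition formula that component is, up to the normalization implicit in Theorem~\ref{thm:FH}, a scalar multiple of $\alpha_{n,k}(\phi) P_{n,k}(u\cdot x)$. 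Given $\alpha \le |\alpha_{n,k}(\phi)| \le 1$, we get $(1 - \alpha_{n,k}(\phi)^2)^2 \le (1-\alpha^2)^2 \le 1 - \alpha^2$, as required.

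The only part of the argument that is not entirely mechanical is correctly tracking the normalization constants so that the $(S,\alpha)$-activation hypothesis on $\|f^{(k)}\|_2$ translates cleanly into the eigenvalue bound $|\alpha_{n,k}(\phi)| \ge \alpha$ used above. Once that bookkeeping is pinned down, the whole argument is a single Parseval expansion followed by the two-case scalar inequality; I do not anticipate any further obstacle.
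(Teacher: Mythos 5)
Your proof is correct and follows essentially the same route as the paper's: diagonalize $\mcJ_\phi^2$ against the harmonic decomposition via Lemma~\ref{lem:T-expansion}, apply Parseval, and compare degree by degree using $\alpha \le |\alpha_{n,k}(\phi)| \le 1$ for $k \in S$. One point worth noting: your per-degree estimate $(1-\alpha_{n,k}(\phi)^2)^2 \le (1-\alpha^2)^2 \le 1-\alpha^2$ actually delivers the $\alpha^2$ bound in the lemma statement, whereas the paper's displayed chain (implicitly using $(1-x^2)^2 \le 1-x^4$ and $\alpha_{n,k}(\phi)^4 \ge \alpha^4$) produces the weaker $\alpha^4$ bound — which is the form the paper actually invokes later (e.g., in the proofs of Lemma~\ref{lem:main-technical} and Theorem~\ref{thm:general}), so the exponent $2$ in the lemma statement appears to be a typo for $4$, and your argument is, if anything, the cleaner of the two. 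You are also right to flag the normalization bookkeeping needed to pass from the $(S,\alpha)$-activation hypothesis $\|f^{(k)}\|_2 \ge \alpha$ to $|\alpha_{n,k}(\phi)| \ge \alpha$: by the addition formula one really gets $\|f^{(k)}\|_2 = |\alpha_{n,k}(\phi)|\sqrt{N(n,k)}$ (modulo the paper's choice of Funk--Hecke normalization), so a factor depending on $N(n,k)$ is silently absorbed into the definition of $\alpha$ — a convention the paper also adopts without comment.
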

\begin{proof}
    By Lemma~\ref{lem:T-expansion},
    \begin{align*}
        \|f - \mcJ_{\phi}^2f\|_2^2
        &= \sum_{k=0}^{\infty}(1-\alpha_{n,k}(\phi)^4)\|f^{(k)}\|_2^2 \\
        &= \|f\|_2^2 - \sum_{k\in S}\alpha_{n,k}(\phi)^4\|f^{(k)}\|_2^2 \\
        &\le \|f\|_2^2 - \alpha^4\|f^{(S)}\|_2^2\,.
    \end{align*}
\end{proof}


\subsection{Spectra for specific activation functions}

We first prove a general lemma describing the harmonic spectrum of a wide class
of functions, and then derive estimates of the spectra for commonly used activation
functions.

\begin{lemma}\label{lem:taylor}
    Suppose $\phi : [-1,1] \to \bbR$ has an absolutely convergent Taylor series
    $\phi(t) = \sum_{i=0}^{\infty}a_it^i$ on $[-1,1]$. Suppose that for all
    $i > j$, we have $a_i < a_j$ whenever $a_i$ and $a_j$ are nonzero. Then for
    any positive integer $d$, $\phi$ is an
    $(S,a_dn^{-d-O(1)})$-activation, where $S = \{ i \le d : a_i \ne 0\}$.
\end{lemma}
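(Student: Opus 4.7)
\textbf{Proof plan for Lemma~\ref{lem:taylor}.}
The plan is to reduce the statement to a polynomial-in-$n$ lower bound on the Funk--Hecke coefficient $\alpha_{n,k}(\phi)$ for each $k\in S$, and then extract such a bound from the Taylor expansion of $\phi$. Fix $u\in S^{n-1}$ and set $f(x)=\phi(u\cdot x)$. Since $f$ depends only on $u\cdot x$, its degree-$k$ component is a zonal harmonic, $f^{(k)}(x)=c_k P_{n,k}(u\cdot x)$. Combining Funk--Hecke (Theorem~\ref{thm:FH}) with the standard normalizations $\int_{S^{n-1}}P_{n,k}(u\cdot x)^2\,d\sigma(x)=1/N(n,k)$ and $W_n:=\int_{-1}^{1}(1-t^2)^{(n-3)/2}\,dt=\Theta(n^{-1/2})$ yields
\[
\|f^{(k)}\|_2 \;=\; \frac{\sqrt{N(n,k)}}{W_n}\,|\alpha_{n,k}(\phi)| \;=\; \Theta_k\!\bigl(n^{(k+1)/2}\bigr)\,|\alpha_{n,k}(\phi)|.
\]
Hence it suffices to show $|\alpha_{n,k}(\phi)|\ge \Omega_d(a_d\,n^{-k-1/2})$ for every $k\in S$, which already improves on the target bound since $k\le d$.

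By absolute convergence of the Taylor series I can interchange sum and integral: $\alpha_{n,k}(\phi)=\sum_{i\ge 0} a_i\,\beta_{n,k,i}$, where $\beta_{n,k,i}:=\int_{-1}^{1}t^i P_{n,k}(t)(1-t^2)^{(n-3)/2}\,dt$. Orthogonality of $P_{n,k}$ to polynomials of smaller degree plus parity kills $\beta_{n,k,i}$ unless $i\ge k$ and $i\equiv k\pmod 2$, and the Rodrigues representation of $P_{n,k}$ combined with $k$-fold integration by parts yields the closed form
\[
\beta_{n,k,k+2m} \;=\; c_{n,k}\,\frac{(k+2m)!}{(2m)!}\,B\!\bigl(m+\tfrac{1}{2},\,k+\tfrac{n-1}{2}\bigr)
\]
for an explicit constant $c_{n,k}$. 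In particular $\beta_{n,k,k}=\Theta_k(n^{-k-1/2})$, and the successive ratios
\[
\frac{\beta_{n,k,i+2}}{\beta_{n,k,i}} \;=\; \frac{(i+1)(i+2)}{(i-k+2)(i+k+n)}
\]
are $O_d(1/n)$ uniformly for $k\le i\le d$, so $|\beta_{n,k,k+2m}|\le |\beta_{n,k,k}|\cdot O_d(n^{-m})$ in that range.

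Now split $\alpha_{n,k}(\phi)=a_k\beta_{n,k,k}+A_{\mathrm{poly}}+A_{\mathrm{tail}}$, where $A_{\mathrm{poly}}$ collects the terms $k<i\le d$ and $A_{\mathrm{tail}}$ the terms $i>d$. The strict-decrease hypothesis on the nonzero Taylor coefficients yields $|a_i|\le |a_k|$ for all $i\ge k$, so combined with the ratio estimate, $|A_{\mathrm{poly}}|\le |a_k\beta_{n,k,k}|\cdot O_d(1/n)\le |a_k\beta_{n,k,k}|/4$ once $n$ is large enough in terms of $d$. For the tail, I exploit the concentration of $(1-t^2)^{(n-3)/2}$ on $|t|=O(1/\sqrt n)$: there $|R_d(t)|=|\sum_{i>d}a_i t^i|=O_d(n^{-(d+1)/2})$, while outside this window $\omega$ is exponentially small, giving $\|R_d\|_\omega=O_d(n^{-(d+1)/2-1/4})$. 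Cauchy--Schwarz with $\|P_{n,k}\|_\omega=O(n^{-k/2-1/4})$ then bounds $|A_{\mathrm{tail}}|=O_d(n^{-(k+d+2)/2})$, which for $k\le d$ is $o(|a_k\beta_{n,k,k}|)$ and in particular $\le |a_k\beta_{n,k,k}|/4$ for $n$ large. Combining, $|\alpha_{n,k}(\phi)|\ge \tfrac12|a_k\beta_{n,k,k}|=\Omega_d(|a_k|n^{-k-1/2})\ge \Omega_d(a_d\,n^{-d-1/2})$, which via the first paragraph proves the lemma.

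I expect the main technical obstacle to be controlling $A_{\mathrm{poly}}$: without the monotonicity hypothesis the middle terms could in principle cancel the leading one, and the hypothesis is used precisely to preclude this. Tracking the constants from the Rodrigues-based product formula is standard but must be done carefully to absorb $k,d$-dependent factors into the $O(1)$ in the exponent. The high-frequency tail $A_{\mathrm{tail}}$ is more routine thanks to the sharp concentration of the weight $(1-t^2)^{(n-3)/2}$ near the equator $t=0$ and the absolute convergence of $\sum|a_i|$.
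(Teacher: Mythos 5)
Your proposal follows essentially the same skeleton as the paper's argument: reduce to a lower bound on the Funk--Hecke coefficient $\alpha_{n,k}(\phi)$, expand $\phi$ in its Taylor series, interchange sum and integral by absolute/bounded convergence, use Rodrigues' formula and $k$-fold integration by parts to evaluate $\int_{-1}^1 t^i \frac{d^k}{dt^k}(1-t^2)^{k+(n-3)/2}$ as a Beta function, and argue that the leading ($i=k$) term dominates. The overall conclusion and the asymptotics of $\beta_{n,k,k}$ match the paper's. Two notable differences, both to your credit: (i) you make explicit the normalization $\|f^{(k)}\|_2 = \sqrt{N(n,k)}\,|\alpha_{n,k}(\phi)|/W_n$, converting the $\alpha_{n,k}$ bound into the bound on $\|f^{(k)}\|$ that the definition of $(S,\alpha)$-activation actually requires --- the paper leaves this bridge implicit and stops at $|\alpha_{n,k}(\phi)|$; (ii) you get the combinatorial factor right. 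After $k$ integrations by parts the coefficient is $(-1)^k\,i!/(i-k)!$, which is what your $(k+2m)!/(2m)!$ expresses; the paper writes $(-1)^k k!$, a slip that does not affect the leading asymptotics but is worth noticing.

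Where you genuinely diverge is in controlling the non-leading terms. The paper asserts a uniform ratio $<1/2$ between successive nonzero Beta-weighted terms, which is a geometric decay argument for the entire series at once. You instead split into a ``middle'' block $k<i\le d$ (handled by the ratio $\beta_{n,k,i+2}/\beta_{n,k,i}=(i+1)(i+2)/((i-k+2)(i+k+n))=O_d(1/n)$) and a high-frequency tail $i>d$ (handled by the exponential concentration of $(1-t^2)^{(n-3)/2}$ near $t=0$ plus Cauchy--Schwarz against $P_{n,k}$). Your tail argument is arguably more robust: the paper's claimed $<1/2$ ratio is not in fact uniform in $i$ when $i$ is large relative to $n$, since the Beta ratio $\tfrac{i-k+1}{i+k+n}\to 1$ as $i\to\infty$; your concentration-based bound sidesteps this by never needing to compare adjacent far-tail terms. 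Both proofs lean on the hypothesis ``$a_i<a_j$ for $i>j$'' in a way that actually requires something like $|a_i|\le|a_j|$ (your step ``$|a_i|\le|a_k|$'' is not literally implied by $a_i<a_k$ when signs vary); this is a shared gap inherited from the lemma's phrasing, not a defect specific to your write-up. Overall your proof is correct modulo that shared caveat and is somewhat more careful than the paper's in the tail.
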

\begin{proof}
    Define
    \[
        r_{n,k} = \frac{(-1)^k\Gamma((n-1)/2)}{2^k\Gamma(k+(n-1)/2)}\,
    \]
By Rodrigues' formula (see~\cite[Proposition 3.3.7]{groemer-sh}),
    \[
        \alpha_{n,k}(\phi) = \int_{-1}^1\phi(t)P_{n,k}(t)(1-t^2)^{(n-3)/2}dt
        =
        r_{n,k}\int_{-1}^1\phi(t)\frac{d^k}{dt^k}(1-t^2)^{k+(n-3)/2}\,.
    \]
    Hence, by the bounded convergence theorem,
    \[
        \alpha_{n,k}(\phi) = 
        r_{n,k}\sum_{i=0}^{\infty}a_i\int_{-1}^1t^i\frac{d^k}{dt^k}(1-t^2)^{k+(n-3)/2}\,.
    \]

    We claim that
    \begin{equation}
        \int_{-1}^1t^i\frac{d^k}{dt^k}(1-t^2)^{k+(n-3)/2} = 
            \left\{\begin{matrix} 0 & i < k \textrm{ or } i \not\equiv k
                \bmod 2 \\
                (-1)^kk!B((i-k+1)/2, (n-3)/2+k+1) & \textrm{otherwise}
        \end{matrix}\right.
    \end{equation}
    where $B(a,b)$ is the Euler beta function.
    Indeed, integrating by parts, we see that if $i < k$ the expression is $0$,
    and otherwise
    \[
        \int_{-1}^1t^i\frac{d^k}{dt^k}(1-t^2)^{k+(n-3)/2} =
        (-1)^kk!\int_{-1}^1t^{i-k}(1-t^2)^{k + (n-3)/2}\,.
    \]
    After a change of variables $u=t^2$, this latter integral is by definition
    $B((i-k+1)/2, (n-3)/2+k+1)$.

    Therefore, we compute for all $k \ge 0$,
    \[
        \alpha_{n,k}(\phi) = r_{n,k}(-1)^kk!\sum_{i=l}^{\infty}a_iB((i-k+1)/2, (n-3)/2+k+1)
    \]
    Now for any $i > j \ge k$ of the same parity $\bmod 2$, if $a_j \ne 0$ we estimate
    \begin{align*}
        \left|\frac{a_iB((i-k+1)/2, (n-3)/2 +k+1)}{a_jB((j-k+1)/2,
        (n-3)/2 +k+1)}\right| < 1/2\,.
    \end{align*}
    In particular, whenever $a_k \ne 0$ we have
    \begin{align*}
        |\alpha_{n,k}(\phi)| = \Theta(r_{n,k}k!a_k B(1/2, (n-3)/2+k+1))
        &= \Theta(a_kn^{-k-O(1)})\,.
    \end{align*}
\end{proof}

\begin{lemma}\label{lem:sigmoid-spectrum}
    \begin{enumerate}
        \item Let $\phi(t) = 1/(1+e^{-t})$ be the standard sigmoid function. Then for any positive integer $d$, $\phi$ is
    an $(S, n^{-d-O(1)})$-activation function, where $S$ contains $0$ and all
    odd integers less than $d$.
\item Let $\phi(t) = \log(e^t + 1)$ be the ``softplus'' function. Then for any positive integer $d$, $\phi$ is
    an $(S, n^{-d-O(1)})$-activation function, where $S$ contains $1$ and all
    even integers less than $d$.
    \end{enumerate}
\end{lemma}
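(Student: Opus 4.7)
The plan is to obtain both statements as immediate consequences of Lemma~\ref{lem:taylor}. For each activation, I would (a) identify its Taylor series on $[-1,1]$ and check it is absolutely convergent, (b) determine the support of the nonzero coefficients, and (c) verify the decay condition of Lemma~\ref{lem:taylor}. With those three checks done, one simply reads off $\alpha = a_d n^{-d - O(1)}$, and then absorbs the $d$-dependent constant $a_d$ into the $n^{-O(1)}$ factor.

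For the sigmoid, the cleanest starting point is the identity $\sigmoid(t) = \tfrac{1}{2} + \tfrac{1}{2}\tanh(t/2)$. Since $\tanh$ has radius of convergence $\pi/2 > 1$, the series converges absolutely and uniformly on $[-1,1]$. Oddness of $\tanh$ forces all even-degree coefficients (for $i \geq 2$) of $\sigmoid$ to vanish, leaving the constant term $a_0 = 1/2$ and odd-degree coefficients $a_{2k+1}$ expressible through tangent numbers. Hence the set of indices with $a_i \neq 0$ is exactly $\{0\}$ together with the odd positive integers, which matches the $S$ claimed in part (1). What remains to check is the monotonicity hypothesis of Lemma~\ref{lem:taylor}: I read that hypothesis as $|a_i| < |a_j|$ for $i > j$ (this is what the proof of Lemma~\ref{lem:taylor} actually uses in its $|a_i B/a_j B| < 1/2$ estimate). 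For the odd-indexed sigmoid coefficients, $|a_{2k+3}|/|a_{2k+1}| \to (2/\pi)^2 < 1$, so only finitely many small cases need to be handled by hand; direct computation shows $|a_1| = 1/4$, $|a_3| = 1/48$, $|a_5| = 1/480$, \ldots are already monotone. Comparing against $|a_0| = 1/2$ is trivial. Plugging into Lemma~\ref{lem:taylor} gives part (1).

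For the softplus, I would exploit $\frac{d}{dt}\log(e^t+1) = \sigmoid(t)$, so that integrating the sigmoid series term by term yields
\[
\log(e^t+1) = \log 2 + \tfrac{t}{2} + \sum_{k \geq 1} \frac{a^{\mathrm{sig}}_{2k-1}}{2k}\, t^{2k},
\]
which converges absolutely on $[-1,1]$ by dominated comparison with the sigmoid series. The support of the nonzero coefficients is therefore $\{0\} \cup \{1\} \cup \{\text{even } i \geq 2\}$, which matches the $S$ claimed in part (2) (the constant index $0$ is already even). The monotonicity in absolute value is inherited from the sigmoid, because the extra division by $2k$ only makes the decay faster. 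Lemma~\ref{lem:taylor} then delivers part (2).

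The only genuine obstacle is verifying the monotonicity of $|a_i|$: the asymptotic ratio argument handles all but finitely many indices, so the remaining work is an explicit check of a handful of small coefficients via the standard series. Once that is in hand, both parts reduce to a direct invocation of Lemma~\ref{lem:taylor}, with the value of $|a_d|$ depending only on $d$ and therefore harmless inside the $n^{-d - O(1)}$ bound.
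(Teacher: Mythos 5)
Your proposal is correct and takes the same route as the paper, which proves the lemma simply by computing the relevant Taylor series and invoking Lemma~\ref{lem:taylor}; your reading of the monotonicity hypothesis in terms of absolute values is the right one (that is what the proof of Lemma~\ref{lem:taylor} actually uses). One small slip: after the substitution $x = t/2$ the ratio of consecutive odd sigmoid coefficients tends to $(2/\pi)^2/4 = 1/\pi^2$, not $(2/\pi)^2$, but since both are less than $1$ this does not affect the argument.
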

\begin{proof}
    The statement follows from Lemma~\ref{lem:taylor} by computing the relevant
    Taylor series.
\end{proof}

We can also perform a similar computation for ReLU activations. (A more general
estimate is given in~\cite[Appendix D.2]{bach17}.)

\begin{lemma}\label{lem:relu-spectrum}
    Let $\phi(t) = \max\{t, 0\}$ be the ReLU function. Then for any positive
    integer $d$, $\phi$ is an $(S,n^{-d-O(1)})$-activation function, where $S$
    contains $1$ and all even integers less than $d$.
\end{lemma}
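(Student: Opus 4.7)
The plan is to compute the Funk--Hecke coefficients $\alpha_{n,k}(\phi)$ directly from their integral definition, since the absolutely convergent Taylor series hypothesis of Lemma~\ref{lem:taylor} fails at $t=0$ for the ReLU and so that lemma cannot be invoked as a black box.

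First I would decompose $\phi(t) = \tfrac{t}{2} + \tfrac{|t|}{2}$ and analyze the two pieces separately. Since $t$ is (a scalar multiple of) the degree-$1$ Gegenbauer polynomial $P_{n,1}$, the orthogonality of the family $\{P_{n,k}\}$ with respect to the weight $(1-t^2)^{(n-3)/2}$ on $[-1,1]$ forces $\alpha_{n,k}(t) = 0$ for all $k \ne 1$, while the single nonzero value $\alpha_{n,1}(t)$ reduces to the beta integral $B(3/2,(n-1)/2)$ and is easily bounded below by $n^{-O(1)}$ via standard Gamma-function asymptotics. So the linear part of the ReLU contributes only in degree $1$, and contributes with magnitude $\Omega(n^{-O(1)})$ there.

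Next I would handle the $|t|$ contribution. By parity, $\alpha_{n,k}(|t|) = 0$ for every odd $k$, which is consistent with the claim since the only odd index in $S$ is $1$, handled by the linear part. For even $k \ge 2$ I would write
\[
    \alpha_{n,k}(|t|) = 2\int_0^1 t\,P_{n,k}(t)(1-t^2)^{(n-3)/2}\,dt
\]
and invoke the Rodrigues identity $P_{n,k}(t)(1-t^2)^{(n-3)/2} = r_{n,k}\frac{d^k}{dt^k}(1-t^2)^{k+(n-3)/2}$ used in the proof of Lemma~\ref{lem:taylor}. Integrating by parts twice, the boundary terms at $t=1$ vanish because $(1-t^2)^{k+(n-3)/2}$ has a zero of order $k+(n-3)/2 > k-2$ there, and the boundary term at $t=0$ from the first integration by parts is killed by the factor $t$. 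What survives is a constant multiple of $\frac{d^{k-2}}{dt^{k-2}}(1-t^2)^{k+(n-3)/2}\big|_{t=0}$, which by direct expansion of the binomial equals $\pm(k-2)!\binom{k+(n-3)/2}{(k-2)/2}$ for even $k$, and is therefore nonzero.

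The main obstacle is the quantitative bound: I need to show that this surviving term, once multiplied by $r_{n,k}$, has magnitude at least $n^{-k-O(1)}$, uniformly for $k < d$. This reduces to the same $\Gamma$-function estimate that closes the proof of Lemma~\ref{lem:taylor} (combining $r_{n,k} = \Theta(2^{-k}\,\Gamma((n-1)/2)/\Gamma(k+(n-1)/2))$ with the beta-function identity and Stirling), yielding $|\alpha_{n,k}(|t|)| = \Theta(n^{-k-O(1)})$ for every even $k$. Finally I would combine: in every degree $k \in S$, only one of the two pieces of the decomposition is nonzero (degree $1$ from the linear part, even degrees from $|t|$), so no cancellation occurs, giving $|\alpha_{n,k}(\phi)| = \Omega(n^{-k-O(1)}) \ge \Omega(n^{-d-O(1)})$ for all $k \in S$, which is precisely the condition for $\phi$ to be an $(S, n^{-d-O(1)})$-activation.
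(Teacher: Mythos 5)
Your proposal is sound, and it actually fills a real gap: the paper states Lemma~\ref{lem:relu-spectrum} without proof, offering only the remark that one can ``perform a similar computation'' and a pointer to Bach's Appendix~D.2. You are right that Lemma~\ref{lem:taylor} cannot be applied directly, since $\max\{t,0\}$ has no absolutely convergent Taylor series on $[-1,1]$, and the decomposition $\phi = \tfrac{t}{2} + \tfrac{|t|}{2}$, combined with Gegenbauer orthogonality for the degree-$1$ piece and Rodrigues plus two integrations by parts for the even part, is a clean way to make ``similar computation'' precise. The boundary-term analysis is correct (the factor $t$ kills the $t=0$ boundary in the first pass, $(1-t^2)^{\beta}$ has a zero of order $\beta - (k-2) = (n+1)/2 > 0$ at $t=1$ throughout), and the surviving term $\pm(k-2)!\binom{\beta}{(k-2)/2}$ with $\beta = k + (n-3)/2$ is indeed nonzero for all $n \ge 3$ and even $k \ge 2$, whether $\beta$ is an integer or a half-integer.

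One quantitative remark: your claim that $|\alpha_{n,k}(|t|)| = \Theta(n^{-k-O(1)})$ is not quite the right asymptotic. Using $|r_{n,k}| = \Theta(n^{-k})$ and $\binom{\beta}{(k-2)/2} = \Theta(n^{(k-2)/2})$, the surviving term has magnitude $\Theta(n^{-(k+2)/2})$; a direct check at $k=2$ gives $\alpha_{n,2}(|t|) = 2/((n-1)(n+1)) = \Theta(n^{-2})$, consistent with this. Since $n^{-(k+2)/2} \gg n^{-d-O(1)}$ for every even $k < d$, the conclusion you actually need --- a \emph{lower} bound of $n^{-d-O(1)}$ on the coefficients --- is still comfortably true, and in fact the bound for ReLU is considerably better than what the lemma claims. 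So the inaccuracy is only in the stated sharpness of the intermediate estimate, not in the validity of the argument. With that small correction, your proposal is a complete and self-contained proof where the paper offers none.
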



\section{Analysis of Gradient Descent}\label{sec:analysis}

In this section, we fix a function $g: \bbR^n \to \bbR$ we wish to learn.
We also fix an $(S,\alpha)$-activation function
$\phi$ with $\|\phi\|_{\infty} \le 1$, for some finite $S \subseteq \bbN$.

We let $W \subseteq \bbR^n$ be a finite set of independent points drawn from the uniform
distribution $D$ on the sphere $S^{n-1}$. Similar to Eq.~\eqref{eq:nn}, we define $f : \bbR^W
\times \bbR^n \to \bbR$ by $f(b,x) = f_b(x) = \sum_{u \in W} b_u \phi(u \cdot x)$, so $f$ is
computed by an unbiased single-hidden-layer neural network with hidden layer weight matrix given by
$W$ and linear output layer weights given by $b$.  We will study how $f$ changes as we update $b$
according to gradient descent on the mean-squared loss function $\Exp_{x \sim D}(f(b,x) - g(x))^2$.
We will state bounds in terms of some of the parameters, and then show that for adequate choices of
these parameter, gradient descent will succeed in reducing the loss below an arbitrary threshold,
proving Theorem~\ref{thm:general}.

We now define notation that will be used throughout the rest of this section.

We fix $\eps > 0$, the approximation error we will achieve over the projection
of $g$ to harmonics of degrees in $S$.
We define quantities $t$, $\delta$, and $m$ as follows, using absolute
constants $c_t$, $c_\delta$, and $c_m$ to be defined later in the proof. The maximum number of
iterations of gradient descent will be
\begin{equation}\label{eq:c_t}
    t = c_t\alpha^{-4}\log\left(\frac{\|g\|_2}{\eps}\right)\,.
\end{equation}
We define $\delta$ to be an error tolerance used in certain estimates in the
proof,
\begin{equation}\label{eq:c_delta}
    \delta = c_{\delta}\alpha^4\left(\frac{\eps}{\|g\|_2t}\right)^2\,.
\end{equation}
Finally, we define $m$ to be the number of hidden units (so $|W| = m$), as well as the number
of samples,
\begin{equation}\label{eq:c_m}
    m = c_m\frac{\|g\|_{\infty}}{\delta^2}\log\left(\frac{\|g\|_{\infty}}{\delta}\right)\,.
\end{equation}

Let $X$ be a collection of $m$ random independent samples $x \in \bbR^n$,
The set $X$, along with the labels $g(x)$ for $x \in X$, will be the training
data used by the algorithm.\footnote{We remark that there is no need to
have $|X| = |W|$; our analysis simply achieves the same bound for both. It is,
however, useful to have separate sets $X$ and $W$, so that these samples are
independent.}


We recall the definition in Eq.~\eqref{eq:empirical-grad} of the operator
\[
    T_Z(f)(u) = \frac{1}{|Z|}\sum_{z \in Z}f(z)\phi(u\cdot z)\,.
\]
defined for sets $Z \subseteq \bbR^n$ and functions $f: \bbR^n \to \bbR$.
As described in Section~\ref{sec:approach}, the empirical gradient is given by the operator
$T_X$ applied to the residual error, i.e., the gradient of $(g- f)^2$ with
respect to the output layer weight for the gate $u$ is estimated as $T_X(g-f)(u)$, where $X$ is the
set of sample inputs. On the other hand,
we will observe below that the neural network $f$ itself can also be understood as
the result of applying the operator $mT_W$ to a function representing the
output-layer weights.

For integers $i \ge 0$ we shall define functions $f_i, a_i : \bbR^n \to \bbR$ recursively,
corresponding to the model function $f$ and its coefficients after $i$ rounds
of gradient descent.  In particular, we
let $f_i(x) = f(a_i, x)$, i.e.,
\begin{equation}\label{eq:f_i-def}
    f_i(x) = \sum_{u \in W} a_i(u) \phi(u \cdot x) = mT_W(a_i)(x)
\end{equation}
We denote by $H_i = g - f_i$ the $i$th residual.
We define $a_0(u) = 0$ and, for $i \ge 1$, set
    $a_i(u) = a_{i-1}(u) + (1/m)T_XH_i(u)$.

We therefore have the following two propositions which describe how the
neural network evolves over multiple iterations of gradient descent. 

\begin{proposition}\label{prop:gd-identity}
    Suppose the output-layer weights $b \in \bbR^W$ are initially $0$. Then after $i$ rounds of
    gradient descent with learning rate $1/(2m)$, we have $b_u = a_i(u)$.
\end{proposition}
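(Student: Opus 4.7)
The plan is to proceed by straightforward induction on $i$. The base case $i = 0$ holds trivially: by hypothesis the output-layer weights are initialized to $b_u = 0$, and by definition $a_0(u) = 0$.

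For the inductive step, I would assume that $b_u = a_{i-1}(u)$ for every $u \in W$ after $i-1$ iterations of gradient descent. By Eq.~\eqref{eq:f_i-def} the current network then computes exactly $f_{i-1}$, so the residual on the training sample coincides with $H_{i-1} = g - f_{i-1}$. Differentiating the empirical mean-squared loss $L(b) = \frac{1}{|X|}\sum_{x \in X}(g(x) - f_b(x))^2$ with respect to $b_u$ yields $\partial_{b_u} L(b) = -\frac{2}{|X|}\sum_{x \in X}(g(x) - f_b(x))\phi(u \cdot x)$, and evaluating at the current iterate gives $\partial_{b_u} L = -2\,T_X(H_{i-1})(u)$ directly from the definition of $T_X$ (recall $|X| = m$). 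Applying one step of gradient descent with learning rate $1/(2m)$ therefore updates $b_u$ to $a_{i-1}(u) + (1/m)\,T_X(H_{i-1})(u)$, which is precisely the recursion defining $a_i(u)$.

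I do not expect any substantive mathematical obstacle here: the proposition is really a bookkeeping statement that identifies the gradient descent dynamics on the output-layer weights with the operator-theoretic recursion in terms of $T_X$ and $T_W$ that drives the rest of the analysis. The only care needed is in tracking signs and normalizations, so that the factor $-2$ from differentiating $(g - f_b)^2$, the learning-rate factor $1/(2m)$, and the $1/|X| = 1/m$ normalization built into $T_X$ compose to give exactly the coefficient $1/m$ appearing in the definition of $a_i$.
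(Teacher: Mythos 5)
Your proposal is correct and takes essentially the same approach as the paper: induction on $i$, direct differentiation of the empirical squared loss, and identification of the gradient with $T_X$ applied to the current residual. Your careful tracking of the $-2$ from the chain rule is in fact slightly more scrupulous than the paper's Eq.~\eqref{eq:gradient-computation}, which drops that sign; and you correctly use $H_{i-1}$ in the update for $a_i$, where the paper's displayed recursion has a harmless index typo ($H_i$ in place of $H_{i-1}$).
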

\begin{proof}
    Indeed, as we have observed in Eq.~\eqref{eq:gradient-computation}, for
    each $u \in W$, the true gradient of the loss $(g-f)^2$ with respect to the
    output-level weight $b_u$ is $2\Exp_x( \phi(u\cdot x) (g-f)(x))$. So the
    empirical gradient using the samples in $X$ is indeed 
    \[
        \frac{2}{|X|}\sum_{x \in X} \phi(u\cdot x) (g-f)(x) = 2T_X(g-f)(u)\,.
    \]
    Thus, a single iteration of gradient descent with learning rate $1/(2m)$
    will update the weight $b_u$ by adding $(1/m)T_X(g -f)(u)$. The proposition
    now follows by induction on $i$.
\end{proof}

\begin{proposition}\label{prop:f-from-T}
    For all $i \ge 0$, $f_{i+1} = f_i + T_WT_XH_i$.
\end{proposition}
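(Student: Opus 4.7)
The plan is a direct unfolding of the definitions, relying only on the obvious linearity of the operator $T_W$ in its function argument (which is immediate from the definition $T_Z(f)(u) = (1/|Z|)\sum_{z \in Z}f(z)\phi(u\cdot z)$, since the sum is linear in $f$). Starting from Eq.~\eqref{eq:f_i-def} applied to index $i+1$, I would write
\[
    f_{i+1}(x) = m\,T_W(a_{i+1})(x),
\]
and then substitute the recursive update $a_{i+1} = a_i + (1/m)\,T_X H_i$ (as established in the discussion preceding Proposition~\ref{prop:gd-identity}). Linearity of $T_W$ in its function argument then splits this as
\[
    m\,T_W(a_i)(x) + m\,T_W\!\left(\tfrac{1}{m} T_X H_i\right)\!(x)
    = m\,T_W(a_i)(x) + T_W(T_X H_i)(x),
\]
and the first term is exactly $f_i(x)$ by Eq.~\eqref{eq:f_i-def} again. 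This gives $f_{i+1} = f_i + T_W T_X H_i$, as required.

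There is essentially no obstacle here; the proposition is a bookkeeping identity whose sole purpose is to put the gradient descent dynamics into the operator form $f_{i+1} - f_i = T_W T_X H_i$, so that subsequent lemmas (e.g.\ Lemmas~\ref{lem:T2-approx} and~\ref{lem:infty-approx}) can be phrased as statements about how closely $T_W T_X$ approximates the Funk transform composition $\mcJ_\phi^2$. The only thing worth being careful about is making sure the factor of $m$ in Eq.~\eqref{eq:f_i-def} cancels exactly against the $1/m$ in the update rule — which it does, and which is of course the reason for choosing the learning rate $1/(2m)$ in Proposition~\ref{prop:gd-identity}.
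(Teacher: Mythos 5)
Your proposal is correct and is essentially the same computation as the paper's: the paper unfolds $f_{i+1}(x) = \sum_{u\in W} a_{i+1}(u)\phi(u\cdot x)$, substitutes the update rule for $a_{i+1}$, and reads off the two terms, which is exactly your argument written with explicit sums rather than through the operator identity $f_i = mT_W(a_i)$.
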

\begin{proof}
    By the definitions of $f_i$ and $a_i$, we have
    \begin{align*}
        f_{i+1}(x) &= \sum_{u \in W} a_{i+1}(u) \phi(u \cdot x) \\
                   &= \sum_{u \in W} (a_i(u) + (1/m)T_XH_i(u))\phi(u \cdot x) \\
                   &= f_i(x) + T_WT_XH_i(x)
    \end{align*}
    as desired.
\end{proof}

Having introduced and explained the necessary notation, we now state our main technical
estimate, the following Lemma~\ref{lem:main-technical}, which will be proved at the end of this section. For the rest of Section~\ref{sec:analysis},
we write $\Delta H_i = H_{i+1} - H_i = f_{i+1} - f_i$ for the change in the residual at step $i$,
and we abbreviate $\mcJ = \mcJ_{\phi}$.

\begin{lemma}\label{lem:main-technical}
    Suppose $i \le t$ and $\|H_j^{(S)}\|_2 \ge \eps$ for all $j \le i$. Then with high probability
\[
    \|\Delta H_i - \mcJ^2H_i\|_2 = O(\delta\|g\|_2t^2)\,.
\]
\end{lemma}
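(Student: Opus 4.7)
The plan is to compare the empirical one-step update $T_W T_X H_i$ (which by Proposition~\ref{prop:f-from-T} equals $f_{i+1} - f_i$) with the idealized operator $\mcJ^2$ applied to $H_i$, by algebraically peeling off the deviations $T_X - \mcJ$ and $T_W - \mcJ$ and then invoking the concentration estimates that Lemmas~\ref{lem:fundamental-L2}, \ref{lem:TXW-props}, \ref{lem:T2-approx}, and~\ref{lem:infty-approx} provide. Writing
\begin{equation*}
    T_W T_X H_i - \mcJ^2 H_i = \mcJ(T_X - \mcJ)H_i + (T_W - \mcJ)\mcJ H_i + (T_W - \mcJ)(T_X - \mcJ)H_i\,,
\end{equation*}
the task reduces to bounding each term in $L^2$.

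For the first term, $(T_X - \mcJ)H_i$ is an empirical-mean deviation: for a fixed $H$, a concentration of the form $\|(T_X - \mcJ) H\|_2 = O(\delta \|H\|_\infty)$ should follow from the choice of $m$ in Eq.~\eqref{eq:c_m} together with an $\eps$-net argument on $S^{n-1}$ (this is Lemma~\ref{lem:fundamental-L2}); composing with $\mcJ$, whose $L^2$-operator norm is at most $\|\phi\|_\infty \le 1$ by Lemma~\ref{lem:T-expansion}, preserves the bound. The second term applies $T_W - \mcJ$ to $\mcJ H_i$, and by Proposition~\ref{prop:T-Linf} the latter is $L^\infty$-bounded by $\|\phi\|_2\|H_i\|_2 = O(\|H_i\|_2)$, so Lemma~\ref{lem:TXW-props} yields $\|(T_W - \mcJ)\mcJ H_i\|_2 = O(\delta \|H_i\|_2)$. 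The third term is a product of two such deviations and is strictly lower-order. Using the crude bounds $\|H_i\|_2 = O(\|g\|_2)$ and $\|H_i\|_\infty = O(\|g\|_\infty t)$ (the latter from the fact that each round adds at most a bounded multiple of $\phi$-gates to $f_i$) gives a per-step error of $O(\delta \|g\|_2 t)$, and an additional factor of $t$ from union-bounding over iterations yields the claimed $O(\delta\|g\|_2 t^2)$.

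The main obstacle is that $H_i$ is not a fixed function but depends adaptively on the random samples $X$ and random gates $W$ used to define $T_X$ and $T_W$, so pointwise concentration cannot be applied verbatim. I expect the proof to circumvent this by an induction on $i$: assuming the lemma's conclusion for all $j < i$ shows that $H_i$ lies (up to accumulated error $O(\delta\|g\|_2 t^2)$) close to the deterministic trajectory obtained by iterating $I - \mcJ^2$ on $g$, which traces out a low-complexity subset of $L^2(S^{n-1})$. Uniform concentration over this subset---which I suspect is precisely what Lemmas~\ref{lem:T2-approx} and~\ref{lem:infty-approx} package---then upgrades fixed-function concentration to the data-dependent $H_i$. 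The two factors of $t$ in the final estimate arise naturally, one from the slow growth of $\|H_i\|_\infty$ over the $t$ iterations and one from bookkeeping the accumulated per-step errors in the induction.
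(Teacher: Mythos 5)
Your decomposition of $T_WT_XH_i - \mcJ^2 H_i$ into deviation terms and your overall plan (induction on $i$ plus concentration) match the skeleton of the paper's proof, and you correctly isolate the central obstacle: $H_i$ is data-dependent, so one cannot apply single-function concentration directly. But the mechanism you propose for closing that gap is not what the paper uses and as stated would not work. You suggest that the $H_j$ stay near the deterministic trajectory of $(I-\mcJ^2)^j g$, a ``low-complexity'' set, over which one takes uniform concentration. This is circular: how far $H_i$ strays from that trajectory is exactly the accumulated error you are trying to bound, so you cannot first assume proximity in order to borrow concentration from nearby trajectory points. Moreover the set of reachable $H_i$'s is not intrinsically low-complexity, since the trajectory is steered by $W$ and $X$ themselves.

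The paper's actual resolution is cleaner and worth internalizing. It never needs concentration for $H_i$ as a function at all. Instead it exploits that $f_i = \sum_{u\in W} a_i(u)\phi_u$ and $T_XH_i = \frac1m\sum_{x\in X}H_i(x)\phi_x$ are \emph{linear combinations of a fixed finite dictionary} $\{g\}\cup\{\phi_u\}_{u\in W}\cup\{\phi_x\}_{x\in X}$. Lemma~\ref{lem:TXW-props} establishes $\|(T_X-\mcJ)\phi_u\|_2\le\delta$, $\|(T_W-\mcJ)\phi_x\|_2\le\delta$, $\|(T_X-\mcJ)g\|_2\le\delta$, and a pointwise estimate for $(T_W-\mcJ)(\phi_x)(y)$, once and for all, via a union bound over the $2m+1$ dictionary elements. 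Then Lemma~\ref{lem:T2-approx} pulls $(T_X-\mcJ)$ and $(T_W-\mcJ)$ through the sums by linearity, paying only the size of the coefficients: $\|(T_X-\mcJ)f_i\|_2\le m\delta\alpha_i$ and $\|(T_W-\mcJ)T_XH_i\|_2\le\delta\beta_i$, where $\alpha_i=\max_u|a_i(u)|$ and $\beta_i=\max_{x\in X}|H_i(x)|$. The induction in Lemma~\ref{lem:main-technical} then controls $\alpha_i$ and $\beta_i$ via Lemma~\ref{lem:infty-approx}, giving $\beta_i=O((i+1)\|g\|_2)$ and $m\alpha_i=O((i+1)^2\|g\|_2)$. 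In particular, your accounting of the two factors of $t$ is also off: both come from the quadratic-in-$i$ growth of $m\alpha_i$ inside a \emph{single} per-step bound $\delta(\beta_i+m\alpha_i+1)$, not from one factor of $t$ in $\|H_i\|_\infty$ and another from summing per-step errors over iterations.
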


\subsection{Proof of Main Results}

Given Lemma~\ref{lem:main-technical}, proved in the following Section~\ref{sec:main-lemma}, the
main results stated in Section~\ref{sec:results} are straightforward.

\begin{proof}[Proof of Theorem~\ref{thm:general}]
    By Lemma~\ref{lem:main-technical}, as long as $\|H_i\|_2^2$ remains larger than $\eps$ and $i
    \le t$, we have $\|\Delta H_i - \mcJ^2H_i\|_2 \le O(\delta\|g\|_2t^2)$. Now
    $(\Delta H_i - \mcJ^2H_i)^{(S)}$ and 
    $(\Delta H_i - \mcJ^2H_i)^{(\overline{S})}$ are orthogonal, so also $\|(\Delta H_i - \mcJ^2
    H_i)^{(S)}\|_2 \le O(\delta\|g\|_2t^2)$. Therefore, rewriting $\Delta H_i = H_{i+1} - H_i$,
    we have
    \[
        \|H_{i+1}^{(S)}\|_2 \le \|H_i^{(S)}\|_2 + O(\delta \|g\|_2t^2)\,.
    \]
    Combining with Lemma~\ref{lem:S-alpha-estimate},
    \[
        \|H_{i+1}^{(S)}\|_2^2 \le (1 - \alpha^4)\|H_i^{(S)}\|_2^2 + O(\delta\|g\|_2^2t^2)\,.
    \]
    For a sufficiently small choice of the constant $c_{\delta}$ defining $\delta$
    (Eq.~\eqref{eq:c_delta}), under the assumption that $\|H_i^{(S)}\|_2^2 \ge \eps$, we can take
    the $O(\delta\|g\|_2^2t^2)$ term to be at most $(\alpha^4/2)\|H_i^{(S)}\|_2^2$.
    Therefore,
    \[
        \|H_{i+1}^{(S)}\|_2^2 \le \left(1- \frac{\alpha^4}{2}\right)\|H_i^{(S)}\|_2^2\,.
    \]
    Since $\|H_0^{(S)}\|_2 \le \|H_0\|_2 = \|g\|_2$, for some $s = O(\alpha^{-4}\log(\|g\|/\eps)) < t$ we have
    $\|H_s^{(S)}\|_2^2 < \eps$ (assuming a sufficiently large choice of the constant $c_t$ defining
    $t$ in Eq.~\eqref{eq:c_t}). Then
    \[
        \|H_s\|_2^2 = \|H_s^{(\overline{S})}\|_2^2 + \|H_s^{(S)}\|_2^2 < \|H_s^{(\overline{S})}\|_2^2 + \eps
    \]
    as desired.
\end{proof}

Theorem~\ref{thm:sigmoid} now follows from Theorem~\ref{thm:general}, in view
of Lemma~\ref{lem:sigmoid-spectrum}. \qedhere

To prove Corollary~\ref{cor:realizable}, we first recall an approximation lemma of Livni et
al.~\cite[Lemma 2]{lss-train14}:

\begin{lemma}\label{lem:sigmoid-approx}
    Let $\phi(t) = 1/(1+e^{-t})$ denote the sigmoid function.
    For every $\eps > 0$, there is a polynomial $p$ of degree $d =
    O(L\log(L/\eps))$ such that $|p(t) - \phi(t)| < \eps$ for all $t \in
    [-L,L]$.
\end{lemma}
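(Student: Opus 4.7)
The plan is to invoke classical Chebyshev approximation theory, exploiting the fact that the sigmoid $\phi(z) = 1/(1+e^{-z})$ extends to a meromorphic function on $\mathbb{C}$ whose only singularities are simple poles at $z = i\pi(2k+1)$ for $k \in \mathbb{Z}$. The nearest poles to the real axis lie at imaginary distance $\pi$, so after rescaling $[-L,L]$ to $[-1,1]$ they are pushed to distance $\pi/L$; this controls the size of the Bernstein ellipse of analyticity and hence the exponential rate of Chebyshev convergence.

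First I would reduce to the standard interval by setting $\tilde\phi(x) = \phi(Lx)$ for $x \in [-1,1]$, so that any polynomial approximation of $\tilde\phi$ of degree $d$ on $[-1,1]$ pulls back (by $x \mapsto t/L$) to a polynomial approximation of $\phi$ on $[-L,L]$ of the same degree. The singularities of $\tilde\phi$ lie at $x = \pm i\pi(2k+1)/L$; in particular none lie inside the Bernstein ellipse $E_\rho$ (foci $\pm 1$, semi-minor axis $(\rho-\rho^{-1})/2$) provided $(\rho - \rho^{-1})/2 < \pi/L$. Choosing $\rho = 1 + c/L$ for a small absolute constant $c$ suffices for this.

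Next I would apply the standard complex-analytic bound: if $f$ is analytic on $E_\rho$ with $\sup_{E_\rho}|f| \le M$, then the degree-$d$ truncated Chebyshev expansion $p_d$ satisfies $\|f - p_d\|_{\infty,[-1,1]} \le 2M\rho^{-d}/(\rho-1)$ (see, e.g., Trefethen, \emph{Approximation Theory and Approximation Practice}). To bound $M$ on $E_\rho$ with $\rho = 1 + c/L$, I would observe that $Lx$ lies in a horizontal strip of width $O(1)$ (since $L \cdot (\rho - \rho^{-1})/2 = O(1)$), bounded away from the poles $\pm i\pi, \pm 3i\pi,\ldots$ of $\phi$ by a constant margin; hence $|1 + e^{-Lx}|$ is bounded below by a constant, yielding $M = O(1)$ (a crude $M = \mathrm{poly}(L)$ would also suffice).

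Finally, substituting $\rho = 1 + c/L$ gives $\rho^{-d} \le e^{-cd/(2L)}$ and $1/(\rho-1) = O(L)$, so the Chebyshev error is at most $O(L) \cdot e^{-cd/(2L)}$. Demanding this be less than $\eps$ forces $d = O(L \log(L/\eps))$, as claimed. I expect the only mildly delicate step to be verifying the uniform bound on $|\tilde\phi|$ over $E_\rho$ and that the ellipse misses the poles at the chosen $\rho$; the Chebyshev estimate itself and the degree calculation are mechanical once this geometric fact is in hand.
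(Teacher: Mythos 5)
Your proof is correct. The paper does not actually prove this lemma itself; it imports it verbatim from Livni, Shalev-Shwartz, and Shamir (\cite{lss-train14}, Lemma~2), so there is no in-paper argument to compare against. Your Chebyshev/Bernstein-ellipse argument is the standard complex-analytic route to this bound and is in line with the cited source: the sigmoid is meromorphic with nearest poles at $\pm i\pi$, so after rescaling $[-L,L]$ to $[-1,1]$ a Bernstein ellipse with parameter $\rho = 1 + \Theta(1/L)$ avoids all poles of the rescaled function; moreover on that ellipse $|\mathrm{Im}(Lx)|$ stays bounded away from odd multiples of $\pi$, so $|1+e^{-Lx}|^2 = \bigl(e^{-\mathrm{Re}(Lx)} + \cos\mathrm{Im}(Lx)\bigr)^2 + \sin^2\mathrm{Im}(Lx)$ is bounded below by an absolute constant and $M = O(1)$. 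The classical estimate $\|f - p_d\|_\infty \le 2M\rho^{-d}/(\rho-1)$ together with $\rho - 1 = \Theta(1/L)$ then yields $d = O(L\log(L/\eps))$ as claimed. The two points you flag as delicate (the ellipse missing the poles at the chosen $\rho$, and the uniform bound on the rescaled sigmoid over $E_\rho$) both check out.
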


\begin{proof}[Proof of Corollary~\ref{cor:realizable}]
    Set $\eps' = \eps/a$. By Lemma~\ref{lem:sigmoid-approx}, there is a
    polynomial $p$ of degree $O(b\log(b/\eps'))$ such that $|p(t) - \phi(t)|
    < \eps$ for all $t \in [-b,b]$. Therefore, for every $u \in \bbR^n$ with
    $\|u\|_2 \le b$ and every $x \in S^{n-1}$, we have $|p(u\cdot x) -
    \phi(u\cdot x)| \le \eps'$. Hence, $|\sum_i a_i (p(u_i\cdot x) -
    \phi(u_i\cdot x)| < \eps$ whenever $\sum_i |a_i| < a$ and each $u_i$
    satisfies $\|u_i\|_2 \le b$. In particular, the functions computed by the
    networks described in the statement of the corollary can be approximated to
    within $\eps$ error by polynomials of degree $O(b\log(ab/\eps))$. The
    Corollary now follows from Theorem~\ref{thm:sigmoid}.
\end{proof}

Finally, we prove Theorem~\ref{thm:spectral-bias}.

\begin{proof}[Proof of Theorem~\ref{thm:spectral-bias}]
    In Eqs.~\eqref{eq:c_delta} and~\eqref{eq:c_m} defining $\delta$ and $m$, we defined $m$ in
    terms of $\delta$, but what matters for the proof of Lemma~\ref{lem:main-technical} is that $m
    = \Omega(\delta^{-2})$, and if the number of units $m$ is increased beyond the bound necessary
    for Theorem~\ref{thm:general}, we may decrease $\delta$ to preserve the relationship between
    $m$ and $\delta$ (and similarly $t$). So without loss of generality, we may take
    $\delta\|g\|_2t^2 > m^{-\Omega(1)}$. Therefore, taking $m =
    n^{O(\ell)}\mbox{poly}(1/\eps)$, for $j \in \{k, \ell\}$ 
    \[
        \alpha_{n,j}(\phi)^2\|H_i^{(j)}\|_2 < n^{-\Omega(j)}\eps < \delta\|g\|_2t^2\,.
    \]
    Using Lemma~\ref{lem:main-technical}, we compute
    \begin{align*}
        r_i^{(k,\ell)} = \frac{\|\Delta_i^{(k)}\|\|H_i^{(\ell)}\|}{\|\Delta_i^{(\ell)}\|\|H_i^{(k)}\|}
        &\ge \frac{(\|\mcJ^2 H_i^{(k)}\| - O(\delta\|g\|_2t^2))\|H_i^{(\ell)}\|}{(\|\mcJ^2
        H_i^{(\ell)}\| + O(\delta\|g\|_2t^2))\|H_i^{(k)}\|} \\
        &= \frac{(\alpha_{n,k}(\phi)^2 \|H_i^{(k)}\| - O(\delta\|g\|_2t^2))\|H_i^{(\ell)}\|}{(
        \alpha_{n,\ell}(\phi)^2\|H_i^{(\ell)}\| + O(\delta\|g\|_2t^2))\|H_i^{(k)}\|} \\
        &= \Omega\left(\frac{\alpha_{n,k}(\phi)^2}{\alpha_{n,\ell}(\phi)^2}\right) \\
        &= n^{\Omega(\ell - k)}\,. \qedhere
    \end{align*}
\end{proof}

\subsection{Proof of Lemma~\ref{lem:main-technical}}\label{sec:main-lemma}

We now prove Lemma~\ref{lem:main-technical}.
Essentially, the lemma states that 
the operator $T_Z$ approximates $\mcJ$ for sufficiently large sets $Z$. 
We will prove Lemma~\ref{lem:main-technical} via a sequence of gradually improving estimates of the
approximation of $\mcJ$ by $T_Z$.
Lemma~\ref{lem:fundamental-L2} gives a very
general approximation, which we use to prove the finer
approximation described in Lemma~\ref{lem:TXW-props}.

\begin{lemma}\label{lem:fundamental-L2}
    Let $f:\bbR^n \to \bbR$ and $u \in \bbR^n$, and let $\eta, p > 0$.
    There is some
    \[
        \ell = O\left(\frac{\|f\|_2^2 +
        \eta\|f\|_{\infty}}{\eta^2}\log\left(\frac{1+\|f\|_{\infty}}{\eta p}\right)\right)
    \]
    such that if $Z \subseteq \bbR^n$ is a set
    of $\ell$ independent random points drawn from $D$, 
    then with probability at least $1 - p$, we have both
        $\|T_Z(f) - \mcJ(f)\|_2 \le \eta$
    and
        \[\Prob_{u \sim D}\left(|T_Z(f)(u) - \mcJ(f)(u)| >
        \eta/2\right) < p\,.\]
\end{lemma}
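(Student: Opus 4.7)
The plan is to derive both conclusions from a single measure bound, by applying Bernstein pointwise and lifting to an average via Fubini and Markov. For any fixed $u$, the quantity $T_Z(f)(u) = \frac{1}{\ell}\sum_{z \in Z} f(z)\phi(u\cdot z)$ is an empirical mean of $\ell$ iid random variables with common mean $\mcJ(f)(u)$. Each summand is bounded by $\|f\|_\infty$ in absolute value (since $\|\phi\|_\infty \le 1$) and has variance at most $\Exp_z[f(z)^2\phi(u\cdot z)^2] \le \|f\|_2^2$. Bernstein's inequality therefore gives, for every fixed $u$,
\[
q := \Prob_Z\!\left(|T_Z(f)(u) - \mcJ(f)(u)| > \eta/2\right) \le 2\exp\!\left(-\Omega\!\left(\frac{\ell\eta^2}{\|f\|_2^2 + \eta\|f\|_\infty}\right)\right).
\]

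The next step is to pass from this pointwise tail bound to a bound on the $D$-measure of the exceptional set $B_Z = \{u : |T_Z(f)(u) - \mcJ(f)(u)| > \eta/2\}$. Fubini yields $\Exp_Z[\Prob_{u \sim D}(u \in B_Z)] \le q$, and then Markov gives $\Prob_Z(\Prob_u(u \in B_Z) > q/p') \le p'$ for every $p' > 0$. Picking $p' = p/2$ and requiring $q/p' \le p$ proves conclusion (b) with probability at least $1 - p/2$. For the $L^2$ bound in (a), I would observe that $|T_Z(f)(u) - \mcJ(f)(u)| \le 2\|f\|_\infty$ pointwise, so on the same good event
\[
\|T_Z(f) - \mcJ(f)\|_2^2 \le (\eta/2)^2 + 4\|f\|_\infty^2 \cdot (q/p'),
\]
which is at most $\eta^2$ provided we additionally require $q/p' \le \eta^2/(16\|f\|_\infty^2)$.

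Combining the two requirements forces $q \le p\cdot\min(p,\,\eta^2/(1+\|f\|_\infty)^2)$ up to a constant; inverting the Bernstein exponential then produces exactly the claimed
\[
\ell = O\!\left(\frac{\|f\|_2^2 + \eta\|f\|_\infty}{\eta^2}\log\frac{1+\|f\|_\infty}{\eta p}\right).
\]
The only delicate point is the extra $(1+\|f\|_\infty)/\eta$ inside the logarithm, which arises precisely because (a) is derived from (b) via the crude $L^\infty$ bound rather than being proved separately. An equally workable alternative would be a vector-valued (Hilbert-space) Bernstein inequality applied to $g_z(u) := f(z)\phi(u\cdot z) \in L^2(S^{n-1})$, which satisfies $\|g_z\|_2 \le \|f\|_\infty$ and $\Exp\|g_z\|_2^2 \le \|f\|_2^2$. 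Beyond this bookkeeping the argument is routine Bernstein concentration and no single step is a serious obstacle.
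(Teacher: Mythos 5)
Your proposal follows the paper's proof essentially line for line: fix $u$, apply a scalar Bernstein bound to the iid summands $f(z)\phi(u\cdot z)$ with the same variance proxy $\|f\|_2^2$ and sup bound $O(\|f\|_\infty)$, then pass from the pointwise tail bound to a bound on $\Prob_u$ of the exceptional set via Fubini and Markov's inequality, and finally derive the $L^2$ estimate from the measure bound plus the crude pointwise bound $|T_Z(f)(u) - \mcJ(f)(u)| \le 2\|f\|_\infty$. The only cosmetic deviation is your reservation of half the failure probability ($p'=p/2$), which is unnecessary since the $L^2$ bound is deterministic given the measure bound; the paper absorbs the constants into its choice of $p_0 = \eta^2 p^2/(4(1+2\|f\|_\infty)^2)$ instead. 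The closing remark about a Hilbert-space Bernstein inequality is a reasonable alternative route but is not developed, and the bookkeeping you carried out is the paper's actual argument.
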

\begin{proof}
    Without loss of generality assume $\eta < 1$ and 
    let $p_0 = \eta^2p^2/(4(1 + 2\|f\|_{\infty})^2)$.
    Fix $u \in \bbR^n$. Let $z_1,\ldots, z_{\ell}$ be drawn independently from $D$, and
    let $\zeta_i$ denote the random variable $f(z_i)\phi(u\cdot z_i) - \mcJ(f)(u)$ for $1 \le i \le \ell$.
    We have $\Exp(\zeta_i) = 0$ by definition of $\mcJ$. Since $\|\phi\|_{\infty} \le 1$, we have by H\"older's
    inequality
    \[
        \Var(\zeta_i) \le \|\phi\|_{\infty}^2\|f\|_2^2 \le
        \|f\|_2^2\,.
    \]
    Furthermore, by Proposition~\ref{prop:T-Linf},
    \[
        |\zeta_i| \le \|\phi_{\infty}\|f\|_{\infty} + \|\phi\|_2\|f\|_2 \le 2\|f\|_{\infty}\,.
    \]
   By a Bernstein bound, we therefore have
    \begin{align*}
        \Prob_Z\left(
        \left|\frac{1}{\ell}\sum_{i=1}^{\ell}\zeta_i\right| > \eta/2\right) &<
        2\exp\left(-\Omega\left(\frac{\ell\eta^2}{\|f\|_2^2 +
        \eta\|f\|_{\infty}}\right)\right) < p_0\,,
    \end{align*}
    with the second inequality holding for an appropriate choice of the constant hidden in the
    definition of $\ell$.

    Let $Z = \{z_1,\ldots, z_{\ell}\}$, so 
    \[
        \frac{1}{\ell}\sum_{i=1}^{\ell}\zeta_i = T_Z(f)(u) - \mcJ(f)(u)\,.
    \]
    Let $B(u,Z) = 1$ if $|T_Z(f)(u) - \mcJ(f)(u)| > \eta/2$ and $0$ otherwise.
    By the preceding inequality, we have $\Exp_{u,Z}(B(u,Z)) < p_0$.
    Therefore, by Markov's inequality, the probability over the choice of $Z$
    that $\Exp_u(B(u,Z)) > p_0/p$ is at most $p$. Hence, with probability $1-p$
    over the choice of $Z$, we have
    \begin{equation}\label{eq:fL2-good}
        \Prob_u(|T_Z(f)(u) - \mcJ(f)(u)| > \eta/2) = \Exp_u(B(u,Z)) <
        \frac{\eta^2p}{4(1+ 2\|f\|_{\infty})^2}\,.
    \end{equation}
    In particular, the second inequality of the present lemma holds.
    
    We now complete the proof of the first inequality.  For all choices of $Z$ and $u$, using
    Proposition~\ref{prop:T-Linf}, we have
    \[
       |T_Z(f)(u) - \mcJ(f)(u)| < |T_Z(f)(u)| + |\mcJ(f)(u)| \le
        2\|f\|_{\infty}\,.
    \]
    Combined with Eq.~\eqref{eq:fL2-good}, we have
    \[
        \|T_Z(f) - \mcJ(f)\|_2^2 \le (2\|f\|_{\infty})^2\Prob(|T_Z(f)(u) - \mcJ(f)(u)| >
        \eta/2) + \eta^2/4 \le \eta^2\,.\qedhere
    \]
\end{proof}

We denote by $\phi_x : \bbR^n \to \bbR$ the function $\phi_x(u) =
\phi(u\cdot x)$.

In the following Lemma~\ref{lem:TXW-props} we prove a finer-tuned
approximation of the operator $\mcJ$ by both $T_X$ and $T_W$. Since
Lemma~\ref{lem:fundamental-L2} doesn't give a sufficiently tight approximation
between the operators simultaneously for every $L^2$-function on $S^{n-1}$, we
restrict our attention to the subspace we care about, namely, the functions
spanned by the $\phi_x$ for $x \in W\cup X$.

\begin{lemma}\label{lem:TXW-props}
    With probability $1-1/m$ over the choice of $W$ and $X$, the following 
    statements are all true:
    \begin{enumerate}
        \item[(1)] $\|T_Xg - \mcJ g\|_2 \le \delta$;
        \item[(2)] For all $u \in W$, we have $\|T_X\phi_u - \mcJ
            \phi_u\|_2 \le \delta$;
        \item[(3)] For all $x \in X$ we have $\|T_W\phi_x - \mcJ
            \phi_x\|_2 \le \delta$
        \item[(4)] For all $x\ne y \in X$, we have $|T_W(\phi_x)(y) -
            \mcJ(\phi_x)(y)| \le \delta/2$;
    \end{enumerate}
\end{lemma}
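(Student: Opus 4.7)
My plan is to derive each of the four estimates from an appropriate instantiation of Lemma~\ref{lem:fundamental-L2}, exploiting the crucial fact that $W$ and $X$ are drawn independently from $D$, and then combine them via union bounds. For item (1), I apply Lemma~\ref{lem:fundamental-L2} directly to $f = g$ with $Z = X$, $\eta = \delta$, and failure probability $p = 1/(4m)$; the choice of $m$ in Eq.~\eqref{eq:c_m} guarantees that $|X| = m$ is at least the required sample size $\ell$. For item (2), I condition on $W$ and, for each of the $m$ fixed vectors $u \in W$, apply Lemma~\ref{lem:fundamental-L2} with $f = \phi_u$ (using $\|\phi_u\|_\infty, \|\phi_u\|_2 \le 1$), $Z = X$, $\eta = \delta$, and $p = 1/(4m^2)$; a union bound over $u \in W$ yields conditional failure probability at most $1/(4m)$. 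Item (3) is entirely symmetric: condition on $X$ and apply the lemma with $f = \phi_x$, $Z = W$, $p = 1/(4m^2)$ for each $x \in X$.

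The main subtlety lies in item (4), which is a \emph{pointwise} rather than an $L^2$ estimate and so is not immediately supplied by Lemma~\ref{lem:fundamental-L2}. I handle it by a direct Bernstein argument of the same flavor as the proof of that lemma. Since $X$ and $W$ are independent, for any fixed pair $x \ne y \in X$ the deviation
\[
    T_W(\phi_x)(y) - \mcJ(\phi_x)(y)
    = \frac{1}{m}\sum_{w \in W}\bigl(\phi(w\cdot x)\phi(w\cdot y) - \mcJ(\phi_x)(y)\bigr)
\]
is a mean of $m$ i.i.d.\ zero-mean summands, each bounded by $2$ in absolute value and of variance at most $\|\phi\|_\infty^4 \le 1$. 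Bernstein's inequality therefore yields
\[
    \Prob_W\bigl(|T_W(\phi_x)(y) - \mcJ(\phi_x)(y)| > \delta/2\bigr) \le 2\exp\bigl(-\Omega(m\delta^2)\bigr),
\]
which is at most $1/(4m^3)$ by the choice of $m$ in Eq.~\eqref{eq:c_m} (where $m\delta^2$ exceeds $\log m$ up to constants). A union bound over the at most $m^2$ ordered pairs $(x,y) \in X \times X$ with $x\ne y$ contributes failure probability at most $1/(4m)$.

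Summing the four failure contributions gives the desired overall bound of $1/m$. The main obstacle, beyond straightforward bookkeeping, is keeping the conditioning structure consistent across items: each of (2)--(4) involves functions that depend on one of the random sets $W$ or $X$, and the argument works precisely because the concentration inequality can in every case be applied to the \emph{other} sample set, which is fresh and independent. Once this is recognized, the entire lemma reduces to four applications of concentration for $m$-sample means.
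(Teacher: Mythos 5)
Your proof is correct and tracks the paper's approach very closely for items (1)--(3): apply Lemma~\ref{lem:fundamental-L2} to the appropriate fixed function with the independent sample set as $Z$, then union bound. For item (4), you diverge mildly. The paper routes the pointwise estimate through the \emph{second} conclusion of Lemma~\ref{lem:fundamental-L2}, namely that with high probability over $Z$, the quantity $|T_Z(f)(u) - \mcJ(f)(u)|$ is small for all but a small-measure set of $u$; it then applies Markov's inequality (to show $W$ is ``good''), and takes two successive union bounds over $x \in X$ and $y \in X$. You instead note that, conditional on $X$, the quantity $T_W(\phi_x)(y) - \mcJ(\phi_x)(y)$ is an i.i.d.\ average over $W$ of bounded, mean-zero terms, and apply Bernstein directly followed by a single union bound over the $O(m^2)$ pairs. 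Your route is arguably cleaner: it uses the independence of $W$ from $X$ once and for all, and avoids the somewhat delicate conditioning in the paper's argument where one union-bounds over $y \in X$ after having already conditioned on an event involving all of $X$. The underlying concentration inequality is the same (Bernstein, which is also what drives Lemma~\ref{lem:fundamental-L2}), so this is a simplification of the same proof rather than a genuinely different one. One small bookkeeping point worth flagging: when invoking Lemma~\ref{lem:fundamental-L2} with failure probability $p = 1/(4m^2)$, the required sample size $\ell$ grows as $\log(1/p) \sim \log m$, so you should note (as the paper does) that the constant $c_m$ in Eq.~\eqref{eq:c_m} is chosen large enough to absorb this $\log m$ factor.
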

\begin{proof}
We will use Markov's inequality to bound the probability that
$T_W(\phi_x)$ is far from $\mcJ(\phi_x)$ at a random input, followed by a union bound over the
choice of $W$ and the choice of $X$. We require the constant $c_m$ to be sufficiently large.

    In detail, we have
        $m \ge c_m\log(\|g\|_{\infty}/\delta)(\|g\|_{\infty}/\delta^2)$, so
for any fixed $k, c_0 > 0$, we can set $c_m$ sufficiently large that there is some
    $p < 1/m^k$ also satisfying
    \[
        m \ge c_0\frac{\|g\|_{\infty}}{\delta^2}\log\left(\frac{\|g\|_{\infty}}{\delta p}\right)\,.
    \]
The same statement also holds (for appropriate choice of $c_m$) with $\phi$ in place
    of $g$, since $\|\phi\|_{\infty} \le 1$.
    Then since $|W| \ge m$, by Lemma~\ref{lem:fundamental-L2}, for any fixed $x
    \in \bbR^n$, we have with probability $1-1/(16m^3)$ over the choice of $W$ that
    \begin{equation}\label{eq:btW-x}
    \|T_W\phi_x - \mcJ\phi_x\|_2 < \delta\,.
    \end{equation}
    and
    \begin{equation}\label{eq:btW-x2}
        \Prob_{z \sim D}\left(|T_W(\phi_x)(z) - \mcJ(\phi_x)(z)|
        > \delta/2\right) <
        1/(16m^3)\,.
    \end{equation}
    Therefore, by Markov's inequality, with probability $1 - 1/(2m)$ over the choice of $W$,
    Eqs.~\eqref{eq:btW-x} and~\eqref{eq:btW-x2} both hold for a random $x \sim
    D$ with probability $1-1/(8m^2)$. 

    Similar to Eq.~\eqref{eq:btW-x}, with $X$ in place of $W$ and $g$ in place
    of $\phi_x$, statement (1) of the present
    lemma holds with probability $1 - 1/(16m^3) > 1-1/(8m)$ over the choice of $X$. Furthermore,
    for any fixed $X$, taking a union bound over $W$, we have with probability
    $1-m/(16m^3) > 1 - 1/(8m)$ that statement (2) holds.

    Now suppose $W$ is such that Eq.~\eqref{eq:btW-x} holds for a random
    $x \sim D$ with
    probability at least $1-1/(8m^2)$; as we have already observed, this is the
    case with probability at least $1-1/(2m)$ over the choice of $W$. Then by a
    union bound over $X$, it then follows that with probability
    $1-1/(8m)$ over the choice of $X$, statement (3) holds. Finally,
    suppose similarly that $W$ is such that Eq.~\eqref{eq:btW-x2} holds for
    a random $x \sim D$ with probability at least $1-1/(8m^2)$. By a union bound, we with
    probability at least $1 - 1/(8m)$ that for all $x \in X$,
    \[
        \Prob_{z \sim D}\left(|T_W(\phi_x)(z) - \mcJ(\phi_x)(z)|
        > \delta/2\right) < 1/(16m^3)\,.
    \]
    Now, fixing such an $x \in X$, a union bound over all $y \in X$ with $y\ne x$ gives that
    \[
        |T_W(\phi_x)(y) - \mcJ(\phi_x)(y)| \le \delta/2
    \]
    with probability $1 - 1/(16m^2)$. Taking another union bound over all $x \in X$, we 
    get statement (4) with probability $1 - 1/(16m)$ as well.
    Overall, statements (1)--(4) hold with probability at least $1-1/m$.
\end{proof}

For the remainder of this section, we use the notation
    $\alpha_i = \max_{u \in W}|a_i(u)|$
and
    $\beta_i = \max_{x \in X}|H_i(x)|$.

We focus on the second step of our analysis, as outlined in
Section~\ref{sec:approach}, bounding
    the rate at which error from the approximations of $\mcJ$ described
    above accumulates over multiple iterations of GD.  More precisely, we control
    the norm of $f$, measured via $\alpha_i$ and
$\beta_i$. The statements are given in the following two lemmas.

\begin{lemma}\label{lem:T2-approx}
    Suppose statements (1)--(3) of Lemma~\ref{lem:TXW-props} all hold. Then for all
    $i \in \bbN$, we have both
        $\|T_XH_i - \mcJ H_i\|_2 \le \delta(m\alpha_i + 1)$
    and
        $\|\Delta H_i - \mcJ^2H_i\|_2 \le
        \delta\left(\beta_i + m\alpha_i + 1\right)$.
\end{lemma}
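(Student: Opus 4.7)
The plan is to exploit the linearity of $T_X$ and $\mcJ$ in their function argument together with the explicit decomposition $H_i = g - f_i = g - \sum_{u \in W} a_i(u)\phi_u$, and, for part (2), a carefully chosen splitting of the two-operator error. Both parts then reduce to triangle-inequality applications of statements (1)--(3) of Lemma~\ref{lem:TXW-props}.

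For part (1), I will distribute $T_X - \mcJ$ across the expansion of $H_i$, writing
\[
T_X H_i - \mcJ H_i = (T_X g - \mcJ g) - \sum_{u \in W} a_i(u)\bigl(T_X\phi_u - \mcJ\phi_u\bigr).
\]
Statement (1) bounds the first term by $\delta$ and statement (2) bounds each of the $m$ summands by $\delta\,|a_i(u)|$, so the triangle inequality yields $\|T_X H_i - \mcJ H_i\|_2 \le \delta(1 + m\alpha_i)$ immediately.

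For part (2), the key step is the decomposition
\[
T_W T_X H_i - \mcJ^2 H_i = (T_W - \mcJ)(T_X H_i) + \mcJ(T_X H_i - \mcJ H_i).
\]
On the second summand, Proposition~\ref{prop:T-Linf} gives $\|\mcJ G\|_2 \le \|\mcJ G\|_\infty \le \|\phi\|_2\|G\|_2 \le \|G\|_2$ for any $G$ on $S^{n-1}$ (the $L^2 \le L^\infty$ step uses that $D$ is a probability measure and $\|\phi\|_\infty \le 1$), so part (1) directly gives $\|\mcJ(T_X H_i - \mcJ H_i)\|_2 \le \delta(1 + m\alpha_i)$. For the first summand, I will use the definition of $T_X$ to write $T_X H_i = (1/|X|)\sum_{z\in X} H_i(z)\phi_z$ and pull $T_W - \mcJ$ inside the finite sum, so that statement (3) applies term by term: $\|(T_W - \mcJ)(T_X H_i)\|_2 \le (1/|X|)\sum_{z\in X}|H_i(z)|\,\delta \le \delta\beta_i$. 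Combining the two estimates gives the desired $\delta(\beta_i + m\alpha_i + 1)$.

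The main subtlety is the choice of decomposition in part (2). The symmetric-looking alternative $T_W(T_X H_i - \mcJ H_i) + (T_W - \mcJ)\mcJ H_i$ is not usable with the tools we have: $T_W$ is only an empirical $m$-point operator with no useful $L^2 \to L^2$ bound on arbitrary functions, and statements (2)--(3) of Lemma~\ref{lem:TXW-props} control $T_W - \mcJ$ only on the very specific ridge functions $\phi_z$ with $z$ in the sample sets, not on a generic $L^2$ function such as $\mcJ H_i$. Placing $T_X$ on the inside of $T_W - \mcJ$ and $\mcJ$ on the outside of $T_X - \mcJ$ is precisely what lets the $L^2 \to L^\infty$ smoothing bound from Proposition~\ref{prop:T-Linf} do the work on one summand while the ridge-function expansion of $T_X H_i$ handles the other.
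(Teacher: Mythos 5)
Your proof is correct and follows essentially the same route as the paper's: part (1) distributes $T_X - \mcJ$ across $H_i = g - \sum_u a_i(u)\phi_u$ and applies Lemma~\ref{lem:TXW-props}~(1)--(2), and part (2) uses the decomposition $(T_W - \mcJ)(T_X H_i) + \mcJ(T_X - \mcJ)H_i$ with Lemma~\ref{lem:TXW-props}~(3) on the first term and the contractivity $\|\mcJ h\|_2 \le \|h\|_2$ (via Proposition~\ref{prop:T-Linf}) on the second. The only difference is that you spell out the contraction estimate that the paper states without proof, and you add a useful remark on why the alternative splitting would not be bounded by the available estimates.
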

\begin{proof}
    By Lemma~\ref{lem:TXW-props}~(3), since $T_W$ and $\mcJ$ are linear operators,
    \begin{align*}
        \left\|(T_W - \mcJ)(T_XH_i)\right\|_2 &= \left\|(T_W -
        \mcJ)\left(\frac{1}{m}\sum_{x \in X} 
        H_i(x)\phi_x\right)\right\|_2 \\
        &\le \frac{1}{m}\sum_{x \in X} \beta_i
        \|(T_W-\mcJ)\phi_x\|_2 \le \delta \beta_i\,.
    \end{align*}
    Similarly, by Lemma~\ref{lem:TXW-props}~(2), we have
    \begin{align*}
        \left\|(T_X - \mcJ)f_i\right\|_2 &= \left\|(T_X -
        \mcJ)\left(\sum_{u \in
        W}a_i(u)\phi_u\right)\right\|_2 \\
        &\le \sum_{u \in W} \alpha_i\|(T_X-\mcJ)\phi_u\|_2
        \le m\delta \alpha_i\,.
    \end{align*}
    Finally, by Lemma~\ref{lem:TXW-props}~(1), we have
    \begin{align*}
        \|(T_X - \mcJ)H_i\|_2 &=
        \|(T_X- \mcJ)g - (T_X - \mcJ)f_i\|_2 \\
        &\le \|T_Xg - \mcJ g\|_2 + \|(T_X - \mcJ)f_i\|_2 \\
        &\le \delta + m\delta\alpha_i\,.
    \end{align*}
    By Proposition~\ref{prop:f-from-T}, we have $\Delta H_i = T_WT_X H_i$. Therefore, since
    $\|\mcJ(h)\|_2 \le \|h\|_2$ for all functions $h$, we have altogether that
    \begin{align*}
        \|\Delta H_i - \mcJ^2H_i\|_2 &= 
        \|T_WT_X H_i - \mcJ^2 H_i\|_2 \\
        &\le \|\mcJ T_XH_i - \mcJ^2H_i)\|_2 + \delta \beta_i \\
        &\le \|\mcJ(T_X - \mcJ)H_i\|_2 + \delta \beta_i \\
        &\le \|(T_X - \mcJ)H_i\|_2 + \delta \beta_i \\
        &\le \delta\left(\beta_i + m\alpha_i + 1\right)\,.\qedhere
    \end{align*}
\end{proof}

\begin{lemma}\label{lem:infty-approx}
    For all $i \ge 0$, we have $\alpha_{i+1} \le \alpha_i + \beta_i/m$.
Furthermore, if statement (4) of Lemma~\ref{lem:TXW-props} holds, then for all
$i \ge 0$, we have 
    \[
        \beta_{i+1} \le \beta_i + \|H_i\|_2 + \delta(\beta_i/2 + m\alpha_i + 1) + 2\beta_i/m\,.
    \]
\end{lemma}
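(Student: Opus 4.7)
\medskip

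The plan is to prove the two inequalities separately, with the first being essentially definitional and the second requiring us to carefully track how the ``diagonal'' term $y = x$ and the ``off-diagonal'' terms $y \ne x$ contribute when we replace $T_W$ by $\mcJ$ using statement~(4) of Lemma~\ref{lem:TXW-props}.

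For the first inequality, I would just unfold the recursion: since $a_{i+1}(u) = a_i(u) + (1/m) T_X H_i(u)$ and $\|\phi\|_\infty \le 1$, we have
\[
    |T_X H_i(u)| = \Bigl|\frac{1}{m} \sum_{x \in X} H_i(x)\, \phi(u \cdot x)\Bigr| \le \max_{x \in X} |H_i(x)| = \beta_i\,,
\]
so $|a_{i+1}(u)| \le \alpha_i + \beta_i/m$ for every $u \in W$. Taking the max over $u \in W$ gives $\alpha_{i+1} \le \alpha_i + \beta_i/m$.

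For the second inequality, fix $x \in X$. Using Proposition~\ref{prop:f-from-T}, $f_{i+1}(x) - f_i(x) = T_W T_X H_i(x)$, and by swapping the order of summation,
\[
    T_W T_X H_i(x) = \frac{1}{m}\sum_{u \in W} T_X H_i(u)\, \phi(x \cdot u) = \frac{1}{m}\sum_{y \in X} H_i(y)\, T_W(\phi_x)(y)\,,
\]
where $\phi_x(u) = \phi(x \cdot u)$. I would split the inner sum into the $y = x$ term and the $y \ne x$ terms. For $y = x$, we have $|T_W(\phi_x)(x)| \le \|\phi\|_\infty^2 \le 1$, contributing at most $\beta_i/m$. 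For $y \ne x$, statement~(4) of Lemma~\ref{lem:TXW-props} lets me replace $T_W(\phi_x)(y)$ by $\mcJ(\phi_x)(y)$ at a pointwise cost of $\delta/2$, contributing an additional error of at most $(1/m) \cdot m \cdot \beta_i \cdot (\delta/2) = \delta\beta_i/2$. After this replacement, the remaining main term is $(1/m)\sum_{y \in X,\, y \ne x} H_i(y)\, \mcJ(\phi_x)(y)$; adding back the $y = x$ term (bounded by $\beta_i/m$ using Proposition~\ref{prop:T-Linf}) recovers $\mcJ(T_X H_i)(x)$, since by definition
\[
    \mcJ(T_X H_i)(x) = \Exp_u\bigl[T_X H_i(u)\, \phi(x \cdot u)\bigr] = \frac{1}{m}\sum_{y \in X} H_i(y)\, \mcJ(\phi_x)(y)\,.
\]

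Now I would bound $|\mcJ(T_X H_i)(x)| \le \|T_X H_i\|_2$ by Proposition~\ref{prop:T-Linf}, and use Lemma~\ref{lem:T2-approx} to get $\|T_X H_i\|_2 \le \|\mcJ H_i\|_2 + \delta(m\alpha_i + 1) \le \|H_i\|_2 + \delta(m\alpha_i + 1)$ (the last step since $\mcJ$ is a contraction on $L^2$ by Lemma~\ref{lem:T-expansion}). Summing all four contributions gives
\[
    |T_W T_X H_i(x)| \le \frac{\beta_i}{m} + \frac{\delta \beta_i}{2} + \frac{\beta_i}{m} + \|H_i\|_2 + \delta(m\alpha_i + 1)\,,
\]
and since $H_{i+1}(x) = H_i(x) - (f_{i+1}(x) - f_i(x))$, the triangle inequality $|H_{i+1}(x)| \le \beta_i + |T_W T_X H_i(x)|$ followed by taking the max over $x \in X$ yields exactly the stated bound. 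The main subtlety is that statement~(4) only gives a pointwise bound for $y \ne x$, so one must separate the diagonal term and notice that reassembling the off-diagonal sum into $\mcJ(T_X H_i)(x)$ lets us reuse Lemma~\ref{lem:T2-approx}; this is the one place where the ``$2\beta_i/m$'' correction in the claim comes from.
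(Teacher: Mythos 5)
Your proof is correct and takes essentially the same route as the paper's: the first inequality is a direct unfolding, and for the second you split $T_WT_XH_i(y)$ into a main term $\mcJ T_X H_i(y)$ (controlled via Proposition~\ref{prop:T-Linf} and the first part of Lemma~\ref{lem:T2-approx}) plus the discrepancy $(T_W-\mcJ)T_XH_i(y)$, handling the off-diagonal terms with statement~(4) and the diagonal term crudely. The only cosmetic difference is that the paper bounds the single diagonal contribution $|(T_W-\mcJ)(\phi_y)(y)|\le 2$ in one step, whereas you account for the two halves $|T_W(\phi_y)(y)|\le 1$ and $|\mcJ(\phi_y)(y)|\le 1$ at two separate points; the resulting $2\beta_i/m$ is identical.
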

\begin{proof}[Proof of Lemma~\ref{lem:infty-approx}]
    For the first inequality, we have by definition that for all $u \in W$
    \[
        a_{i+1}(u) = a_i(u) + \frac{1}{m^2}\sum_{x \in X}H_i(x)\phi(u\cdot x) \le \alpha_i + \beta_i/m\,.
    \]

For the second inequality, fix $y \in X$. Using statement~(4) of Lemma~\ref{lem:TXW-props} we compute
\begin{align*}
    |(T_W - \mcJ)H_i(y)| &= \left|(T_W - \mcJ)\left(\frac{1}{m}\sum_{x \in X}H_i(x)\phi_x\right)(y)\right| \\
        &\le \frac{1}{m}\sum_{x \in X}|H_i(x)(T_W-\mcJ)(\phi_x)(y)| \\
        &\le \frac{\beta_i}{m}\left(|(T_W - \mcJ)(\phi_y)(y)| + \sum_{y\ne x}|(T_W-\mcJ)(\phi_x)(y)|\right) \\
        &\le \frac{\beta_i}{m}\left(2 + (m-1)\frac{\delta}{2}\right)\,.
\end{align*}
By Proposition~\ref{prop:T-Linf} and the first statement of Lemma~\ref{lem:T2-approx},
\[
\|\mcJ T_XH_i\|_{\infty} \le \|T_XH_i\|_2 \le \|\mcJ H_i\|_2 + \delta(m\alpha_i + 1) \le \|H_i\|_2
    + \delta(m\alpha_i + 1)\,.
\]
Therefore, by Proposition~\ref{prop:f-from-T},
\begin{align*}
        |H_{i+1}(y)| &\le |H_i(y)| + |T_WT_XH_i(y)| \\
        &\le \beta_i + |\mcJ T_XH_i(y)| + |(T_W - \mcJ)H_i(y)| \\
        &\le \beta_i + \|H_i\|_2 + \delta(m\alpha_i + 1) + \frac{\beta_i}{m}\left(2 +
        (m-1)\frac{\delta}{2}\right) \\
        &\le \beta_i + \|H_i\|_2 + \delta(\beta_i/2 + m\alpha_i + 1) + 2\beta_i/m\,.\qedhere
\end{align*}
\end{proof}

Finally, we complete the proof.

\begin{proof}[Proof of Lemma~\ref{lem:main-technical}]
    Let $\overline{S} = \bbN\setminus S$.  We argue by induction that for all $i \le t$, as long as
    $\|H_i^{(S)}\| \ge \eps$, the following are all true:
    \begin{enumerate}
        \item[(1)] $\beta_i \le O((i+1)\|g\|_2)$
        \item[(2)] $\alpha_i \le O((i+1)^2\|g\|_2/m)$
        \item[(3)] $\|\Delta H_i - \mcJ^2(H_i)\|_2 \le O(\delta\|g\|_2(i+1)^2)$
        \item[(4)] $\|H_i\|_2 \le \|g\|_2$
    \end{enumerate}
    (So the statement of the lemma follows from estimate~(3) and $i \le t$.)

    Since $f_0 = 0$, $a_0 = 0$, and $H_0 = g$, the base cases are all trivial. Fix $0 < i
    \le t$ and assume estimates~(1)--(4) hold for all $j < i$.
    We first prove that estimate~(1) holds for $i$. Indeed, using the second statement of
    Lemma~\ref{lem:infty-approx}, and then simplifying using the inductive
    hypothesis for estimates~(1),~(2) and~(4), we have
    \begin{align*}
        \beta_i &\le \beta_{i-1} + 
        \|H_{i-1}\|_2 + \delta(\beta_{i-1}/2 + m\alpha_{i-1} + 1) + 2\beta_{i-1}/m \\
        &\le
        \beta_{i-1} + \|g\|_2 + O(\delta (i\|g\|_2 + (i+1)^2\|g\| + 1) + (i+1)\|g\|_2/m)\,.
    \end{align*}
This latter expression is at most $\beta_{i-1} + \|g\|_2 + O(\|g\|_2/t)$, using the
fact that $i < t$ and the definitions of $t$, $\delta$, and $m$ in
    Eqs.~\eqref{eq:c_t},~\eqref{eq:c_delta}, and~\eqref{eq:c_m}.
    Estimate~(1) now follows by induction.

    Similarly, from the first statement of Lemma~\ref{lem:infty-approx} and
    from estimate~(1), we have
    \[
        \alpha_i \le \alpha_{i-1} + \beta_i/m = \alpha_{i-1} + O((i+1)\|g\|_2/m),
    \]
which gives estimate~(2) by induction.

    By the second statement of Lemma~\ref{lem:T2-approx}, and using
    estimates~(1) and~(2), we have
    \[
        \|\Delta H_i - \mcJ^2H_i\|_2 \le
        \delta\left(\beta_i + m\alpha_i + 1\right) \le 
        O(\delta\|g\|_2(i+1)^2)\,,
    \]
    giving estimate~(3) by induction.
    Rewriting $\Delta H_i = H_{i+1} - H_i$, we have
    $\|H_{i+1}\|_2 \le \|H_i\| + O(\delta\|g\|_2(i+1)^2)$. Now by estimate~(4), $\|H_i - \mcJ^2
    H_i\|_2 \le \|g\|_2$, and so
    \[
        \|H_{i+1}\|_2^2 \le \|H_i - \mcJ^2 H_i\|_2^2 + O(\delta\|g\|_2^2(i+1)^2)\,.
    \]
    Combining with Lemma~\ref{lem:S-alpha-estimate},
    \[
        \|H_{i+1}\|_2^2 \le \|H_i\|_2^2 - \alpha^4\|H_i^{(S)}\|_2^2 + O(\delta\|g\|_2^2(i+1)^2)\,.
    \]
    For a sufficiently small choice of the constant $c_{\delta}$ defining $\delta$
    (Eq.~\eqref{eq:c_delta}), under the assumption that $\|H_i^{(S)}\|_2^2 \ge \eps$, we can take
    the $O(\delta\|g\|_2^2(i+1)^2)$ term to be at most $(\alpha^4/2)\|H_i^{(S)}\|_2^2$. Therefore,
    \[
        \|H_{i+1}\|_2^2 \le \|H_i\|_2^2 - \frac{\alpha^4}{2}\|H_i^{(S)}\|_2^2\,.
    \]
    The norm $\|H_i\|_2$ of the residual is therefore monotonically decreasing in $i$, giving estimate~(4).
\end{proof}

\section{Statistical query models}\label{sec:sq}

We now prove our statistical query lower boudns stated in 
Theorem~\ref{thm:sq-all}. First, we
remark on some of the difficulty of choosing an appropriate statistical
query model for regression problems. 
Let $D$ be a probability distribution over a domain $X$, and let
$f: X \to \bbR$ be an unknown concept. A natural and very general statistical
query model for the regression problem of learning $f$ might allow as queries
arbitrary measurable functions $h: X \times \bbR \to [0,1]$.  The SQ oracle
should respond to such a query with a value $v$ approximating $\Exp_{x \sim D}
h(x, f(x))$ to within the error tolerance. But such a model is in fact far too
general, as the following proposition shows.

\begin{proposition}\label{prop:measurable-queries}
    Let $D$ be a probability distribution over some domain $X$,
    and let $\mcC$ be a finite family of functions $f: X \to \bbR$ such that
    for every pair $f,g \in \mcC$, the probability over $x \sim D$ that $f(x) =
    g(x)$ is $0$. There is a measurable function $h$ such that for every $v \in
    [0,1]$,
    \[
        \left|\{ f \in \mcC \mid |\Exp_{x \sim D} h(x, f(x)) - v| < 
        \tau\}\right|
        = O(\tau |\mcC|)\,.
    \]
    In particular, with a constant error tolerance $\tau$, such a family $\mcC$
    can be learned using $\log |\mcC|$ statistical queries.
\end{proposition}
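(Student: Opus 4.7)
The plan is to exploit the almost-sure disjointness of the graphs of distinct $f \in \mcC$ to encode the identity of $f$ directly into the value $\Exp_{x \sim D} h(x, f(x))$. First I would enumerate $\mcC = \{f_1, \ldots, f_N\}$ where $N = |\mcC|$, and aim to construct a single measurable $h \colon X \times \bbR \to [0,1]$ satisfying $\Exp_{x \sim D} h(x, f_i(x)) = i/N$ for each $i$. Given such an $h$, the cardinality bound is immediate: at most $\lceil 2\tau N\rceil + 1 = O(\tau N)$ indices $i$ can have $i/N$ within $\tau$ of any fixed $v \in [0,1]$.

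To build $h$, I would take $\Gamma_i = \{(x, f_i(x)) : x \in X\}$ (the graph of $f_i$, measurable since $f_i$ is) and set
\[
    h(x,y) = \sum_{i=1}^{N} \frac{i}{N}\, \mathbf{1}_{\Gamma_i \setminus \bigcup_{j<i}\Gamma_j}(x,y)\,,
\]
which is measurable and takes values in $[0,1]$. For any fixed $f_i$, a union bound over the finitely many $j \ne i$ combined with the hypothesis $\Prob_{x \sim D}(f_i(x) = f_j(x)) = 0$ gives $h(x, f_i(x)) = i/N$ almost surely, so $\Exp_D h(x, f_i(x)) = i/N$ exactly.

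For the ``in particular'' clause, note that a single such $h$ used with a constant tolerance $\tau$ does not yet pin down $f_i$; one query only narrows the candidates to an $O(\tau N)$-size window. Instead I would use the same encoding idea to extract bits of the index: for $k = 1, \ldots, \lceil \log_2 N\rceil$, let $h_k(x,y)$ return the $k$-th bit of the unique $i$ with $(x,y) \in \Gamma_i$, and $0$ off $\bigcup_i \Gamma_i$. Then $\Exp_D h_k(x, f_i(x))$ equals that bit exactly, so any constant tolerance $\tau < 1/2$ recovers it unambiguously, and $\lceil \log_2 N\rceil$ queries determine $f_i$.

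I do not anticipate a serious obstacle here. The only mild subtlety is the measurability of the $\Gamma_i$ and of $h$, which is routine once each $f \in \mcC$ is taken to be measurable (as is implicit in the statement); everything else is arithmetic on the pushforward measures and a union bound.
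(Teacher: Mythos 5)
Your proposal is correct and follows essentially the same approach as the paper's proof: assign evenly-spaced target expectations to the functions in $\mcC$ and build a measurable $h$ achieving them exactly, using that distinct $f,g$ agree on a $D$-null set. The paper leaves the existence of $h$ and the ``in particular'' clause as routine, whereas you supply the explicit indicator-sum construction and the bit-extraction queries; these are fine elaborations, not a different method.
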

\begin{proof}
    We arbitrarily choose evenly spaced values $v_f \in [0,1]$ corresponding to
    each $f \in \mcC$. It suffices to find a measurable function $h(x, y)$ such
    that $h(x,f(x)) = v_f$ for all $x \in X$, excluding perhaps a subset of $X$
    of measure $0$ where two different functions have equal values. Since
    this condition specifies the value of $h$ only on a set of measure $0$ in
    $X \times \bbR$, it is straightforward to find such a function $h$.
\end{proof}

In particular, a statistical query model allowing arbitary measurable and bounded queries would
allow efficiently learning any finite class of real-valued functions, perhaps perturbed slightly
to ensure the functions disagree pairwise almost everywhere. 

Furthermore, arbitrary measurable query functions don't have concise
descriptions anyway. So it is reasonable to require ``well-behaved'' query
functions. We now describe three ``well-behaved'' statistical query settings, and prove strong
lower bounds against algorithms learning degree-$k$ polynomials on $S^{n-1}$ in each setting.

The approach taken in~\cite{SVWX17}, are revisited here, is to require the query function $h : X
\times \bbR \to [0,1]$ to be Lipschitz for every fixed $x \in X$. However, the Lipschitz-ness of
the query function is sensitive to the scale of the concepts $f: X \to \bbR$; a Lipschitz constant
of $1$ for the query function is far more meaningful when the concepts have bounded range than when
their outputs are spread across the entirety of $\bbR$. Hence, we consider
\emph{$L^{\infty}$-normalized Lipschitz queries} in Section~\ref{sec:lipschitz-lb}.

Another approach is to insist on noisy concepts. In the model considered above, the query function $h$
will receive inputs of the form $(x, f(x))$ where $x$ is drawn from a distribution over the input
space $X$ and $f: X \to \bbR$ is the unknown concept. In a noisy setting, it would be reasonable to
replace the query input with a noised version $(x, f(x) + \zeta)$ where $\zeta$ is drawn from a
noise distribution, such as a standard Gaussian. Clearly, the statistical oracle becomes weaker as
the variance of $\zeta$ increases, but as in the Lipschitz queries considered above, the
relationship between the strength of the oracle and the amount of noise is sensitive to the scale
of the concept $f: X \to \bbR$. Again, in this setting, we will consider
\emph{$L^{\infty}$-normalized queries in the presence of Gaussian noise of variance $1/\lambda^2$}.
In fact, since the noise on the input has the effect of smoothing the expectation of the query
function, this setting amounts to a special case of the more general $L^{\infty}$-normalized
Lipschitz queries described previously. However, the noise approach allows for binary query
functions $h: X \to \{0, 1\}$, so this is the model we consider for lower bounds on $\oneSTAT$,
given in Section~\ref{sec:sq-noise}.

A final plausible query model would restrict the \emph{form} of the queries.
As noted in the proof of Proposition~\ref{prop:measurable-queries}, the only
important values of a query function $h : X \times \bbR \to [0,1]$ are on the
zero measure subset $\{ (x, f(x)) : f \in \mcC, x \in X\} \subseteq X \times
\bbR$. Hence, instead of working in the product space $X \times \bbR$ where
only a measure-zero set is relevant, we might instead allow query functions $h
: X \to [0,1]$ to an oracle that responds with an approximation of the inner
product with the concept, i.e., with a value $v$ approximating $\Exp_{x \in
D}(h(x)f(x))$ where $f$ is the unknown concept. Such a query is called an \emph{inner product
query}, and these queries
suffice, for example, to train a neural network using gradient descent against mean squared
error.  Lower bounds against such queries are given in
Section~\ref{sec:ip-lb}.

\subsection{Inner product queries}\label{sec:ip-lb}


We recall the definition of statistical dimension, denoting by $\rho_{D}(\mcC)$
the average correlation among the functions of $\mcC$, i.e.,
\[
    \rho_D(\mcC) \frac{1}{|\mcC|^2}\sum_{f,g \in \mcC} \rho_D(f,g)
\]
where $\rho_D(f,g) = \Cov_D(f,g)/\sqrt{\Var(f)\Var(g)}$.

\begin{definition}\label{def:sda}
    Let $\bar{\gamma} > 0$, let $D$ be a probability distribution over some
    domain $X$, and let $\mcC$ be a family of functions $f:X \to \bbR$.
    The \textbf{statistical dimension} of $\mcC$ relative to $D$ with average
    correlation $\bar{\gamma}$, denoted by $\SDA(\mcC, D, \bar{\gamma})$, is
    defined to be the largest integer $d$ such that for every subset $\mcC'
    \subseteq \mcC$ of size at least $|\mcC'| \ge |\mcC|/d$, we have
    $\rho_D(\mcC') \le \bar{\gamma}$.
\end{definition}

The following theorem can be proved in a manner almost identical to the proof
of~\cite[Theorem 2.7]{FGRVX13}.

\begin{theorem}\label{thm:sda}
    Let $D$ be a distribution on a domain $X$ and let $\mcC$ be a family of
    functions $f: X \to \bbR$.
    Suppose there for some $d, \bar{\gamma} > 0$ we have
    $\SDA(\mcC, D, \bar{\gamma}) \ge d$.
    Let $A$ be a randomized algorithm learning $\mcC$ over $D$ with probability
    greater than $1/2$ to within regression error less than $\Omega(1)$. If $A$ only
    uses inner product queries to $\VSTAT(1/(3\bar{\gamma}))$, then $A$
    uses $\Omega(d)$ queries.
\end{theorem}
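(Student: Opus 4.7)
I would adapt the adversary-based SQ lower bound argument from the proof of Theorem~2.7 of~\cite{FGRVX13} to the regression setting with inner product queries. First, I reduce ``learning $\mcC$ to regression error less than $\Omega(1)$'' to ``identifying the target concept in $\mcC$'': the $\SDA$ hypothesis with small $\bar\gamma$ forces the concepts in $\mcC$ to be nearly pairwise orthogonal after centering and normalizing, so (up to a constant loss) a hypothesis within small $L^2$ distance uniquely determines the target. It then suffices to show that identifying a uniformly random $f \in \mcC$ with probability above $1/2$ requires $\Omega(d)$ inner product queries to $\VSTAT(1/(3\bar\gamma))$.

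For the identification lower bound, I would have the adversary maintain a surviving set $\mcC_i \subseteq \mcC$ of concepts still consistent with all responses so far, starting with $\mcC_0 = \mcC$. Given a query $h_i : X \to [0,1]$, the adversary responds with the ``centered'' value $v_i = \Exp_{f \in \mcC_i}\Exp_D(f h_i)$ and updates $\mcC_{i+1} \subseteq \mcC_i$ to be the set of $f \in \mcC_i$ for which $v_i$ is a valid $\VSTAT(1/(3\bar\gamma))$ response concerning $f$. If I can establish a per-query shrinkage bound $|\mcC_i \setminus \mcC_{i+1}| \le |\mcC|/(2d)$, then after any $q < d/2$ queries more than half of $\mcC$ remains consistent, and Yao's principle converts this into the $\Omega(d)$ lower bound against randomized algorithms.

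The per-query bound is the core of the argument. Writing $\bar f_i$ for the mean of $\mcC_i$, one has $\Exp_D(f h_i) - v_i = \langle f - \bar f_i,\, h_i\rangle_D$, and a second-moment identity yields
\[
    \sum_{f \in \mcC_i}(\Exp_D(f h_i) - v_i)^2 = \langle h_i,\, M_i\, h_i\rangle_D \le \|h_i\|_2^2 \cdot \lambda_{\max}(G_i),
\]
where $M_i = \sum_{f \in \mcC_i}(f - \bar f_i)\otimes (f - \bar f_i)$ and $G_i$ is the Gram matrix of $\{f - \bar f_i\}_{f \in \mcC_i}$. Once $|\mcC_i| \ge |\mcC|/d$, the $\SDA$ hypothesis gives $\rho_D(\mcC_i) \le \bar\gamma$, which combined with $\mathbf{1}^\top G_i \mathbf{1} \le |\mcC_i|^2 \bar\gamma \cdot \max_f \Var_D(f)$ and the trace bound $\operatorname{tr}(G_i) \le |\mcC_i|\max_f\Var_D(f)$ controls $\lambda_{\max}(G_i)$. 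A Markov-style bound on the number of $f$ with $(\Exp_D(fh_i) - v_i)^2$ exceeding the squared $\VSTAT$ tolerance then gives the desired shrinkage.

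The main obstacle is the two-branch form of the $\VSTAT(1/(3\bar\gamma))$ tolerance $\max\{3\bar\gamma,\ \sqrt{3\bar\gamma\, p_f(1 - p_f)}\}$ with $p_f = \Exp_D(f h_i)$. The constant branch is handled directly by Markov on $(\Exp_D(fh_i) - v_i)^2$, but the variance branch depends on $p_f$ and therefore on the individual concept, so the Markov step must be run after partitioning $\mcC_i$ according to the magnitude of $p_f$ (or equivalently $\Var_D(f)$) and applying the second-moment identity within each partition with appropriate weights. The factor of $3$ in $\VSTAT(1/(3\bar\gamma))$ is precisely what is needed to absorb the constants arising from this partitioning and from Markov's inequality; threading these constants through to match those in~\cite[Theorem 2.7]{FGRVX13} is the most tedious step but is essentially bookkeeping once the operator-norm bound above is in hand.
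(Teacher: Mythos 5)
Your overall strategy is the intended one: the paper itself simply cites the proof of Theorem~2.7 of~\cite{FGRVX13} and notes this theorem ``can be proved in a manner almost identical.'' The adversarial set-shrinkage argument you sketch is that proof. However, the key quantitative step---bounding the per-query shrinkage---has a genuine gap as you have written it. You bound $\sum_{f\in\mcC_i}(\Exp_D(fh_i)-v_i)^2 \le \|h_i\|_2^2\cdot\lambda_{\max}(G_i)$ and then assert that $\mathbf{1}^\top G_i\mathbf{1}\le|\mcC_i|^2\bar\gamma\max_f\Var_D(f)$ together with $\operatorname{tr}(G_i)\le|\mcC_i|\max_f\Var_D(f)$ ``controls'' $\lambda_{\max}(G_i)$. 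It does not: the quantity $\mathbf{1}^\top G\mathbf{1}$ is the value of the quadratic form at a single vector, and the trace is the \emph{sum} of all eigenvalues, so neither (nor their combination) pins down the top eigenvalue. For instance, if a $\Theta(\sqrt{|\mcC_i|})$-sized subfamily of $\mcC_i$ is nearly collinear while the rest is orthogonal, one still has $\mathbf{1}^\top G_i\mathbf{1}=O(|\mcC_i|)$ and $\operatorname{tr}(G_i)=O(|\mcC_i|)$, yet $\lambda_{\max}(G_i)=\Theta(\sqrt{|\mcC_i|}) \gg |\mcC_i|\bar\gamma$ when $\bar\gamma=\Theta(1/|\mcC_i|)$. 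The $\SDA$ hypothesis bounds only \emph{average} correlations over large subsets, not pairwise correlations, and average-correlation bounds are structurally too weak to yield spectral-norm bounds on the Gram matrix.

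The actual argument in~\cite{FGRVX13} avoids the second-moment/spectral route entirely and is first-moment in nature: split the candidates into $\mcC_i^+$ and $\mcC_i^-$ depending on the \emph{sign} of $\Exp_D(fh_i)-v_i$, suppose for contradiction that one of them, say $\mcC_i^+$, has size $\ge|\mcC|/d$, and consider the averaged (normalized, centered) concept $\hat u = |\mcC_i^+|^{-1}\sum_{f\in\mcC_i^+}\hat f$ with $\hat f=(f-\Exp_D f)/\sigma_f$. One has $\|\hat u\|_2^2 = \rho_D(\mcC_i^+)\le\bar\gamma$ directly from the $\SDA$ hypothesis, while on the other hand every term in the average $\Exp_D(h_i\hat u)$ has the same sign and magnitude exceeding the $\VSTAT$ tolerance $\tau$, so by Cauchy--Schwarz $\tau < \Exp_D(h_i\hat u) \le \|h_i\|_2\sqrt{\bar\gamma}$, a contradiction once $\tau\gtrsim\sqrt{\bar\gamma}$ (which is what $\VSTAT(1/(3\bar\gamma))$ guarantees). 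Because terms are summed with matching signs before squaring, the bound $\rho_D(\mcC_i^+)\le\bar\gamma$ is exactly what is needed, with no spectral-norm control required. You should replace the Markov-on-sum-of-squares step and the Gram-matrix accounting with this sign-split first-moment contradiction. Separately, your remark that the $\SDA$ hypothesis ``forces the concepts in $\mcC$ to be nearly pairwise orthogonal'' is also not implied by the definition of $\SDA$, for the same reason (average vs.\ pairwise); the regression-to-identification reduction should instead be justified by noting that, in the concrete families used in Theorem~\ref{thm:sq-all}, the concepts are constructed to be pairwise near-orthogonal, so any hypothesis of small regression error correlates well with at most one of them.
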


In what follows, $f^{(k)}_u : S^n \to \bbR$ is defined by $f^{(k)}_u(x) = \sqrt{N(n,k)}P_{n,k}(u
\cdot x)$.

\begin{lemma}\label{lem:correlation-random-legendre}
    Let $u,v \in S^n$ be such that $|u\cdot v| = t$. Then
    \[
        \rho(f_u^{(k)}, f_v^{(k)}) \le \left(\left(1 +
        \frac{k-1}{k+n-3}\right)|t| + \sqrt{\frac{k-1}{k+n-3}}\right)^k
    \]
\end{lemma}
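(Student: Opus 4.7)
My plan is to split the argument into a reduction to a pointwise bound on the Legendre polynomial, followed by an inductive proof of that bound. First, I observe that $f_u^{(k)}$ is (up to the normalizing factor $\sqrt{N(n,k)}$) the zonal spherical harmonic at $u$, hence has mean zero for $k\ge 1$ (the $k=0$ case is degenerate). The reproducing property of zonal harmonics---equivalently, Funk--Hecke applied to $\phi(\cdot)=P_{n,k}(\cdot)$, using the fact that $P_{n,k}(u\cdot x)$ is harmonic of degree $k$ in $x$---yields
\[
\E_{x\sim S^{n-1}}\bigl[P_{n,k}(u\cdot x)\,P_{n,k}(v\cdot x)\bigr] \;=\; \frac{1}{N(n,k)}\,P_{n,k}(u\cdot v).
\]
Setting $u=v$ and using $P_{n,k}(1)=1$ gives $\|f_u^{(k)}\|_2=1$, and then $\Cov(f_u^{(k)}, f_v^{(k)}) = P_{n,k}(u\cdot v)$. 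Since $P_{n,k}$ has definite parity, $|\rho| = |P_{n,k}(u\cdot v)| = |P_{n,k}(t)|$, and the claim reduces to the pointwise bound
\[
|P_{n,k}(t)| \;\le\; (A_k|t|+B_k)^k,\qquad A_k := 1+\tfrac{k-1}{k+n-3},\;\; B_k := \sqrt{\tfrac{k-1}{k+n-3}}.
\]

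Second, I would prove this pointwise bound by induction on $k$, using the three-term recurrence for the dimension-normalized Gegenbauer/Legendre polynomials,
\[
(k+n-2)\,P_{n,k+1}(t) \;=\; (2k+n-2)\,t\,P_{n,k}(t) \;-\; k\,P_{n,k-1}(t),
\]
together with $P_{n,0}=1$ and $P_{n,1}(t)=t$, which match the bound at $k=0,1$. Taking absolute values and applying the inductive hypothesis, it suffices to show
\[
\tfrac{2k+n-2}{k+n-2}|t|(A_k|t|+B_k)^k + \tfrac{k}{k+n-2}(A_{k-1}|t|+B_{k-1})^{k-1} \;\le\; (A_{k+1}|t|+B_{k+1})^{k+1}.
\]
Because both $A_m$ and $B_m$ are monotone in $m$, one can upper-bound the left-hand side using $A_{k-1},B_{k-1}\le A_{k+1},B_{k+1}$, factor out $(A_{k+1}|t|+B_{k+1})^{k-1}$ from both sides, and reduce to a quadratic-in-$|t|$ algebraic inequality.

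The main obstacle will be closing exactly that algebraic inequality: the leading-$|t|$ coefficient on the left comes from $\tfrac{2k+n-2}{k+n-2}A_k$, which must be dominated by $A_{k+1}^2$, and the contribution from the $P_{n,k-1}$ summand must be absorbed by the $B_{k+1}^2$-interest on the right-hand side. This is where the particular form $A_m = 1+B_m^2$ and the specific choice $B_m^2 = (m-1)/(m+n-3)$ have to line up, and I expect the verification to amount to showing $B_{k+1}^2 - B_k^2 \ge \tfrac{k}{k+n-2}\cdot\text{(ratio of hypothesis bounds)}$ plus a small cross-term bound handled by AM--GM. If the direct manipulation becomes unwieldy, a fallback is the Laplace integral representation $C_k^{(\lambda)}(\cos\theta) = c\int_0^\pi (\cos\theta + i\sin\theta\cos\phi)^k(\sin\phi)^{2\lambda-1}\,d\phi$, which would reduce the task to a pointwise bound on $|\cos\theta + i\sin\theta\cos\phi|^k$ and a weighted average argument that naturally produces a $k$-th power on the right-hand side.
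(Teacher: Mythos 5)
Your proposal takes essentially the same approach as the paper: compute $\rho(f_u^{(k)},f_v^{(k)})=P_{n,k}(t)$ via Funk--Hecke, then bound $|P_{n,k}(t)|$ by induction on $k$ using the three-term recurrence and the base cases $P_{n,0}=1$, $P_{n,1}(t)=t$ (the paper simply states ``the result follows''). The algebraic step you flag as the main obstacle in fact closes immediately: $\tfrac{2k+n-2}{k+n-2}=A_{k+1}$ and $\tfrac{k}{k+n-2}=B_{k+1}^2$, so after using monotonicity of $A_m,B_m$ and factoring out $(A_{k+1}|t|+B_{k+1})^{k-1}$ the inequality reduces to $A_{k+1}^2t^2+A_{k+1}B_{k+1}|t|+B_{k+1}^2\le(A_{k+1}|t|+B_{k+1})^2$, which is trivial.
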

\begin{proof}
    By the Funk--Hecke theorem, $\|f^{(k)}_u\|_2 = 1$ 
    We therefore have, again by the Funk--Hecke theorem,
    \begin{align*}
        \rho_{x \in S^n}(f_u^{(k)}(x), f_v^{(k)}(x)) &=
        N(n,k)\left(\Exp_{x \in S^n}(P_{n,k}(u\cdot x)P_{n,k}(v \cdot x))
        - \Exp_{x \in S^n}(P_{n,k}(u\cdot x))\Exp_{x \in S^n}(P_{n,k}(v \cdot
        x))\right) \\
        = P_{n,k}(u \cdot v) = P_{n,k}(t)\,.
    \end{align*}

    Furthermore, for all $\ell$, the Legendre polynomials satisfy the
    recurrence relation (see \cite[Proposition 3.3.11]{groemer-sh})
    \[
        (\ell + n -2)P_{n,\ell+1}(t) - (2\ell + n -2)tP_{n,\ell}(t) +
        \ell P_{n,\ell-1}(t) = 0\,.
    \]
    Since $P_{n,0}(t) = 1$ and $P_{n,1}(t) = t$ (by \cite[Proposition
    3.3.7]{groemer-sh}), the result follows.
\end{proof}

We can now prove the SQ lower bound for this class of queries.

\begin{proof}[Proof of Theorem~\ref{thm:sq-all}~(1).]
    Taking a random (uniform) set $B$ of $d$ vectors $u \in S^n$, let
    $\mcC = \{f_u^{(k)} : u \in B\}$. For any pair of distinct vectors $u,v \in B$, we have with
    probability $1/d^2$ that $t_{uv} = |u\cdot v| = O(\sqrt{(\log d)/n})$. Thus, with positive
    probability, we have $t_{uv} = O(\sqrt{(\log d)/n})$ for all distinct $u,v \in B$. Then
    by Lemma~\ref{lem:correlation-random-legendre}, we have 
    \[
        \rho(f_u^{(k)}, f_v^{(k)}) = O\left(\sqrt{\frac{\log d}{n}}\right)^k \le n^{-\Omega(k)}
    \]
    for all $u, v \in B$. The theorem now follows from Theorem~\ref{thm:sda}.
\end{proof}

\subsection{Lipschitz queries}\label{sec:lipschitz-lb}

We now recall the Lipschitz query model introduced in~\cite{SVWX17}.  The
functions learned in that paper were already bounded, so no $L^{\infty}$
normalization is performed. We state an $L^{\infty}$-normalized version of the
relationship between statistical dimension and statistical query complexity,
which are an immediate consequence of those proved in~\cite{SVWX17}.

For $y \in \bbR$ and $\eps > 0$, we define the \emph{$\eps$-soft indicator function}
$\chi_y^{(\eps)}: \bbR \to \bbR$ as
\[
\chi_y^{(\eps)}(x) = \chi_y(x) = \max\{0,1/\eps - (1/\eps)^2|x-y|\}. 
\]
So $\chi_y$ is $(1/\eps)^2$-Lipschitz, is
supported on $(y-\eps, y+\eps)$, and has norm $\|\chi_y\|_1 = 1$.

\begin{definition}\label{def:eps-sda}
    Let $\bar{\gamma} > 0$, let $D$ be a probability distribution over some
    domain $X$, and let $\mcC$ be a family of functions $f:X \to \bbR$ that
    are identically distributed as random variables over $D$.
    The \textbf{statistical dimension} of $\mcC$ relative to $D$ with average
    covariance $\bar{\gamma}$ and precision $\eps$, denoted by $\eSDA(\mcC, D,
    \bar{\gamma})$, is defined to be the largest integer $d$ such that
    the following holds:
    for every $y \in \bbR$ and every subset $\mcC' \subseteq \mcC$ of size
    $|\mcC'| > |\mcC|/d$, we have $\rho_D(\mcC') \le \bar{\gamma}$.
    Moreover, $\Cov_D(\mcC_y') \le (\max\{\eps, \mu(y)\})^2\bar{\gamma}$ where $\mcC_y' = \{\chi_y^{(\eps)} \circ f : f \in \mcC\}$ and
    $\mu(y) = \|f\|_{\infty}\Exp_D(\chi_y^{(\eps)} \circ f)$ for some $f \in \mcC$. 
\end{definition}

\begin{theorem}\label{thm:esda}
    Let $D$ be a distribution on a domain $X$ and let $\mcC$ be a family of
    functions $f: X \to \bbR$ identically distributed as unit-variance random variables
    over $D$. Suppose there is $d \in \bbR$ and $\lambda, \bar{\gamma} > 0$
    such that $\lambda \ge 1 \ge \bar{\gamma}$
    and $\eSDA(\mcC, D, \bar{\gamma}) \ge d$, where $\eps \le
    \bar{\gamma}/(2\lambda)$.
    Let $A$ be a randomized algorithm learning $\mcC$ over $D$ with probability
    greater than $1/2$ to regression error less than $\Omega(1) -
    2\sqrt{\bar{\gamma}}$. If $A$ only uses $L^{\infty}$-normalized queries to
    $\VSTAT(t)$ for some $t = O(1/\bar{\gamma})$, which are $\lambda$-Lipschitz
    at any fixed $x \in X$, then $A$ uses $\Omega(d)$ queries.
\end{theorem}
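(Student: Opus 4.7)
The plan is to adapt the framework of~\cite{SVWX17} to the $L^\infty$-normalized setting, following the same three-stage template as the proof of~\cite[Theorem 2.7]{FGRVX13}: (a) reduce arbitrary $L^\infty$-normalized $\lambda$-Lipschitz queries to the soft-indicator queries that appear in Definition~\ref{def:eps-sda}; (b) use the covariance bound from the $\eSDA$ condition to show that on any large surviving subfamily $\mcC' \subseteq \mcC$, only a small fraction of concepts yield expectations deviating from the subfamily average by more than the allowed $\VSTAT$ tolerance; (c) run the standard adversarial oracle argument, whereby at each round the oracle answers with the subfamily average and discards the $\bar\gamma$-fraction of concepts that are inconsistent, concluding that fewer than $\Omega(d)$ queries cannot isolate any concept.

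For step (a), I would decompose a $\lambda$-Lipschitz query $h: X \times [-1,1] \to [0,1]$ against the rescaled concept $f(x)/\|f\|_\infty$ by a partition-of-unity argument using the soft indicators. Specifically, since the family $\{\chi^{(\eps)}_{y_j}\}_j$ for a grid $\{y_j\}$ of mesh $\eps$ is an $L^1$-normalized approximate identity, one can write
\[
h(x, f(x)/\|f\|_\infty) = \sum_j h(x, y_j)\,\chi^{(\eps)}_{y_j}(f(x)/\|f\|_\infty) + E(x),
\]
where the pointwise error $|E(x)|$ is $O(\lambda \eps)$ by the $\lambda$-Lipschitz hypothesis on $h(x, \cdot)$ and the support width of $\chi^{(\eps)}_{y_j}$. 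Taking expectations and using $\eps \le \bar\gamma/(2\lambda)$, the residual is absorbed into the $\VSTAT(t)$ tolerance $1/t = \Theta(\bar\gamma)$. Hence it suffices to control, uniformly over the grid of $y$'s, the spread in $\Exp_D(c_j(x)\,\chi^{(\eps)}_{y_j}(f(x)/\|f\|_\infty))$ across $f \in \mcC$, with $|c_j(x)| \le 1$.

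For step (b), for each fixed $y$ I would apply the covariance bound $\Cov_D(\mcC'_y) \le (\max\{\eps,\mu(y)\})^2\bar\gamma$ together with the definition of $\VSTAT$: the oracle tolerance at probability $p = \mu(y)/\|f\|_\infty$ is of order $\sqrt{p/t}$, which matches (up to the $L^\infty$-normalization factor absorbed by the definition of $\mu(y)$) the standard-deviation-type bound extracted from the covariance. Averaging a generic $|c_j(x)| \le 1$ coefficient against $\chi^{(\eps)}_{y_j}\circ f$ can only shrink correlations, so Chebyshev's inequality applied to the subfamily $\mcC'$ shows that all but at most $|\mcC|/d$ concepts give an expectation within the $\VSTAT(t)$ tolerance of the subfamily average.

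For step (c), I would invoke the inductive adversary: start with $\mcC_0 = \mcC$, and at round $i$ respond to the algorithm's $i$-th query with the average expectation over $\mcC_{i-1}$; remove from $\mcC_{i-1}$ those concepts that are inconsistent with this response to obtain $\mcC_i$ with $|\mcC_i| \ge |\mcC_{i-1}| - |\mcC|/d$. After fewer than $d/2$ queries, $|\mcC_i| \ge |\mcC|/2$, so the algorithm's transcript is compatible with at least half of $\mcC$, and one then shows (as in~\cite{FGRVX13,SVWX17}) that this precludes reaching regression error below $\Omega(1) - 2\sqrt{\bar\gamma}$ with probability exceeding $1/2$, since the best single hypothesis output has correlation at most $\sqrt{\bar\gamma}$ with all but a negligible fraction of the remaining targets.

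The main obstacle is bookkeeping the $L^\infty$-normalization in the Lipschitz decomposition and matching it against the split $\max\{\eps,\mu(y)\}$ in Definition~\ref{def:eps-sda}: when $\mu(y)$ is small, the $\VSTAT$ tolerance scales as $\sqrt{\mu(y)/t}$ rather than $1/t$, and one must ensure the corresponding variance bound $\mu(y)^2\bar\gamma$ from the covariance condition yields a Chebyshev deviation bound of the correct form $O(\sqrt{\mu(y)\bar\gamma})$ to fit within this tolerance. The role of the hypothesis $\eps \le \bar\gamma/(2\lambda)$ is precisely to make the transition point between the two regimes of $\max\{\eps,\mu(y)\}$ harmless after summing the $O(1/\eps)$ grid terms.
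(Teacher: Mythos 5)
The paper itself does not prove Theorem~\ref{thm:esda}; it declares it ``an immediate consequence'' of the Lipschitz-query lower bounds in~\cite{SVWX17}, which in turn follow the template of~\cite[Theorem~2.7]{FGRVX13}. Your sketch reconstructs exactly that route --- soft-indicator decomposition of a Lipschitz query, covariance-to-deviation via the $\eSDA$ condition, and the adversarial surviving-subfamily argument --- so you are on the same path the paper relies on. Two small points worth tightening: the tent functions $\chi^{(\eps)}_{y_j}$ on an $\eps$-mesh sum to $1/\eps$ (each is individually $L^1$-normalized, but the family is not a partition of unity), so the decomposition should read $h(x,z)\approx \eps\sum_j h(x,y_j)\chi^{(\eps)}_{y_j}(z)$; and the assertion that averaging a coefficient $|c_j(x)|\le 1$ against $\chi^{(\eps)}_{y_j}\circ f$ ``can only shrink correlations'' is not literally true as a statement about pairwise correlations --- the correct step (as in~\cite{SVWX17}) is a Cauchy--Schwarz/Chebyshev bound on the variance of $\Exp_D[c_j\cdot(\chi^{(\eps)}_{y_j}\circ f)]$ across $f\in\mcC'$, using both the identical-distribution hypothesis (so the means coincide) and the $\Cov_D(\mcC'_y)$ bound, and tracking the $\|f\|_\infty$ rescaling between $\chi^{(\eps)}_y\circ f$ in Definition~\ref{def:eps-sda} and $\chi^{(\eps)}_y\circ(f/\|f\|_\infty)$ in the queries.
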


We can now prove the lower bound for this query model. We use the same family
of functions as for the inner product query model, but we must now also
estimate the covariances of the soft indicators of these functions, as in the
following lemma.  We recall our notation $f^{(k)}_u(x) = \sqrt{N(n,k)}P_{n,k}(u \cdot x)$.

\begin{lemma}\label{lem:covariance}
    Let $u, v \in S^{n}$, let $y \in \bbR$, 
    and let $\ell, \eps > 0$. Let
    \[
        \mu_0 = \Exp_{x \sim S^n}(\chi_y^{(\eps)}(f^{(k)}_u(x)))\,.
    \]
    Then
\[
    \left|\Cov_{x \sim
    S^n}(\chi_y^{(\eps)}(f^{(k)}_u(x)), \chi_y^{(\eps)}(f^{(k)}_v(x)))\right| =
    O(\ell(u\cdot v)^2\log n\mu_0^2 + n^{-\ell}/\eps^2)
\]
\end{lemma}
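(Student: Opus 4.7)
The plan is to split the covariance into a contribution from a "good event" $G$, on which $f^{(k)}_u(x)$ and $f^{(k)}_v(x)$ can be nearly decoupled by concentration of measure, and a "bad event" $G^c$, on which we apply only a trivial $L^\infty$ bound. Write $t = u\cdot v$ and decompose $v = tu + \sqrt{1-t^2}\,w$ with $w \perp u$, $\|w\| = 1$. Setting $a = u\cdot x$ and $b = w\cdot x$, we have $u\cdot x = a$ and $v\cdot x = ta + \sqrt{1-t^2}\,b$. The joint distribution of $(a,b)$ under $x \sim S^n$ is supported on the disk $\{a^2+b^2 \le 1\}$ with density proportional to $(1-a^2-b^2)^{(n-4)/2}$, so both marginals concentrate at scale $1/\sqrt n$.

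First I would choose the good event $G = \{|a|, |b| \le \sigma\}$ with $\sigma = C\sqrt{\ell(\log n)/n}$, where $C$ is picked so that spherical concentration (Levy's lemma, or direct integration of the density) gives $\Pr(G^c) \le n^{-\ell}$. Because $\|\chi_y^{(\eps)}\|_\infty \le 1/\eps$, the contribution of $G^c$ to $\left|\Exp[\chi_y(f^{(k)}_u)\chi_y(f^{(k)}_v)]\right|$ (and hence to the covariance after subtracting $\mu_0^2$) is at most $n^{-\ell}/\eps^2$, producing the second term in the stated bound.

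Next, on $G$, I would compare $f^{(k)}_v(x) = \sqrt{N(n,k)}\,P_{n,k}(ta+\sqrt{1-t^2}\,b)$ to the "decoupled" surrogate $\tilde f_v(x) = \sqrt{N(n,k)}\,P_{n,k}(b)$, which depends only on $b$. Two pieces need to be controlled: (i) the Lipschitz error $\chi_y(f^{(k)}_v) - \chi_y(\tilde f_v)$ coming from the pointwise difference of arguments $ta + (\sqrt{1-t^2}-1)b$; and (ii) the fact that the conditional density of $b$ given $a$, proportional to $(1 - b^2/(1-a^2))^{(n-4)/2}$, differs from the unconditional density by a factor $1 + O(a^2)$ on $G$. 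For (i), use that $\chi_y^{(\eps)}$ is $1/\eps^2$-Lipschitz, together with a derivative bound on $P_{n,k}$ near the origin; and exploit the reflection symmetry $a\mapsto -a$ of the spherical measure, which kills the linear-in-$a$ contribution $ta\,P'_{n,k}(b)$ in expectation and leaves a quadratic-in-$t$ remainder. For (ii), the $O(a^2)$ density correction integrated against $\mu_0^2$ contributes $O(\sigma^2\mu_0^2) = O(\ell(\log n)\mu_0^2/n)$, which, paired with the $t^2$ from the expansion in (i), yields the advertised $O(\ell t^2 (\log n)\mu_0^2)$ main term after absorbing the $n$ factor from $\sqrt{N(n,k)}\,P'_{n,k}$ near the origin.

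The main obstacle is obtaining the quadratic factor $(u\cdot v)^2$ rather than $|u\cdot v|$. This requires a careful second-order accounting: the odd symmetry in $a$ must be used to cancel the first-order terms after integrating against both $\chi_y(f^{(k)}_u)$ (which is invariant under $x\mapsto$ its reflection across $u^\perp$ only up to the density) and the joint density on $G$. All remaining terms are then at worst quadratic in $(ta, (\sqrt{1-t^2}-1)b) = O(t\sigma, t^2\sigma)$, and when multiplied by the appropriate derivative bound on $\sqrt{N(n,k)}P_{n,k}$ at arguments of magnitude $\sigma$ and integrated against $\chi_y(f^{(k)}_u)$ (contributing $\mu_0$), they produce $O(\ell t^2 (\log n)\mu_0^2)$. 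Summing the on-$G$ and off-$G$ contributions gives the lemma.
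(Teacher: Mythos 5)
Your overall framing is the same as the paper's: split into a ``good'' set where $|u\cdot x|$ and $|v\cdot x|$ are both $O(\sqrt{\ell\log n/n})$ (handled by concentration), bound the ``bad'' set contribution by $\|\chi_y^{(\eps)}\|_\infty^2\Pr(G^c)=O(n^{-\ell}/\eps^2)$, and show the main term on the good set is a $(1+O(\ell(u\cdot v)^2\log n))$-multiplicative correction over $\mu_0^2$. But the argument you give on the good set has a real gap.

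The crux is how the quadratic factor $(u\cdot v)^2$ arises. You propose a first-order Taylor/Lipschitz expansion of $\chi_y^{(\eps)}\circ f_v^{(k)}$ around the surrogate depending only on $b=v'\cdot x$, and then argue the linear-in-$t$ term $t\,a\,P'_{n,k}(b)$ disappears by $a\mapsto -a$ symmetry. That symmetry argument fails for odd $k$: the factor $\chi_y^{(\eps)}(\sqrt{N(n,k)}P_{n,k}(a))$ coming from the query at $u$ is not an even function of $a$ when $P_{n,k}$ is odd (and $y\ne 0$), so $\Exp_a[a\cdot\chi_y^{(\eps)}(f_u^{(k)})]$ has no reason to vanish; generically it is of order $\sigma\mu_0$, and the first-order term survives at order $t$, not $t^2$. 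You acknowledge this issue (``invariant $\ldots$ only up to the density'') but don't resolve it, and I don't see a resolution along this route. In addition, $\chi_y^{(\eps)}$ is only Lipschitz with constant $1/\eps^2$, so a pointwise Lipschitz bound on $\chi_y^{(\eps)}(f_v^{(k)})-\chi_y^{(\eps)}(\tilde f_v)$ yields $1/\eps^2$ factors in the main term; recovering the advertised prefactor $\mu_0^2$ (rather than something $\eps$-dependent) from a Taylor expansion requires a delicate cancellation that is asserted but not carried out.

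The paper avoids all of this by not Taylor-expanding the arguments of $\chi_y^{(\eps)}$ at all. Instead it \emph{exactly} decouples them: restrict to the good set, write the joint density of $(u\cdot x, v'\cdot x)$ explicitly as $(1-x_1^2-x_2^2)^{(n-3)/2}$, then change variables $w=\alpha x_1+\sqrt{1-\alpha^2}\,x_2$ so that the integrand becomes $\zeta(x_1)\zeta(w)$ times a transformed density. The whole cross-correlation is thereby pushed into the ratio of the transformed density to $(1-x_1^2-w^2)^{(n-3)/2}$. That ratio has a linear-in-$\alpha$ cross term $2\alpha x_1 w/(1-\alpha^2)$ which is absorbed by AM--GM into the $\alpha^2$ terms, and on the good set the whole ratio is $1+O(\ell\alpha^2\log n)$. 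This gives the multiplicative $(1+O(\ell(u\cdot v)^2\log n))$ against $\mu_0^2$ without ever invoking Lipschitz continuity of $\chi_y^{(\eps)}$ on the good set, without any parity assumption on $k$, and without $1/\eps^2$ leaking into the main term. To fix your proposal you would need either to find a genuinely valid cancellation for the first-order term (unclear, and at minimum requires handling odd $k$ separately), or to switch to a change-of-variables argument of the paper's type.
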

\begin{proof}
    For $w,x \in S^n$, we write $z_w(x) = \chi_y^{(\eps)}(f^{(k)}_w(x))$.

    We will pass from $x \sim S^n$ to $x$ sampled from the subset $R$ of the
    sphere such that $x \cdot u$ and $x \cdot v$ are each in the range 
    $[-a,a]$ where $a = \sqrt{\frac{2\ell\log n}{n}}$. Let $E = \Exp_{x \sim
    R}(z_u(x)z_v(x))$.  We wish to estimate the quantity
    \begin{align*}
        \Exp_{x \sim S^n}(z_u(x)z_v(x)) &= E +
        \Exp_{x \sim S^n\setminus R}(z_u(x)z_w(x)) \\
        &\le E + (1/\eps)^2\vol(S^n\setminus R)
        \le E + O(n^{-\ell}/\eps^2) \,.
    \end{align*}

    Next, let $\alpha = u\cdot v$ and write $v = \sqrt{1-\alpha^2}v'+
    \alpha u$ where $v' \perp u$ and $u,v'$ can be completed to some basis
    $w_1=u,w_2=v',w_3,\ldots, w_n$ of $\R^n$. For a point $x$, let the coordinates 
    in this basis be $x_1, x_2, \ldots, x_n$. In the set $R$, we have $x_1 \in
    [-a,a]$ and similarly $\alpha x_1 + \sqrt{1-\alpha^2}x_2 \in [-a,a]$.
    
    Let $\zeta(t) =
    \chi_y^{(\eps)}(\sqrt{N(n,k)}P_{n,k}(t))$, so $z_w(x) = \zeta(w \cdot x)$.
    Then
    \begin{align*}
        E &= \Exp_{x \sim R}(z_u(x)z_v(x)) \\
        &= \frac{\vol(S^{n-2})}{\vol(S^n)}\int_{x_1=-a}^a\int_{x_2=(-a-\alpha
        x_1)/\sqrt{1-\alpha^2}}^{(a-\alpha x_1)/\sqrt{1-\alpha^2}}
        \zeta(x_1)\zeta(\sqrt{1-\alpha^2}x_1 + \alpha x_2)
        (1-x_1^2-x_2^2)^{(n-3)/2} \, dx_1dx_2\,.
    \end{align*}
    Now substituting $w = \sqrt{1-\alpha^2}x_2 + \alpha x_1$, we have
    \begin{align*}
        &\int_{x_1=-a}^a\int_{x_2=(-a-\alpha x_1)/\sqrt{1-\alpha^2}}^{(a-\alpha
        x_1)/\sqrt{1-\alpha^2}}
        \zeta(x_1)\zeta(\sqrt{1-\alpha^2}x_1 + \alpha x_2)
        (1-x_1^2-x_2^2)^{(n-3)/2} \, dx_1dx_2\ \\
        &=
        \frac{1}{\sqrt{1-\alpha^2}}\int_{x_1=-a}^a\int_{w=-a}^a \zeta(x_1)\zeta(w)
        \left(1-\frac{1+\alpha^2}{1-\alpha^2}x_1^2-\frac{w^2}{1-\alpha^2}+\frac{2\alpha
        x_1w}{1-\alpha^2}\right)^{(n-3)/2} \, dx_1dw\\
        &\le  \frac{1}{\sqrt{1-\alpha^2}}\int_{x_1=-a}^{a}
        \int_{w=-a}^{a}\zeta(x_1)\zeta(w)\left(1-x_1^2(1-\alpha^2)-w^2(1-2\alpha^2)\right)^{(n-3)/2} \, dx_1dw
\end{align*}
Next we note that in our range of $x_1, w$,
\[
\max \frac{\left(1-x_1^2(1-\alpha^2)-w^2(1-2\alpha^2)\right)^{(n-3)/2}}{\left(1-x_1^2-w^2\right)^{(n-3)/2}} 
 \le \left(1+\frac{\alpha^2 (x_1^2+2w^2)}{1-x_1^2-w^2}\right)^{(n-3)/2}
\le 1+O(\ell\alpha^2\log n)
\]
using the fact that $x_1, w \in [-\sqrt{2\ell\log n/n},\sqrt{2\ell\log n/n}]$.
Therefore,
\begin{align*}
        &\frac{1}{\sqrt{1-\alpha^2}}\int_{x_1=-a}^{a}
        \int_{w=-a}^{a}\zeta(x_1)\zeta(w)\left(1-x_1^2(1-\alpha)-w^2(1-2\alpha)\right)^{(n-3)/2}
        \, dx_1dw \\
    &\le (1+O(\ell\alpha^2\log n)) \int_{x_1=-a}^{a}
        \int_{w=-a}^{a}\zeta(x_1)\zeta(w)\left(1-x_1^2-w^2\right)^{(n-3)/2} \, dx_1dw \\
    &\le (1+O(\ell\alpha^2\log n))\int_{x_1=-a}^a \int_{w=-a}^{a}
    \zeta(x_1)\zeta(w)(1-x_1^2)^{(n-3)/2}(1-w^2)^{(n-3)/2} \, dx_1dw. 
\end{align*}
Hence, 
\[
    \E_{x \sim R}(z_u(x)z_v(x)) \le (1+O(\ell(u\cdot v)^2\log n))\mu^2.
\]
We conclude that
\[
    \left|\Cov_{x \sim
    S^n}(z_u(x), z_v(x))\right| =
    O(\ell(u\cdot v)^2\log n)\mu_0^2 + O(n^{-\ell}/\eps^2)
\]
as desired.
\end{proof}

\begin{proof}[Proof of Theorem~\ref{thm:sq-all}~(2).]
    Taking a random (uniform) set $B$ of $d$ vectors $u \in S^n$, let
    $\mcC = \{f_u^{(k)} : u \in B\}$. As seen in the proof of
    Theorem~\ref{thm:sq-all}~(1), we can take every pair $u,v \in B$ to satisfy
    $u\cdot v = O(\sqrt{(\log d)/ n})$. By
    Lemma~\ref{lem:correlation-random-legendre}, we have
    \[
        \rho(f_u^{(k)}, f_v^{(k)}) = O\left(\sqrt{\frac{\log d}{n}}\right)^k \le
        n^{-\Omega(k)}
    \]
    Fix $y \in \bbR$ and let
    \[
        \mu_0 = \Exp_{x \sim S^n}(\chi_y^{(\eps)}(f^{(k)}_u(x)))\,.
    \]
    and
    \[
        \mu(y) = \|f_u^{(k)}\|_{\infty}\mu_0  = n^{\Theta(k)}\mu_0
    \]
    Take $\bar{\gamma} = n^{-\Theta(k)}$ and 
    $\eps = \bar{\gamma}/(2\lambda)$. By
    Lemma~\ref{lem:covariance},
    using $\ell = \Theta(k\log \lambda)$ in the statement of the lemma, we furthermore have
    \[
    \left|\Cov_{x \sim
    S^n}(z_u(x), z_v(x))\right| =
    O(\ell(u\cdot v)^2\log n)\mu_0^2 + O(n^{-\ell}/\eps^2)
    = n^{-\Omega(k)}(\mu_0^2 + \eps^2)\,.
\]
    We therefore have $\SDA(\mcC, S^n, n^{-\Omega(k)}) = d$. The result now follows by
    Theorem~\ref{thm:esda}.
\end{proof}

\subsection{Queries with Gaussian noise}\label{sec:sq-noise}

We conclude this section with our lower bounds against $\oneSTAT$. These lower bounds rely on the
simulation of $\oneSTAT$ using $\VSTAT$ proved in~\cite[Theorem 3.13]{FGRVX13}.

In the regression context we have so far considered, we have a family $\mcC$ of concepts $g: X \to
\bbR$ for some domain $X$ and distribution $D$ on $X$, and the SQ query functions $h : X \times
\bbR \to \bbR$ take pairs $(x, g(x))$ (or $(x, g(x)/\|g\|_{\infty})$ in the $L^{\infty}$-normalized
case) as input. We now consider SQ algorithms that make queries $h: X \times \bbR \to \bbR$ that
will take pairs $(x, g(x) + \zeta)$ as input, where $\zeta$ is some random noise. Equivalently, we
could consider replacing $\mcC$ with noisy concepts $g$, for which $g(x)$ is not a precise number
but rather a distribution. Formally, we say an SQ algorithm makes \emph{$L^{\infty}$-normalized
queries to $\oneSTAT$ in the presence of Gaussian noise of variance $\eps$} if it makes queries of
the form $h: X \times \bbR \to \{0,1\}$, to which the oracle replies with the value of
$h(x, g(x)/\|g\|_{\infty}+\zeta)$ where $x \sim D$ and $\zeta \sim \mcN(0,\eps)$. 
We similarly define queries to $\VSTAT$ in the presence of Gaussian noise, replacing the codomain
of $h$ with $[0,1]$; in this case, as previously, the oracle must reply with a value $v$ such that
\[
    \left|p - v\right| \le \max \left\{\frac{1}{t},
        \sqrt{\frac{p(1-p)}{t}}\right\}.
\]
where 
\[
    p = \Exp_{x \sim D, \zeta \sim \mcN(0,\eps)}h(x, g(x)/\|g\|_{\infty}+\zeta)\,.
\]

In order to give our lower bounds against $\oneSTAT$ oracles, we first give the following lower
bounds for queries to $\VSTAT$ in the presence of Gaussian noise, which by
Lemma~\ref{lem:noise-lip} are in effect a special case of those proved in
Section~\ref{sec:lipschitz-lb} for Lipschitz queries.

\begin{lemma}\label{lem:vstat-gaussian}
    Let $\eps > 0$. For all $k, \lambda > 0$ and all
    sufficiently large $n$ and $d < \exp(n^{1/2-\eps})$, there exists
    a family $\mcC$ of degree-$k$ polynomials on $S^n$ with $|\mcC| = d$ such that
    if a randomized SQ algorithm learns $\mcC$ to regression error less than any fixed constant with
    probability at least $1/2$,
        it requires at least $\Omega(d)$ queries, if the queries are
    $L^{\infty}$-normalized queries to
    $\VSTAT(n^{\Omega(k)}/\lambda)$ in the presence of Gaussian noise of variance $1/\lambda^2$.
    (All the hidden constants depend on $\eps$ only.)
\end{lemma}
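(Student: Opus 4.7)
The plan is to reduce queries to $\VSTAT$ in the presence of Gaussian noise to ordinary $L^{\infty}$-normalized $\lambda$-Lipschitz queries, and then invoke Theorem~\ref{thm:sq-all}~(2) with the same family $\mcC = \{f_u^{(k)} : u \in B\}$ of Legendre polynomials used there. The referenced Lemma~\ref{lem:noise-lip} should be the statement that performs this reduction, so the proof essentially amounts to checking that this reduction goes through with the correct parameters.

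First, given any query $h: X \times \bbR \to [0,1]$ to a $\VSTAT$ oracle in the presence of Gaussian noise of variance $1/\lambda^2$, define the smoothed query
\[
\tilde{h}(x,y) = \Exp_{\zeta \sim \mcN(0,1/\lambda^2)} h(x, y + \zeta)\,.
\]
Then $\Exp_{x \sim D, \zeta}(h(x, g(x)/\|g\|_\infty + \zeta)) = \Exp_{x \sim D}(\tilde{h}(x, g(x)/\|g\|_\infty))$, so the noisy query on $h$ is exactly the noiseless query on $\tilde{h}$. Moreover, convolution with a Gaussian density of variance $1/\lambda^2$ produces a function of $y$ that is $O(\lambda)$-Lipschitz, uniformly in $x$, since the derivative in $y$ is bounded by $\|h\|_\infty$ times the $L^1$ norm of the derivative of the Gaussian density, which is $O(\lambda)$. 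Thus any algorithm making $q$ queries to $\VSTAT(t)$ in the presence of Gaussian noise of variance $1/\lambda^2$ can be simulated by an algorithm making $q$ $L^{\infty}$-normalized $O(\lambda)$-Lipschitz queries to $\VSTAT(t)$.

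With this reduction in hand, take $\mcC$ to be the family from the proof of Theorem~\ref{thm:sq-all}~(2), and suppose an algorithm $A$ learns $\mcC$ to constant regression error using $q$ queries to $\VSTAT(n^{\Omega(k)}/\lambda)$ in the presence of Gaussian noise of variance $1/\lambda^2$. By the reduction above, $A$ yields an algorithm making $q$ $L^{\infty}$-normalized $O(\lambda)$-Lipschitz queries to $\VSTAT(n^{\Omega(k)}/\lambda)$ learning the same family. Theorem~\ref{thm:sq-all}~(2) then forces $q = \Omega(d)$, which is the desired bound.

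The main place where one has to be careful is the bookkeeping for the Lipschitz constant and the $\VSTAT$ tolerance: the Gaussian noise model has an intrinsic noise scale $1/\lambda$, but the reduction produces a query that is $O(\lambda)$-Lipschitz, so the Lipschitz constant in Theorem~\ref{thm:sq-all}~(2) is $O(\lambda)$, which exactly matches the $\VSTAT(n^{\Omega(k)}/\lambda)$ tolerance required in the present lemma. Apart from this parameter matching, no new analysis is needed beyond what is already developed in Section~\ref{sec:lipschitz-lb}.
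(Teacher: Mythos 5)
Your proposal is correct and follows essentially the same route as the paper: convolve the query with the Gaussian noise density to obtain a noiseless query $\tilde h$ of the same expectation, observe that $\tilde h$ is $O(\lambda)$-Lipschitz in its second argument when the noise variance is $1/\lambda^2$, and then invoke Theorem~\ref{thm:sq-all}~(2) with the same Legendre-polynomial family and matching $\VSTAT$ tolerance. The only cosmetic difference is how the Lipschitz bound is established --- you differentiate the convolution and bound $\|\varphi'_\sigma\|_1$, while the paper bounds $|\tilde h(x,y_1)-\tilde h(x,y_2)|$ by the total variation distance between the shifted Gaussians, which is $|y_1-y_2|/(2\sigma)$; the two arguments are equivalent and give the same $\Theta(\lambda)$ constant (the paper's in-proof statement that $\tilde h$ is ``$1/(2\lambda)$-Lipschitz'' is a slip for $\lambda/2$, and your computation is the correct one, which the final parameter bookkeeping already reflects).
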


\begin{lemma}\label{lem:noise-lip}
    Let $D$ be a probability distribution over some domain $X$,
    let $f: X \to \bbR$, and let $\zeta \sim \mcN(0, \sigma^2)$ be a Gaussian
    random variable. Let $h: X \times \bbR \to [0,1]$ and define
    \[
        \tilde{h}(x, y) = \Exp_{\zeta} h(x, y+\zeta)\,.
    \]
    Then $\tilde{h}(x,y)$ is $1/(2\sigma)$-Lipschitz in $y$.
\end{lemma}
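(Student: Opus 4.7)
The plan is a direct calculation via differentiation under the integral sign, using the bounded range of $h$ to tighten the estimate. First I would rewrite
\[
    \tilde{h}(x,y) = \int_{-\infty}^{\infty} h(x, y+\zeta)\,\frac{1}{\sqrt{2\pi}\,\sigma}\,e^{-\zeta^2/(2\sigma^2)}\,d\zeta
\]
and substitute $t = y + \zeta$ so that the Gaussian density $\varphi_{y,\sigma}(t)$ is translated by $y$ rather than the argument of $h$:
\[
    \tilde{h}(x,y) = \int_{-\infty}^{\infty} h(x,t)\,\frac{1}{\sqrt{2\pi}\,\sigma}\,e^{-(t-y)^2/(2\sigma^2)}\,dt.
\]
All the $y$-dependence now sits in the smooth Gaussian kernel, so dominated convergence lets me differentiate under the integral.

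Next I would compute
\[
    \frac{\partial \tilde{h}}{\partial y}(x,y) = \int_{-\infty}^{\infty} h(x,t)\,\frac{t-y}{\sigma^2}\,\frac{1}{\sqrt{2\pi}\,\sigma}\,e^{-(t-y)^2/(2\sigma^2)}\,dt.
\]
The key trick is that $\int (t-y)\varphi_{y,\sigma}(t)\,dt = 0$, so I can subtract any constant in $t$ from $h(x,t)$ without changing the integral. Subtracting $\tfrac12$ gives an integrand with $|h(x,t) - \tfrac12| \le \tfrac12$ since $h$ takes values in $[0,1]$. Substituting $u = (t-y)/\sigma$ then yields the bound
\[
    \left|\frac{\partial \tilde{h}}{\partial y}(x,y)\right|
    \le \frac{1}{2}\cdot \frac{1}{\sigma}\int_{-\infty}^{\infty} |u|\,\frac{1}{\sqrt{2\pi}}\,e^{-u^2/2}\,du
    = \frac{1}{2\sigma}\sqrt{\frac{2}{\pi}} \le \frac{1}{2\sigma}.
\]

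Finally, since $\tilde{h}(x,\cdot)$ has a bounded derivative in $y$, it is $1/(2\sigma)$-Lipschitz in $y$ for every fixed $x \in X$, as required. The ``main obstacle'' is really just remembering to recenter $h$ by its midpoint before bounding $|h|$ in the derivative integrand; without this step one only gets a constant of $\sqrt{2/\pi}/\sigma$, which exceeds $1/(2\sigma)$. Everything else is routine calculus and standard properties of the Gaussian.
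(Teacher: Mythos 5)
Your proof is correct and yields the claimed Lipschitz constant. It is, however, a genuinely different argument from the paper's. The paper observes that
\[
|\tilde h(x,y_1) - \tilde h(x,y_2)| = \bigl|\Exp_{z_1\sim\mcN(y_1,\sigma^2)}h(x,z_1) - \Exp_{z_2\sim\mcN(y_2,\sigma^2)}h(x,z_2)\bigr| \le D_{\mathrm{TV}}\bigl(\mcN(y_1,\sigma^2),\mcN(y_2,\sigma^2)\bigr)
\]
and then cites the standard bound $D_{\mathrm{TV}}\bigl(\mcN(y_1,\sigma^2),\mcN(y_2,\sigma^2)\bigr) \le |y_1-y_2|/(2\sigma)$, i.e.\ the whole lemma is reduced to a single known fact about Gaussians. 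You instead differentiate the Gaussian convolution under the integral sign (valid by dominated convergence, since $h\in[0,1]$ and the kernel and its $y$-derivative are dominated locally uniformly) and bound the derivative directly. The centering trick --- subtracting $\tfrac12$ from $h$ before bounding, which is legitimate because $\int (t-y)\varphi_{y,\sigma}(t)\,dt = 0$ --- is exactly what recovers the factor of $2$; without it you would lose precisely that factor, as you note. Your computation actually gives the sharper constant $\frac{1}{\sigma}\sqrt{2/\pi}\cdot\frac12 = \frac{1}{\sigma\sqrt{2\pi}}$, which matches the first-order behavior of the TV distance of two Gaussians as the means coalesce, so it is at least as tight as the paper's bound. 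The trade-off is that the paper's argument is shorter and black-boxes the analytic work into a citation, whereas yours is self-contained and fully elementary at the cost of a differentiation-under-the-integral justification and an explicit Gaussian moment calculation.
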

\begin{proof}
    Fix $x \in X$ and let $y_1, y_2 \in \bbR$. We estimate
    \begin{align*}
        |\tilde{h}(x,y_1) - \tilde{h}(x, y_2)| &= \|\Exp_{z_1 \sim \mcN(y_1, \sigma^2)}h(x,
        z_1) - \Exp_{z_2 \sim \mcN(y_2, \sigma^2)}h(x, z_2)\| \\
        &\le D_{\textrm{TV}}(\mcN(y_1, \sigma^2), \mcN(y_2, \sigma^2))
    \end{align*}
    where $D_{\textrm{TV}}$ denotes the total variation distance. This distance is bounded above by
    $|y_1 - y_2|/(2\sigma)$ (see, e.g.,~\cite{devroye2018total}).
\end{proof}

\begin{proof}[Proof of Lemma~\ref{lem:vstat-gaussian}]
    Given an $L^{\infty}$-normalized query $h : x \times \bbR \to [0,1]$ to $\VSTAT$ in the
    presence of Gaussian noise of variance $1/\lambda^2$, let $\tilde{h}$ be as in
    Lemma~\ref{lem:noise-lip}. We have
    \[
    \Exp_{x \sim D, \zeta \sim \mcN(0,\eps)}h(x, g(x)/\|g\|_{\infty}+\zeta)
    = \Exp_{x \sim D}\tilde{h}(x, g(x))\,.
\]
    Therefore, since $\tilde{h}$ is $1/(2\lambda)$-Lipschitz by Lemma~\ref{lem:noise-lip}, the
    query $h$ to $\VSTAT$ in the presence of Gaussian noise can be simulated by a
    $1/(2\lambda)$-Lipschitz query to $\VSTAT$. The result is now immediate from
    Theorem~\ref{thm:sq-all}~(2).
\end{proof}

\begin{proof}[Proof of Theorem~\ref{thm:sq-all}~(3)]
    By \cite[Theorem 3.13]{FGRVX13}, if there is an algorithm solving the problem using $m$ queries
    to $\oneSTAT$, there is an algorithm solving the problem using $m$ queries to $\VSTAT(O(m))$.
    By Lemma~\ref{lem:vstat-gaussian}, at least $\Omega(d)$ queries to
    $\VSTAT(n^{\Omega(k)}/\lambda)$ are required. Hence, at least $n^{\Omega(k)}/\lambda$ queries
    to $\oneSTAT$ are also required in the presence of Gaussian noise of variance $1/\lambda^2$, as
    long as the number $d$ of polynomials in $\mcC$ is at least $n^{\Omega(k)}$.
\end{proof}

\section{Discussion}

We have given a polynomial-time analysis of gradient descent for training a
neural network in an agnostic setting. In particular, we show that functions
that are \emph{approximated} by polynomials can be learned by gradient descent,
as well as functions computed by single-hidden-layer neural networks. These
results build on a long line of work by many authors studying the power of random
initialization combined with output-layer training.

We show that our analysis is essentially tight, in the sense that no
statistical query algorithm can have significantly better time complexity. 

Extending the training to hidden-layer weights cannot offer an asymptotic
improvement in the number of gates needed to achieve small error in the general
setting we consider. However, experiments suggest that training hidden-layer
weights might allow for tighter bounds in the realizable case. In particular,
it would be interesting to give a fully polynomial analysis of gradient descent
for learning data labeled by a single-hidden layer neural network with $m$
neurons. An extension for networks with bounded bias parameters, rather than
unbiased networks, would also be interesting.

\subsection*{Acknowlegments}

The authors are grateful to Adam Kalai and Le Song for helpful discussions. The authors also thank
Jo\"el Bella\"iche and the anonymous referees for careful reading and many suggestions that
improved the presentation.  This work was supported in part by NSF grants CCF-1563838, CCF-1717349
and E2CDA-1640081.

\bibliographystyle{plain}
\bibliography{SGD_bib}

\begin{thebibliography}{10}

\bibitem{apvz14}
Alexandr Andoni, Rina Panigrahy, Gregory Valiant, and Li~Zhang.
\newblock Learning polynomials with neural networks.
\newblock In {\em International Conference on Machine Learning}, pages
  1908--1916, 2014.

\bibitem{arora14}
Sanjeev Arora, Aditya Bhaskara, Rong Ge, and Tengyu Ma.
\newblock Provable bounds for learning some deep representations.
\newblock In Eric~P. Xing and Tony Jebara, editors, {\em Proceedings of the
  31st International Conference on Machine Learning}, volume~32 of {\em
  Proceedings of Machine Learning Research}, pages 584--592, Bejing, China,
  22--24 Jun 2014. PMLR.

\bibitem{bach17}
Francis Bach.
\newblock Breaking the curse of dimensionality with convex neural networks.
\newblock {\em Journal of Machine Learning Research}, 18(19):1--53, 2017.

\bibitem{barron1993universal}
Andrew~R Barron.
\newblock Universal approximation bounds for superpositions of a sigmoidal
  function.
\newblock {\em IEEE Transactions on Information theory}, 39(3):930--945, 1993.

\bibitem{BR92}
Avrim Blum and Ronald~L. Rivest.
\newblock Training a 3-node neural network is {NP}-complete.
\newblock {\em Neural Networks}, 5(1):117--127, 1992.

\bibitem{bg17}
Alon Brutzkus and Amir Globerson.
\newblock Globally optimal gradient descent for a convnet with gaussian inputs.
\newblock {\em CoRR}, abs/1702.07966, 2017.

\bibitem{cybenko1989approximation}
George Cybenko.
\newblock Approximation by superpositions of a sigmoidal function.
\newblock {\em Mathematics of Control, Signals, and Systems (MCSS)},
  2(4):303--314, 1989.

\bibitem{Daniely17}
Amit Daniely.
\newblock {SGD} learns the conjugate kernel class of the network.
\newblock In {\em NIPS}, pages 2419--2427, 2017.

\bibitem{Daniely16}
Amit Daniely and Shai Shalev{-}Shwartz.
\newblock Complexity theoretic limitations on learning {DNF}'s.
\newblock In {\em Conference on Learning Theory}, pages 815--830, 2016.

\bibitem{devroye2018total}
Luc Devroye, Abbas Mehrabian, and Tommy Reddad.
\newblock The total variation distance between high-dimensional gaussians.
\newblock {\em arXiv preprint arXiv:1810.08693}, 2018.

\bibitem{eldan2016power}
Ronen Eldan and Ohad Shamir.
\newblock The power of depth for feedforward neural networks.
\newblock In {\em Conference on Learning Theory}, pages 907--940, 2016.

\bibitem{FGRVX13}
Vitaly Feldman, Elena Grigorescu, Lev Reyzin, Santosh Vempala, and Ying Xiao.
\newblock Statistical algorithms and a lower bound for planted clique.
\newblock In {\em Proceedings of the 45th annual ACM symposium on Symposium on
  theory of computing}, pages 655--664. ACM, 2013.

\bibitem{ge2018learning}
Rong Ge, Rohith Kuditipudi, Zhize Li, and Xiang Wang.
\newblock Learning two-layer neural networks with symmetric inputs.
\newblock {\em arXiv preprint arXiv:1810.06793}, 2018.

\bibitem{GLM17}
Rong Ge, Jason~D. Lee, and Tengyu Ma.
\newblock Learning one-hidden-layer neural networks with landscape design.
\newblock {\em CoRR}, abs/1711.00501, 2017.

\bibitem{GKKT16}
Surbhi Goel, Varun Kanade, Adam~R. Klivans, and Justin Thaler.
\newblock {Reliably Learning the ReLU in Polynomial Time}.
\newblock In {\em Conference on Learning Theory}, pages 1004--1042, 2017.

\bibitem{goel2018learning}
Surbhi Goel, Adam Klivans, and Raghu Meka.
\newblock Learning one convolutional layer with overlapping patches.
\newblock {\em arXiv preprint arXiv:1802.02547}, 2018.

\bibitem{Goel2017}
Surbhi Goel and Adam~R. Klivans.
\newblock Learning depth-three neural networks in polynomial time.
\newblock {\em CoRR}, abs/1709.06010, 2017.

\bibitem{groemer-sh}
Helmut Groemer.
\newblock {\em Geometric applications of Fourier series and spherical
  harmonics}, volume~61.
\newblock Cambridge University Press, 1996.

\bibitem{HardtMR16}
Moritz Hardt, Tengyu Ma, and Benjamin Recht.
\newblock Gradient descent learns linear dynamical systems.
\newblock {\em CoRR}, abs/1609.05191, 2016.

\bibitem{hornik1989multilayer}
Kurt Hornik, Maxwell Stinchcombe, and Halbert White.
\newblock Multilayer feedforward networks are universal approximators.
\newblock {\em Neural networks}, 2(5):359--366, 1989.

\bibitem{Janzamin15}
Majid Janzamin, Hanie Sedghi, and Anima Anandkumar.
\newblock Generalization bounds for neural networks through tensor
  factorization.
\newblock {\em CoRR}, abs/1506.08473, 2015.

\bibitem{Kearns93}
Michael~J. Kearns.
\newblock Efficient noise-tolerant learning from statistical queries.
\newblock In {\em Proceedings of the Twenty-Fifth Annual {ACM} Symposium on
  Theory of Computing, May 16-18, 1993, San Diego, CA, {USA}}, pages 392--401,
  1993.

\bibitem{Klivans16}
Adam~R. Klivans.
\newblock Cryptographic hardness of learning.
\newblock In {\em Encyclopedia of Algorithms}, pages 475--477. 2016.

\bibitem{LiYuan17}
Yuanzhi Li and Yang Yuan.
\newblock Convergence analysis of two-layer neural networks with relu
  activation.
\newblock In {\em Advances in Neural Information Processing Systems 30: Annual
  Conference on Neural Information Processing Systems 2017, 4-9 December 2017,
  Long Beach, CA, {USA}}, pages 597--607, 2017.

\bibitem{lss-train14}
Roi Livni, Shai Shalev{-}Shwartz, and Ohad Shamir.
\newblock On the computational efficiency of training neural networks.
\newblock pages 855--863, 2014.

\bibitem{spectral-bias}
Nasim Rahaman, Devansh Arpit, Aristide Baratin, Felix Draxler, Min Lin, Fred~A
  Hamprecht, Yoshua Bengio, and Aaron Courville.
\newblock On the spectral bias of deep neural networks.
\newblock {\em arXiv preprint arXiv:1806.08734}, 2018.

\bibitem{Rahimi-Recht08}
Ali Rahimi and Benjamin Recht.
\newblock Weighted sums of random kitchen sinks: Replacing minimization with
  randomization in learning.
\newblock In {\em NIPS}, pages 1313--1320, 2009.

\bibitem{Sedghi16}
Hanie Sedghi, Majid Janzamin, and Anima Anandkumar.
\newblock Provable tensor methods for learning mixtures of generalized linear
  models.
\newblock In {\em Proceedings of the 19th International Conference on
  Artificial Intelligence and Statistics, {AISTATS} 2016, Cadiz, Spain, May
  9-11, 2016}, pages 1223--1231, 2016.

\bibitem{Shamir16}
Ohad Shamir.
\newblock Distribution-specific hardness of learning neural networks.
\newblock {\em CoRR}, abs/1609.01037, 2016.

\bibitem{SVWX17}
Le~Song, Santosh Vempala, John Wilmes, and Bo~Xie.
\newblock On the complexity of learning neural networks.
\newblock In {\em NIPS}, pages 5520--5528, 2017.

\bibitem{telgarsky2016benefits}
Matus Telgarsky.
\newblock Benefits of depth in neural networks.
\newblock In {\em Conference on Learning Theory}, pages 1517--1539, 2016.

\bibitem{yang2001learning}
Ke~Yang.
\newblock On learning correlated boolean functions using statistical queries.
\newblock In {\em International Conference on Algorithmic Learning Theory},
  pages 59--76. Springer, 2001.

\bibitem{zhong17}
Kai Zhong, Zhao Song, Prateek Jain, Peter~L. Bartlett, and Inderjit~S. Dhillon.
\newblock Recovery guarantees for one-hidden-layer neural networks.
\newblock In Doina Precup and Yee~Whye Teh, editors, {\em Proceedings of the
  34th International Conference on Machine Learning}, volume~70 of {\em
  Proceedings of Machine Learning Research}, pages 4140--4149, International
  Convention Centre, Sydney, Australia, 06--11 Aug 2017. PMLR.

\end{thebibliography}

\end{document}